\date{}
 \newtheorem{theorem}{Theorem}
 \newtheorem{lemma}{Lemma}
 \newtheorem{corollary}{Corollary}
 \newtheorem*{proposition}{Proposition}
\DeclareMathOperator*{\argmin}{arg\,\,min}
\newcommand{\newparagraph}[1]{{\noindent\textbf{#1}~}} 
\newcommand{\newempar}[1]{{\noindent\emph{#1}~}} 
\title{Approximate Inference in Structured Instances \\ with Noisy Categorical Observations}
\begin{document}
\newtheorem{mydef}{Definition}
\newtheorem{example}{Example}
\maketitle
{\center
{\bf Alireza Heidari} \\
a5heidar@uwaterloo.ca\\
Department of Computer Science \\
University of Waterloo\\
{\bf Ihab F. Ilyas}  \\
ilyas@uwaterloo.ca\\
Department of Computer Science\\
University of Waterloo\\
{\bf Theodoros Rekatsinas}\\
thodrek@cs.wisc.edu \\
Department of Computer Science \\
University of Wisconsin-Madison \\
}

\begin{abstract}
We study the problem of recovering the latent ground truth labeling of a structured instance with categorical random variables in the presence of noisy observations. We present a new approximate algorithm for graphs with categorical variables that achieves low Hamming error in the presence of noisy vertex and edge observations. Our main result shows a logarithmic dependency of the Hamming error to the number of categories of the random variables. Our approach draws connections to correlation clustering with a fixed number of clusters. Our results generalize the works of ~\citet{globerson} and ~\citet{foster}, who study the hardness of structured prediction under binary labels, to the case of categorical labels.
\end{abstract}

\section{INTRODUCTION}
\label{sec:intro}

Statistical inference over structured instances of dependent variables (e.g., labeled sequences, trees, or general graphs) is a fundamental problem in many areas. Examples include computer vision~\citep{nowozin2011structured,dollar2013structured,chen2018deeplab}, natural language processing~\citep{huang2015bidirectional,hu2016harnessing}, and computational biology~\citep{li2007protein}. In many practical setups~\citep{shin2015incremental,holo, puds, heidari2019holodetect}, inference problems involve noisy observations of discrete labels assigned to the nodes and edges of a given structured instance and the goal is to infer a labeling of the vertices that achieves low disagreement rate between the correct ground truth labels $Y$ and the predicted labels $\hat{Y}$, i.e., low {\em Hamming error}. We refer to this problem as {\em statistical recovery}.

Our motivation to study the problem of statistical recovery stems from our recent work on data cleaning~\citep{holo, puds, heidari2019holodetect}. This work introduces HoloClean, a state-of-the-art inference engine for data curation that casts data cleaning as a structured prediction problem~\citep{puds}: Given a dataset as input, it associates each of its cells with a random variable, and uses logical integrity constraints over this dataset (e.g., key constraints or functional dependencies) to introduce dependencies over these random variables. The labels that each random variable can take are determined by the domain of the attribute associated with the corresponding cell. Since we focus on data cleaning, the input dataset corresponds to a noisy version of the latent, clean dataset. Our goal is to recover the latter. Hence, the initial value of each cell corresponds to a noisy observation of our target random variables. HoloClean employs approximate inference methods to solve this structured prediction problem. While its inference procedure comes with no rigorous guarantees, HoloClean achieves state-of-the-art results in practice. Our goal in this paper is to understand this phenomenon.

Recent works have also studied the problem of approximate inference in the presence of noisy vertex and edge observations. However, they are limited to the case of binary labeled variables: Globerson et al. focused on two-dimensional grid graphs and show that a polynomial time algorithm based on MaxCut can achieve optimal Hamming error for planar graphs for which a weak expansion property holds~\citep{globerson}. More recently, Foster et al. introduced an approximate inference algorithm based on tree decompositions that achieves low expected Hamming error for general graphs with bounded tree-width~\citep{foster}. In this paper, we generalize these results to the case of categorical labels.

\newparagraph{Problem and Challenges} We study the problem of statistical recovery over categorical data. We consider structured instances where each variable $u$ takes a ground truth label $Y_u$ in the discrete set $\{1,2, \dots, k\}$. We assume that for all variables $u$, we observe a noisy version $Z_u$ of its ground truth labeling such that $Z_u = Y_u$ with probability $1 - q$. We also assume that for all variable pairs $(u,v)$, we observe noisy measurements $X_{u,v}$ of the indicator $M_{u,v} = 2 \cdot \mathbbm{1}(Y_u = Y_v) - 1$ such that $X_{u,v} = M_{u,v}$ with probability $1-p$. Given these noisy measurements, our goal is to obtain a labeling $\hat{Y}$ of the variables such that the expected Hamming error between $Y$ and $\hat{Y}$ is minimized. We now provide some intuition on the challenges that categorical variables pose and why current approximate inference methods not applicable:  

First, in contrast to the binary case, negative edge measurements do not carry the same amount of information: Consider a simple uniform noise model. In the case of binary labels, observing an edge measurement $X_{u,v} = -1$ and a binary label $Z_u$ allows us to estimate that $\hat Y_v = -Z_u$ is correct with probability $(1 - q)(1-p) + qp$ when $p$ and $q$ are bounded away from 1/2. However, in the categorical setup, $\hat Y_v$ can take any of the $\{1,2, \dots, k\}\setminus \{Z_u\}$ labels, hence the probability of estimate $\hat Y_v$ being correct is up to a factor of $\frac{1}{k}$ smaller than the binary case. Our main insight is that while the binary case leverages edge labels for inference, approximate inference methods for categorical instances need to rely on the noisy node measurements and the positive edge measurements.

Second, existing approximate inference methods for statistical recovery~\citep{globerson, foster} rely on a ``Flipping Argument'' that is limited to binary variables to obtain low Hamming error: for binary node and edge observations, if all nodes in a maximal connected subgraph $S$ are labeled incorrectly with respect to the ground truth, then at least half of the edge observations on the boundary of $S$ are incorrect, or else the inference method would have flipped all node labels in $S$ to obtain a better solution with respect to the total Hamming error. As we discuss later, in the categorical case a naive extension implies that one needs to reason about all possible label permutations over the $k$ labels. 

\newparagraph{Contributions} We present a new approximate inference algorithm for statistical recovery with categorical variables. Our approach is inspired by that of~\cite{foster} but generalizes it to categorical variables. 

First, we show that, when a variable $u$ is assigned one of the $k-1$ erroneous labels with uniform probability $q/(k-1)$, the optimal Hamming error for trees with $n$ nodes is $\tilde{O}(\log(k)\cdot p \cdot n)$, when $q < 1/2$. This is obtained by solving a linear program using dynamic programming. Here, we derive a tight upper bound on the number of erroneous edge measurements, which we use to restrict the space of solutions explored by the linear program.

Second, we extend our method to general graphs using a tree decomposition of the structured input. We show how to combine our tree-based algorithm with correlation clustering over a fixed number of clusters~\citep{giotis2006correlation} to obtain a non-trivial error rate for graphs with bounded treewidth and a specified number of $k$ classes. Our method achieves an expected Hamming error of $\tilde{O}\big( k\cdot\log (k)\cdot p^{\lceil\frac{\Delta(G)}{2}\rceil}\cdot n\big)$ where $\Delta(G)$ is the maximum degree of graph $G$. We show that local pairwise label swaps are enough to obtain a globally consistent labeling with low expected Hamming error.

Finally, we validate our theoretical bounds via experiments on tree graphs and image data. Our empirical study demonstrates that our approximate inference algorithm achieve low Hamming error in practical scenarios.

\section{PRELIMINARIES}
\label{sec:prelims}

%

We introduce the problem of statistical recovery, and describe concepts, definitions, and notation used in the paper. We consider a structured instance represented by a graph $G = (V,E)$ with $|V| = n$ and $|E| = m$. Each vertex $u \in V$ represents a random variable with ground truth label $Y_u$ in the discrete set $L = \{1,2, \dots, k\}$. Edges in $E$ represent dependencies between random variables and each edge $(u,v) \in E$ has a ground truth measurement $M_{u,v} = \varphi(Y_u, Y_v)$ where $\varphi(Y_u,Y_v) = 1$ if $\mathbbm{1}(Y_u = Y_v) = 1$ and $\varphi(Y_u,Y_v) = -1$ otherwise.

\newparagraph{Uniform Noise Model and Hamming Error} We assume access to noisy observations over the nodes and edges of $G$. For each variable $u \in V$, we are given a noisy label observation $Z_u$, and for each edge $(u,v) \in E$ we are given a noisy edge observation $X_{u,v}$. These noisy observations are assumed to be generated from $G$, $Y$ and $M$ by the following process: We are given $G = (V,E)$ and two parameters, edge noise $p$ and node noise $q < 1/2$ with $p < q$. For each edge $(u,v) \in E$, the observation $X_{u,v}$ is independently sampled to be $X_{u,v} = M_{u,v}$ with probability $1-p$ (a {\em good} edge) and $X_{u,v} = -M_{u,v}$ with probability $p$ (a {\em bad} edge). For each node $u \in V$, the node observation $Z_u$ is independently sampled to be $Z_u = Y_u$ with probability $1-q$ (a {\em good} node) and can take any other label in $L \setminus Y_u$ with a uniform probability $\frac{q}{k-1}$. The uniform noise model is a direct extension of that considered by prior work~\citep{globerson, foster}, and a first natural step towards studying statistical recovery for categorical variables.

Given the noisy measurements $X$ and $Z$ over graph $G = (V,E)$, a labeling algorithm is a function $A$: $\{-1,+1\}^E \times \{1,2, \dots, k\}^V \rightarrow \{1,2, \dots, k\}^V$. We follow the setup of~\cite{globerson} to measure the performance of $A$. We consider the expectation of the Hamming error (i.e., the number of mispredicted labels) over the observation distribution induced by $Y$. We consider as error the worst-case (over the draw of $Y$) expected Hamming error, where the expectation is taken over the process generating the observations $X$ from $Y$. Our goal is to find an algorithm $A$ such that with high probability it yields bounded worst-case expected Hamming error. In the remainder of the paper, we will refer to the worst-case expected Hamming error as simply Hamming error.

\newparagraph{Categorical Labels and Edge Measurements} When $q$ is close to $0.5$, one needs to leverage the edge measurements to predict the node labels correctly. For binary labels, the structure of the graph $G$ alone determines if one can obtain algorithms with a small error for low constant edge noise $p$~\citep{globerson, foster}. We argue that this is not the case for categorical labels. Beyond the structure of the graph $G$, the number of labels $k$ also determines when we can obtain labeling algorithms with non-trivial error bounds.

We use the next example to provide some intuition on how $k$ affects the amount of information in the edge measurements of $G$: Let nodes take labels in $L = \{1, 2, \dots, k\}$. We fix a vertex $v$, and for each vertex $u$ in its neighborhood set the estimate label $\hat{Y}_u$ to $Z_{u}$ if $M_{u,v} = 1$ and to one of $L \setminus \{Z_u\}$ uniformly at random if $M_{u,v} = -1$. For a correct negative edge measurement and a correct label assignment to $v$, we are not guaranteed to obtain the correct label for $v$ as we would be able in the binary case. 

Given the above setup, the probability that node $u$ is labeled correctly is $P(\hat{Y}_u = Y_u) = (1 - b(1-\frac{1}{k-1}))\cdot ( (1-p)(1-q) + pq))$ where $b$ is the probability of an edge being negative in the ground truth labeling of $G$. Two observations emerge from this expression: (1) As the number of colors $k$ increases, the probability $P(\hat{Y}_u = Y_u)$ decreases, hence, for a fixed graph $G$ as $k$ increases, statistical recovery becomes harder; (2) For a fixed graph $G$, as $k$ increases the probability $b$ of obtaining a negative edge in the ground truth labeling of $G$ increases--- this holds for a fixed graph $G$ and under the assumption that each label should appear at least once in the ground truth---and the term $(1 - b(1-\frac{1}{k-1}))$ approaches zero. This implies that for $P(\hat{Y}_u = Y_u)$ to be meaningful the term $((1-p)(1-q) + pq)$ should be maximized for fixed $q$, and hence, the edge noise $p$ should approach zero as a function of $(1 - b(1-\frac{1}{k-1}))$. In other words, $p$ should be upper bounded by a function $\phi(k)$ such that as $k$ increases $\phi(k)$ goes to zero. We leverage these two observations to specify when statistical recovery is possible.
 
 \newparagraph{Statistical Recovery} Statistical recovery is possible for the family $\mathcal{G}$ of structured instances with $k$ categories, if there exists a function $f(p,k): [0,1] \rightarrow [0,1]$ with $\lim_{p \rightarrow 0} f(p,k) = 0$ such that for every $p$ that is upper bounded by a function $\phi(k)$ with $\lim_{k \rightarrow V} \phi(k) = 0$, the Hamming error of a labeling algorithm on graph $G \in \mathcal{G}$ with $V = n$ vertices is at most $f(p,k) \cdot n$.
 
 \begin{figure}
\begin{center}
\centerline{\includegraphics[width=0.8\columnwidth]{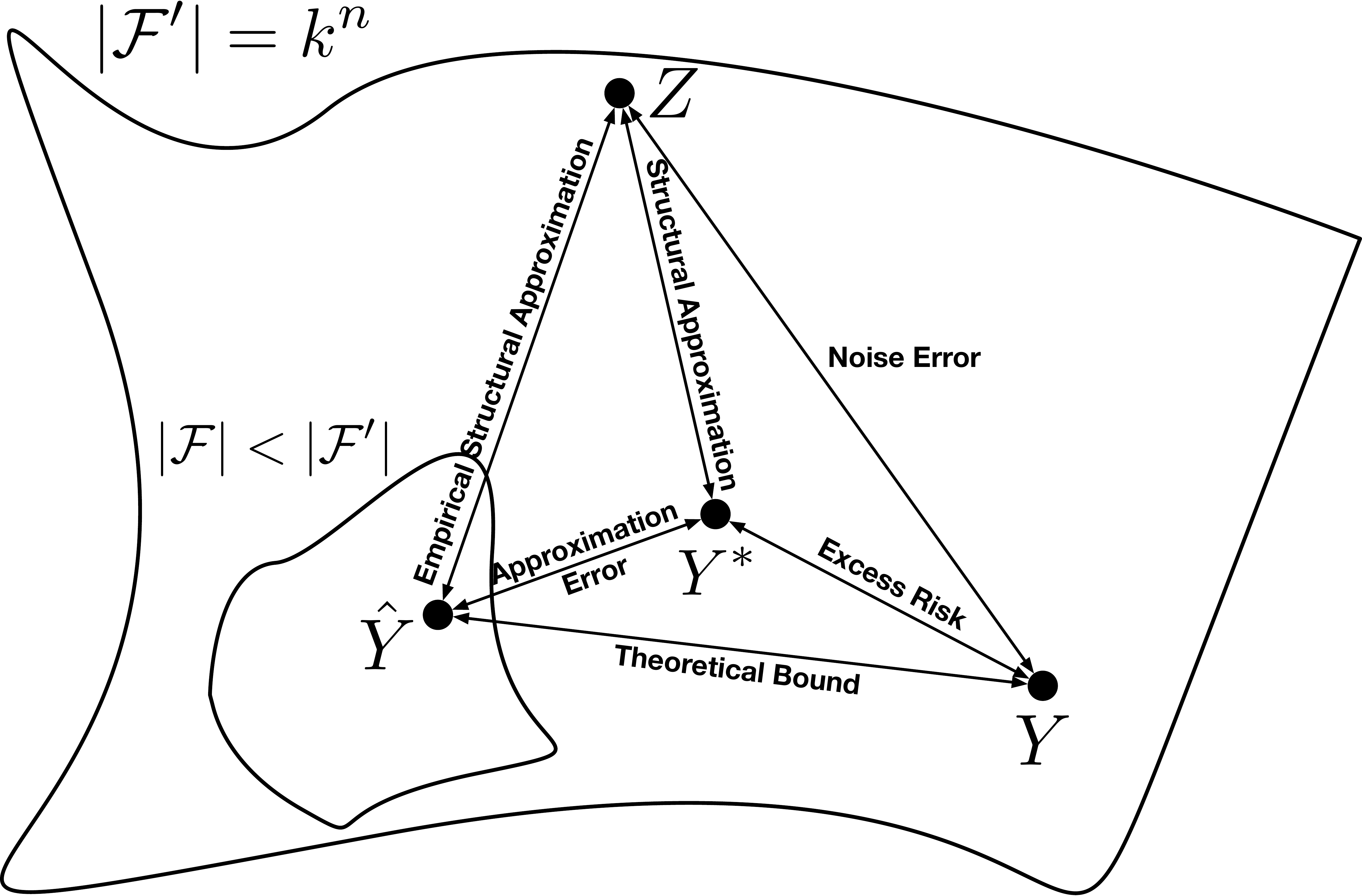}}
\vskip -0.1in
\caption{A schematic overview of our approach. Given the noise node labeling $Z$ of a graph with ground truth labeling $Y$, we leverage the noisy side information to obtain an approximate labeling $\hat{Y}$. Labeling $\hat{Y}$ is an approximate solution to the information theoretic optimal solution $Y^*$. The goal of our analysis is to find a theoretical bound on the Hamming error between $\hat{Y}$ and $Y$.}
\vskip -0.1in
\vspace{-20pt}
\label{fig:overview}
\end{center}
\end{figure} 

\section{APPROACH OVERVIEW}
\label{sec:analysis_overview}
We consider a graph $G=(V, E)$ with node labels in $L = \{1, 2, \dots, k\}$. The space of all possible labelings of $V$ defines a \emph{hypothesis space} $\mathcal{F}'$. In this space, we denote $Y$ the latent, ground truth labeling of $G$. In the absence of any information the size of this space is $|\mathcal{F}'|=k^{n}$. Access to any side information allows us to identify a subspace of $\mathcal{F}'$ that is close to $Y$.

First, we consider access only to noisy node labels of $G$ and denote $Z$ the point in $\mathcal{F}'$ for this labeling. If we have no side information on the edges of $G$, the information theoretic optimal solution to statistical recovery is $Z$ (because we assume $q < 1/2$). Second, we assume access only to edge measurements for $G$. We denote $X$ the observed edge measurements. If the edge measurements are accurate (i.e., $p = 0$) the size of $\mathcal{F}'$ reduces to $k!$. We assume that $k$ is such that one can obtain a labeling for $G$ that is edge-compatible with $X$ by traversing $G$. Under this assumption, the number of edge-compatible labelings is equal to all possible label permutations, i.e., $|\mathcal{F}'| = k!$. Finally, in the presence of both node and edge observations the information theoretic optimal solution to statistical recovery corresponds to a point $Y^*$ that is obtained by running exact marginal inference~\citep{globerson}. However, exact inference can be intractable, and even when it is efficient, it is not clear what is the optimal Hamming error that $Y^*$ yields with respect to $Y$.

To address these issues, we propose an approximate inference scheme and obtain a bound on the worst-case expected Hamming error that it obtains. We start with the noisy edge observations $X$ and use them to find a subspace $\mathcal{F} \subset \mathcal{F}'$ that contains node labelings which induce edge labelings that are close to $X$ (in terms of Hamming distance). We formalize this in the next two sections. Intuitively, we have that noisy edge measurements partition the space $\mathcal{F}$ in a collection of \emph{edge classes}.

\begin{mydef}
\label{edgeclass}
The edge class of a point $Y \in \mathcal{F}$ is a set $\mathbb{I} \in 2^{\{1, 2, \dots, k\}^{|V|}}$ such that for all $Y_i \in \mathbb{I}$, $Y_i$ induces the same edge measurements as $Y$. All points in $\mathbb{I}$ can be derived via a label permutation of $Y$. In general, for any labeling $Y'$, set $\mathbb{I}_{Y'} $ is the set of all labelings that can be generated by a label permutation of $Y'$. 
\end{mydef}
The restricted subspace $\mathcal{F}$ contains those edge classes that are close to the noisy edge observations $X$.

Given the restricted subspace $\mathcal{F}$, we design an algorithm to find a point $\hat{Y} \in \mathcal{F}$ such that the Hamming error between $\hat{Y}$ and $Y^*$ is minimized. We define the Hamming error with respect to an edge class $\mathbb{I}$ as:

\begin{mydef}
The Hamming error of a vector $\mathcal{Q} \in \{1,2, \dots, k\}^{|V|}$ to the edge class $\mathbb{I}_{Y'}\in 2^{\{1,2, \dots, k\}^{|V|}}$ is $Hd(\mathcal{Q} , \mathbb{I}_{Y'}) = \min_{\mathcal{Y} \in \mathbb{I}_{Y'} } Hd(\mathcal{Y},\mathcal{Q})$.
\end{mydef}

Point $Y^*$ might not be in $\mathcal{F}$ and the distance between $\hat{Y}$ and $Y^*$ is the approximation error we have due to approximate inference. Finally, we prove that the expected Hamming error between $\hat{Y}$ and $Z$ is bounded. A schematic diagram of our approximate inference method is shown in Figure~\ref{fig:overview}. In the following sections, we study statistical recovery for trees (in Section~\ref{sec:anal_trees}) and general graphs (in Section~\ref{sec:anal_graphs}). All proofs can be found in the supplementary material of our paper~\citep{uai}.

\section{RECOVERY IN TREES}
\label{sec:anal_trees}
We focus on trees and introduce a linear program for statistical recovery over $k$-categorical random variables. We prove that under a uniform noise model the optimal Hamming error is $\tilde{O}(\log(k)\cdot p \cdot n)$.

\subsection{A Linear Program for Statistical Recovery}
\label{sec:recovery_trees}

We follow the steps described in Section~\ref{sec:analysis_overview}. First, we use the noisy edge observations to restrict the search for $\hat{Y}$ to a subspace $\mathcal{F}$. We describe $\mathcal{F}$ via a constraint on the number of edge disagreements between the edge labeling implied by $\hat{Y}$ and the noisy edge observations $X$. Second, we form an optimization problem to find a point $\hat{Y}$ with minimum Hamming distance from $Z$ that satisfies the aforementioned constraint.

The ground truth edge labeling $M$ (corresponding to the ground truth node labeling $Y$) has bounded Hamming distance from the observed noisy labeling $X$. Hence, we can restrict the space of considered solutions to node labelings that induce an edge labeling with a bounded Hamming distance from the observed noisy labeling $X$. We have: Under the uniform noise model, edge measurements are flipped independently. Thus, the total number of bad edges is a sum over independent and identically distributed (iid) random variables. The expected number of flipped edges is $p\cdot|E| = p(n-1)$. Using the Bernstein inequality, we have:

\begin{lemma}
\label{treeedgebound}
Let $G$ be a graph with noisy edge observations with noise parameter $p$. With probability at least $1- \delta$ over the draw of $X$:
\begin{equation*}
\sum_{(u,v) \in E} \mathbbm{1}\{\varphi(Y_u,Y_v) \neq X_{u,v}\} \leq t \text{~~~where}
\end{equation*}   
\[t = (n-1)p + \frac{2}{3}\ln(\frac{2}{\delta})(1-p) + \sqrt{2(n-1)p(1-p)\ln(\frac{2}{\delta})}
\]
\end{lemma}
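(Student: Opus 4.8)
The plan is to read Lemma~\ref{treeedgebound} as a concentration bound for a sum of independent Bernoulli random variables, and to obtain the stated threshold $t$ by inverting Bernstein's inequality. For each edge $e=(u,v)\in E$ let $B_e=\mathbbm{1}\{\varphi(Y_u,Y_v)\neq X_{u,v}\}$ be the indicator that $e$ is a \emph{bad} edge. Since $M_{u,v}=\varphi(Y_u,Y_v)$ and, under the uniform noise model, $X_{u,v}$ is flipped independently of every other edge with probability $p$, the variables $\{B_e\}_{e\in E}$ are i.i.d.\ $\mathrm{Bernoulli}(p)$. Hence $S:=\sum_{e\in E}B_e$ is exactly the quantity on the left-hand side of the lemma, with $\mathbb{E}[S]=p\,|E|=p(n-1)$ (for the trees considered here $|E|=n-1$) and $\mathrm{Var}(S)=(n-1)\,p(1-p)$.

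Next I would center the summands, $W_e:=B_e-p$, so that $\mathbb{E}[W_e]=0$, $\sum_{e}\mathbb{E}[W_e^2]=(n-1)p(1-p)$, and $|W_e|\le 1-p$ almost surely --- here we use $p<q<1/2$, which makes $1-p$ the larger of the two possible magnitudes of $W_e$. Bernstein's inequality then yields, for every $s>0$,
\begin{equation*}
P\!\left(S-(n-1)p\ \ge\ s\right)\ \le\ \exp\!\left(-\,\frac{s^{2}/2}{(n-1)p(1-p)+(1-p)s/3}\right),
\end{equation*}
and the two-sided form of the same inequality carries an extra factor $2$ in front; the $2$ inside $\ln(2/\delta)$ below can be charged either to that factor or, equivalently, to reserving probability $\delta/2$ for a companion node-noise concentration bound.

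To finish, write $L=\ln(2/\delta)$ and ask for which $s$ the exponent above is at least $L$, i.e.\ $s^{2}/2\ge L\big((n-1)p(1-p)+(1-p)s/3\big)$, equivalently $s^{2}-\tfrac{2L(1-p)}{3}\,s-2L(n-1)p(1-p)\ge 0$. The larger root of the corresponding quadratic is
\begin{equation*}
s_\star=\frac{L(1-p)}{3}+\sqrt{\frac{L^{2}(1-p)^{2}}{9}+2L(n-1)p(1-p)}\ \le\ \frac{2L(1-p)}{3}+\sqrt{2L(n-1)p(1-p)},
\end{equation*}
where the last step uses $\sqrt{a+b}\le\sqrt{a}+\sqrt{b}$. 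Substituting $L=\ln(2/\delta)$ shows that $t=(n-1)p+s_\star$ is at most the expression written in the lemma, while $P(S\ge t)\le\delta$, which is the claim.

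I do not anticipate a genuine obstacle: the statement is an explicit-constant tail bound, and the only places that need care are (i) invoking independence of the edge flips (part of the uniform noise model) together with the bound $|W_e|\le 1-p$ (which needs $p<1/2$), and (ii) the elementary algebra of solving the quadratic and bounding its root by a sum of square roots so as to land on exactly the closed form stated for $t$.
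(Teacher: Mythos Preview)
Your proposal is correct and follows essentially the same route as the paper: center the i.i.d.\ Bernoulli indicators, bound $|W_e|\le 1-p$ using $p<1/2$, apply Bernstein's inequality, solve the resulting quadratic in the deviation, and simplify the root via $\sqrt{a+b}\le\sqrt{a}+\sqrt{b}$ to land on the stated $t$. The paper's proof is line-for-line the same computation, including the passage from $\tfrac{1}{3}\ln(2/\delta)(1-p)$ plus the square-root term to $\tfrac{2}{3}\ln(2/\delta)(1-p)$ via that same subadditivity step.
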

This lemma states that under the uniform noise model the ground truth edge labeling $M$ for Graph $G$ is in the neighborhood of $X$ with high probability. Given this bound, we use the following linear program to find $\hat{Y}$:

\begin{mini}
  [3]
  {\hat Y \in [k]^{|V|}}
  {\sum_{v \in V} \mathbbm{1}\{\hat Y_v \neq Z_v\}}
  {\label{lp_tree}}
  {}
  {}
  \addConstraint{\sum_{(u,v) \in E} \mathbbm{1}\{\varphi(\hat Y_u,\hat Y_v) \neq X_{u,v}\}}{ \leq t}
\end{mini}

where $t$ is defined as in Lemma~\ref{treeedgebound}. This problem can be solved via a dynamic programming algorithm with cost $O(k \cdot n^3 \cdot p)$. We describe this algorithm in the supplementary material of the paper~\citep{uai}.

\newparagraph{Discussion} Our approach is similar to that of~\cite{foster} for binary random variables. However, we use the Bernstein inequality to obtain a tighter concentration bound on the number of flipped edge measurements. In the case of categorical random variables, it is critical to obtain a tight description of the space $\mathcal{F}$ of the possible labeling solutions as we have a larger hypothesis space.

Let $\mathcal{S}(n,k)$ be the size of hypothesis space with $k$ labels and $n$ nodes.  If we increase $n$ by one, the rate of change for the hypothesis space is $r_{k,n}=\Delta\mathcal{S}/\Delta n=k^{n}(k-1)$, which is multiplicative with respect to $k$. Similarly, as we increase $k$ to $k+1$ the size of the hypothesis space changes by $s_{k,n}=\Delta\mathcal{S}/\Delta k=\sum_{i+j=n-1}(k+1)^ik^j\geq k^{n-1}$, which is exponential in the size of our input. We need a tight bound to obtain an efficient dynamic programming algorithm with respect to $n$ and $k$.

\subsection{Upper Bound on the Hamming Error for Trees}
\label{sec:stat_trees}

The Hamming error of $\hat{Y}$ obtained by Linear Program~\ref{lp_tree} is bounded by $\tilde{O}(\log(k)\cdot p\cdot n)$ with high probability. For our analysis, we draw connections to statistical learning.

We define a hypothesis class $\mathcal{F}$ that contains all points that satisfy the bound in Lemma~\ref{treeedgebound}: 
\[\mathcal{F} = \{Y' \in [k]^{|V|}: \sum_{(u,v) \in E}\mathbbm{1}\{\varphi( Y'_u, Y'_v) \neq X_{u,v}\} \leq t\}\]
From Lemma~\ref{treeedgebound}, we have that the edge class that corresponds to the ground truth labeling $Y$ is contained in $\mathcal{F}$ with high probability over the draw of $X$. Moreover, since the node noise $q$ is bounded away from $1/2$, we can use the noisy node measurements $Z$ to find a labeling $\hat{Y}$ that is in the same edge class as $Y$ and close to $Y$. Such a labeling is obtained by solving Linear Program~\ref{lp_tree}. From a statistical learning perspective, $\hat{Y}$ corresponds to the \emph{empirical risk minimizer} (ERM) over $\mathcal{F}$ given $Z$. Thus, the Hamming error between $\hat{Y}$ and $Y$ is associated with the \emph{excess risk} over $Z$ for Class $\mathcal{F}$. We have:

\begin{lemma} \label{ermlem} \citep{foster} Let $\hat{Y}$ be the empirical risk minimizer over $\mathcal{F}$ given $Z$ and let $Y^* = \argmin_{Y'\in \mathcal{F}} \sum\limits_{v\in V }\mathbb{P}(Y'_v \neq Y_v)$ and $c>0$ a constant number, then with probability $1-\delta$ over the draw of $Z$,
\begin{align*}
&\sum\limits_{v\in V}\mathbb{P}\big( \hat{Y}_v \neq Z_v\big)-\min\limits_{Y'\in \mathcal{F}} \sum\limits_{v\in V}\mathbb{P}\big( Y'_v \neq Z_v\big)\leq\\&  \bigg( \frac{2}{3}+ \frac{c}{2}\bigg)\log\bigg( \frac{|\mathcal{F}|}{\delta}\bigg)+\frac{1}{c}\sum\limits_{v\in V} \mathbbm{1}\big\lbrace \hat{Y}_v \neq Y^*_v \big\rbrace
\end{align*}
\label{erm}
\end{lemma}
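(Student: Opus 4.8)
The plan is to read this lemma as a localized empirical-risk-minimization (ERM) bound over the finite hypothesis class $\mathcal{F}$, and to recall the argument of~\citet{foster}, adapted to the uniform categorical noise model. Write $\hat{R}(Y') = \sum_{v\in V}\mathbbm{1}\{Y'_v \neq Z_v\}$ for the empirical risk of a labeling $Y'\in\mathcal{F}$ and $R(Y') = \sum_{v\in V}\mathbb{P}(Y'_v \neq Z_v) = \mathbb{E}[\hat{R}(Y')]$ for its population risk, so that $\hat{Y} = \argmin_{Y'\in\mathcal{F}}\hat{R}(Y')$. The first step is a direct computation under the uniform noise model: for a fixed $Y'$ one has $\mathbb{P}(Y'_v \neq Z_v) = q$ if $Y'_v = Y_v$ and $\mathbb{P}(Y'_v \neq Z_v) = 1 - \tfrac{q}{k-1}$ if $Y'_v \neq Y_v$, whence $R(Y') = qn + \bigl(1 - q - \tfrac{q}{k-1}\bigr)\,Hd(Y',Y)$. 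Since $q<1/2$ and $\tfrac{q}{k-1}\le q$, the coefficient $1 - q - \tfrac{q}{k-1}$ is at least $1-2q>0$, so $R(\cdot)$ and $Hd(\cdot,Y)$ induce the same order on $\mathcal{F}$; in particular the $Y^*$ of the statement, defined as $\argmin_{Y'\in\mathcal{F}} Hd(Y',Y)$, also satisfies $R(Y^*) = \min_{Y'\in\mathcal{F}} R(Y')$. This identifies the left-hand side of the bound with the excess risk $R(\hat{Y}) - R(Y^*)$.

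The heart of the argument is a \emph{relative} deviation inequality, centered at the data-independent comparator $Y^*$. For each $Y'\in\mathcal{F}$ consider the per-vertex increments $W_v(Y') = \mathbbm{1}\{Y'_v \neq Z_v\} - \mathbbm{1}\{Y^*_v \neq Z_v\}$: these are independent over $v$ (the $Z_v$ are), lie in $\{-1,0,1\}$, and vanish identically whenever $Y'_v = Y^*_v$, so $\sum_v \mathrm{Var}\bigl(W_v(Y')\bigr) \leq Hd(Y',Y^*)$. Applying Bernstein's inequality to $-\sum_v\bigl(W_v(Y') - \mathbb{E}W_v(Y')\bigr)$ for each fixed $Y'$, then taking a union bound over the finite class $\mathcal{F}$ with per-hypothesis failure probability $\delta/|\mathcal{F}|$, shows that with probability at least $1-\delta$ over the draw of $Z$, simultaneously for all $Y'\in\mathcal{F}$,
\[
\bigl(R(Y') - R(Y^*)\bigr) - \bigl(\hat{R}(Y') - \hat{R}(Y^*)\bigr) \;\leq\; \sqrt{2\,Hd(Y',Y^*)\log\bigl(|\mathcal{F}|/\delta\bigr)} \;+\; \tfrac{2}{3}\log\bigl(|\mathcal{F}|/\delta\bigr) .
\]
(The explicit constant $\tfrac{2}{3}$ is the standard Bernstein bookkeeping; any constant depending only on the per-increment range would do and could be pushed into the other terms.)

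The last steps are short. Instantiating the displayed inequality at $Y' = \hat{Y}$ and using that $\hat{Y}$ minimizes the empirical risk over $\mathcal{F}$ (hence $\hat{R}(\hat{Y}) - \hat{R}(Y^*) \leq 0$, since $Y^*\in\mathcal{F}$) gives $R(\hat{Y}) - R(Y^*) \leq \sqrt{2\,Hd(\hat{Y},Y^*)\log(|\mathcal{F}|/\delta)} + \tfrac{2}{3}\log(|\mathcal{F}|/\delta)$. Then the elementary inequality $\sqrt{2ab} \leq \tfrac{c}{2}a + \tfrac{1}{c}b$ with $a = \log(|\mathcal{F}|/\delta)$ and $b = Hd(\hat{Y},Y^*) = \sum_v\mathbbm{1}\{\hat{Y}_v \neq Y^*_v\}$ splits the square root, and recombining the resulting $\tfrac{c}{2}\log(|\mathcal{F}|/\delta)$ with the $\tfrac{2}{3}\log(|\mathcal{F}|/\delta)$ term yields exactly $\bigl(\tfrac{2}{3} + \tfrac{c}{2}\bigr)\log(|\mathcal{F}|/\delta) + \tfrac{1}{c}\sum_v\mathbbm{1}\{\hat{Y}_v \neq Y^*_v\}$. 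Together with $R(Y^*) = \min_{Y'\in\mathcal{F}}R(Y')$ from the first step, this is the claimed bound.

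The main obstacle is conceptual rather than computational: because $\hat{Y}$ is selected using $Z$, concentration cannot be invoked for $\hat{Y}$ directly, yet a uniform bound centered at the true risks would carry a variance proxy of order $R(Y^*) = \Theta(n)$ and give only a rate of order $\sqrt{n}$. Centering the deviation process at $Y^*$ is precisely what lowers the variance proxy to $Hd(\cdot,Y^*)$, so that $\log|\mathcal{F}|$ becomes the only complexity term while the residual $Hd(\hat{Y},Y^*)$ contribution is kept separate — later it will be absorbed, with the right choice of $c$, by the Hamming-error analysis that follows. Two minor points need care: verifying $\mathrm{Var}(W_v(Y')) \leq \mathbbm{1}\{Y'_v \neq Y^*_v\}$, which is immediate since $W_v(Y')$ is identically $0$ when $Y'_v = Y^*_v$ and bounded by $1$ in magnitude otherwise; and using $q < 1/2$ to guarantee that the Hamming-distance minimizer $Y^*$ is genuinely the population-risk minimizer appearing on the left-hand side.
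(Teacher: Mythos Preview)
Your proof is correct and complete. The paper does not supply its own proof of this lemma: it is quoted verbatim as a result of \citet{foster} and used as a black box in the subsequent analysis (Corollary~1, Theorem~1, and the general-graph bound). Your reconstruction---center the empirical process at the fixed comparator $Y^*$, control the variance of $W_v(Y')=\mathbbm{1}\{Y'_v\neq Z_v\}-\mathbbm{1}\{Y^*_v\neq Z_v\}$ by $\mathbbm{1}\{Y'_v\neq Y^*_v\}$, apply Bernstein with a union bound over $\mathcal{F}$, then split the square root via $\sqrt{2ab}\le \tfrac{c}{2}a+\tfrac{1}{c}b$---is the standard localized ERM argument and is exactly how one would expect the cited result to be proved. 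One small remark: your first paragraph, computing $R(Y')=qn+(1-\tfrac{k}{k-1}q)\,Hd(Y',Y)$ to identify $Y^*$ with the population-risk minimizer, is not strictly needed to establish the lemma as stated (the inequality holds for any $Y^*$ that minimizes $R$ over $\mathcal{F}$, and the statement's definition of $Y^*$ via $\sum_v\mathbb{P}(Y'_v\neq Y_v)=Hd(Y',Y)$ already gives that), but it does no harm and in fact anticipates the paper's Lemma~3.
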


We now analyze how the Hamming error relates to excess risk for categorical random variables. We have:

\begin{lemma}
\label{hamapproximate}
The Hamming error is proportional to the excess risk: For fixed $\hat Y, Y \sim  \mathcal{F}'$ and $Z$ distributed according to the uniform noise model we have that:
\[\mathbbm{1}\{\hat Y_v \neq Y_v\} = \frac{1}{c}\left[ P_Z(\hat Y_v \neq Z_v) - P_Z(Y_v \neq Z_v)\right] \]
\[\text{~where~~~} c = 1- k/(k-1)q\]
\end{lemma}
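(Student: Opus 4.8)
The plan is to prove the identity pointwise at a single vertex $v$ by a direct case analysis on whether $\hat Y_v$ agrees with $Y_v$, using only the definition of the uniform node-noise model. Throughout, $Y_v$ and $\hat Y_v$ are treated as fixed labels in $L$, and the only randomness is in $Z_v$, which equals $Y_v$ with probability $1-q$ and takes each of the other $k-1$ labels with probability $q/(k-1)$.

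First I would dispose of the case $\hat Y_v = Y_v$. Here the left-hand side is $0$, and on the right-hand side the two probabilities $P_Z(\hat Y_v \neq Z_v)$ and $P_Z(Y_v \neq Z_v)$ are literally the same quantity, so their difference is $0$ and the identity holds regardless of the value of $c$.

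Next I would treat the case $\hat Y_v \neq Y_v$, where the left-hand side equals $1$. Directly from the noise model, $P_Z(Y_v \neq Z_v) = q$. For the other term, since $\hat Y_v$ is one of the $k-1$ labels distinct from $Y_v$, we have $P_Z(Z_v = \hat Y_v) = q/(k-1)$, hence $P_Z(\hat Y_v \neq Z_v) = 1 - q/(k-1)$. Subtracting gives $1 - q/(k-1) - q = 1 - \tfrac{k}{k-1}q = c$, so the right-hand side equals $\tfrac{1}{c}\cdot c = 1$, matching the left-hand side.

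There is essentially no obstacle here beyond bookkeeping; the only point worth stating explicitly is that $c = 1 - \tfrac{k}{k-1}q$ is precisely the normalizing constant that makes the two cases collapse into the single indicator, and that $c > 0$ (which is where the assumption $q < 1/2$, and more generally $q < (k-1)/k$, enters) so that dividing by $c$ is legitimate. Having verified the pointwise identity in both cases, I would finish by summing over $v \in V$, which yields $\sum_{v\in V}\mathbbm{1}\{\hat Y_v \neq Y_v\} = \tfrac{1}{c}\sum_{v\in V}\big[P_Z(\hat Y_v \neq Z_v) - P_Z(Y_v \neq Z_v)\big]$, i.e., the Hamming error is exactly $1/c$ times the excess-risk quantity used in the subsequent analysis.
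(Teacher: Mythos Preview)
Your proof is correct and follows essentially the same approach as the paper: a case analysis on whether $\hat Y_v = Y_v$, with the nontrivial case reducing to computing $P_Z(\hat Y_v \neq Z_v) - q$. The only cosmetic difference is that you obtain $P_Z(\hat Y_v \neq Z_v) = 1 - q/(k-1)$ directly via the complement $P_Z(Z_v = \hat Y_v) = q/(k-1)$, whereas the paper partitions into the events $\{Z_v = Y_v\}$ and $\{Z_v \neq Y_v\}$ and sums to reach the same value.
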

With $k = 2$ we have that $c = 1 - 2q$, which recovers the result of~\cite{foster} for binary random variables.

Using Lemma \ref{erm}, we can bound the excess risk in terms of the size of the hypothesis class. We have: 
\begin{corollary}
When $Y \in \mathcal{F}$ and $\hat Y = \arg\min_{\mathcal{Y}\in \mathcal{F}} \sum_{v \in V} \mathbbm{1}\{ \mathcal{Y}_v \neq Z_v\}$, we have that with probability at least $1 - \delta$ over the draw of $Z$:
\begin{gather*}
        \sum_{v \in V} P(\hat Y_v \neq Z_v) - \min_{Y' \in \mathcal{F}} \sum_{v \in V}  P(Y'_v \neq Z_v) \leq 
\left(\frac{4}{3} + \frac{2}{\frac{1}{4}+\big( \frac{1}{4}-\epsilon\big)\big( 1-\frac{k}{k-1}\big)}\right)\log\left(\frac{|\mathcal{F}|}{\delta}\right)
    \end{gather*}
\end{corollary}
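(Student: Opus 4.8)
This corollary is just the composition of Lemma~\ref{erm} (an excess-risk bound for the ERM $\hat Y$) with Lemma~\ref{hamapproximate} (Hamming error equals excess risk up to the factor $c$), followed by solving for the excess risk. Write $R:=\sum_{v\in V}P(\hat Y_v\neq Z_v)-\min_{Y'\in\mathcal F}\sum_{v\in V}P(Y'_v\neq Z_v)$ for the quantity to be bounded; the confidence $1-\delta$ over the draw of $Z$ is inherited from Lemma~\ref{erm}.

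The first thing I would do is identify the reference point $Y^\ast$ of Lemma~\ref{erm}. Because $Y$ is a fixed labeling, $\sum_v P(Y_v\neq Y_v)=0$, so under the hypothesis $Y\in\mathcal F$ the minimizer $Y^\ast=\argmin_{Y'\in\mathcal F}\sum_v P(Y'_v\neq Y_v)$ is $Y$ itself; hence the last term of Lemma~\ref{erm} becomes $\tfrac1c\sum_v\mathbbm{1}\{\hat Y_v\neq Y_v\}$ for the free constant $c>0$. Next I would pin down the population optimum inside $R$: for any labeling $Y'$ and vertex $v$ the uniform noise model gives $P(Y'_v\neq Z_v)=q$ if $Y'_v=Y_v$ and $P(Y'_v\neq Z_v)=1-\tfrac{q}{k-1}$ otherwise, and since $q<1/2\le\tfrac{k-1}{k}$ we have $1-\tfrac{q}{k-1}>q$; therefore $\min_{Y'\in\mathcal F}\sum_v P(Y'_v\neq Z_v)=\sum_v P(Y_v\neq Z_v)=nq$, attained at $Y'=Y$, and so $R=\sum_v P(\hat Y_v\neq Z_v)-nq$.

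Now I would close the loop. Summing the identity of Lemma~\ref{hamapproximate} over $v$ (with the ground truth $Y$ in the role of its ``$Y$'') and using the previous step,
\[
\sum_{v\in V}\mathbbm{1}\{\hat Y_v\neq Y_v\}=\frac{1}{c'}\Big(\sum_{v\in V}P(\hat Y_v\neq Z_v)-nq\Big)=\frac{R}{c'},\qquad c':=1-\frac{k}{k-1}q .
\]
Substituting into Lemma~\ref{erm} gives, for every $c>0$, the self-referential inequality $R\le\big(\tfrac23+\tfrac c2\big)\log(|\mathcal F|/\delta)+\tfrac1{cc'}R$. Taking $c$ large enough that $cc'>1$ and rearranging yields $R\le\frac{2/3+c/2}{1-1/(cc')}\log(|\mathcal F|/\delta)$; the choice $c=2/c'$ makes $cc'=2$ and collapses the prefactor to $\tfrac43+\tfrac2{c'}$, which after writing out $c'=1-\tfrac{k}{k-1}q$ matches the constant displayed in the statement.

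I expect the only real subtlety to be keeping the two quantities named ``$c$'' apart --- Lemma~\ref{erm} leaves $c$ free, whereas Lemma~\ref{hamapproximate} forces the factor $c'=1-\tfrac{k}{k-1}q$ --- and checking the two sign/size conditions that make the final division legitimate: $c'>0$, which follows from $q<\tfrac{k-1}{k}$ (a consequence of $q<1/2$), and the existence of an admissible $c$ with $cc'>1$. Everything else (identifying $Y^\ast$ with $Y$, evaluating the coordinatewise minimum, and the closing algebra) is routine.
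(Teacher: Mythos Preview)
Your proposal is correct and follows essentially the same route as the paper: identify $Y^\ast=Y$ from the hypothesis $Y\in\mathcal F$, plug into Lemma~\ref{erm}, use Lemma~\ref{hamapproximate} to rewrite $\sum_v\mathbbm{1}\{\hat Y_v\neq Y_v\}$ as $R/c'$ with $c'=1-\tfrac{k}{k-1}q$, and then choose the free parameter $c=2/c'$ to collapse the self-referential inequality. The paper's proof is slightly terser (it does not spell out the coordinatewise computation $P(Y'_v\neq Z_v)\in\{q,\,1-\tfrac{q}{k-1}\}$ that you give), but the logic and the choice of $c$ are identical; your remark about keeping the two ``$c$''s separate is exactly the only care needed.
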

We now combine these results with the complexity of class $\mathcal{F}$ to obtain a bound for the Hamming error:
\begin{theorem}
\label{treetheorem}
    Let $\hat Y$ be the solution to Problem~\ref{lp_tree}.
Then with probability at least $1 - \delta$ over the draw of $X$ and $Z$
    \begin{align*}
    \sum_{v \in V} \mathbbm{1}\{\hat Y_v \neq Y_v \}\leq
    & \frac{[t\log(2k)-\log(\delta)]}{\big( 1-\frac{k}{k-1}q\big)}  \left(\frac{4}{3} + \frac{2}{\frac{1}{4}+\big( \frac{1}{4}-\epsilon\big)\big( 1-\frac{k}{k-1}\big)}\right)\\ =&\tilde O(\log(k)np)
    \end{align*}
\end{theorem}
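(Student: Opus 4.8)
\emph{Proof sketch.} The plan is to view $\hat Y$ as the empirical risk minimizer (ERM) over the restricted class $\mathcal F$ against the node observations $Z$, and to chain the four ingredients already in place: the edge concentration bound (Lemma~\ref{treeedgebound}), the ERM excess‑risk bound (Lemma~\ref{erm} and the corollary following it), the Hamming/excess‑risk identity (Lemma~\ref{hamapproximate}), and a cardinality bound for $\mathcal F$. First I would condition on the event of Lemma~\ref{treeedgebound}, which has probability at least $1-\delta$ over the draw of $X$ and guarantees $Y\in\mathcal F$. On this event Problem~\ref{lp_tree} is feasible, its optimum $\hat Y$ is exactly the ERM over $\mathcal F$ given $Z$, and $Y$ is an admissible competitor; hence the corollary after Lemma~\ref{erm} applies and bounds the excess risk $\sum_{v}\mathbb P(\hat Y_v\neq Z_v)-\min_{Y'\in\mathcal F}\sum_{v}\mathbb P(Y'_v\neq Z_v)$ by $C_{k,\epsilon}\log(|\mathcal F|/\delta)$, where $C_{k,\epsilon}=\tfrac43+\tfrac{2}{\frac14+(\frac14-\epsilon)(1-\frac{k}{k-1})}$, on a further event of probability at least $1-\delta$ over the draw of $Z$.

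Next I would convert excess risk to Hamming error. Summing the identity of Lemma~\ref{hamapproximate} over $v\in V$ gives $\sum_v\mathbbm 1\{\hat Y_v\neq Y_v\}=\tfrac1c\big(\sum_v P_Z(\hat Y_v\neq Z_v)-\sum_v P_Z(Y_v\neq Z_v)\big)$ with $c=1-\tfrac{k}{k-1}q>0$ (using $q<1/2$). Since $Y\in\mathcal F$ on the conditioning event, I may replace $\sum_v P_Z(Y_v\neq Z_v)$ by the smaller $\min_{Y'\in\mathcal F}\sum_v P_Z(Y'_v\neq Z_v)$ without increasing the right‑hand side, so $\sum_v\mathbbm 1\{\hat Y_v\neq Y_v\}\le\tfrac1c\,C_{k,\epsilon}\log(|\mathcal F|/\delta)$.

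The remaining, and main, step is the cardinality bound $\log|\mathcal F|\le t\log(2k)$ (having $\log|\mathcal F|=O(t\log k)$ would already suffice for the $\tilde O$ conclusion). This is where the tree structure matters and where the ``tight bound'' discussion of Section~\ref{sec:recovery_trees} is needed: a direct enumeration of node labelings within edge‑Hamming distance $t$ of $X$ is hopeless, since each observed negative edge already contributes a factor $k-1$ and the count is exponential in $n$. Instead I would (i) pass to edge classes, noting that within a class the Hamming‑optimal representative against $Z$ is pinned down, so the ERM effectively ranges over edge classes rather than labelings, and (ii) bound the number of edge classes lying within distance $t$ of $X$ by rooting the tree and charging each of the $\le t$ disagreements together with the associated branching choice to a factor of at most $2k$, yielding at most $(2k)^t$ reachable classes; one then checks that the dynamic program solving Problem~\ref{lp_tree} compares only these candidates, so this is the right notion of $|\mathcal F|$ for the learning bound. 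I expect the careful accounting in (ii) — in particular isolating what counts as a ``branching choice'' so that the per‑step factor is $O(k)$ and does not smuggle back a dependence on $n$ — to be the real obstacle.

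Finally I would assemble: substitute $\log(|\mathcal F|/\delta)\le t\log(2k)-\log\delta$, take a union bound over the $X$‑ and $Z$‑events (rescaling $\delta$ by a constant factor if a single $\delta$ is wanted), and plug in $t=(n-1)p+\Theta(\log(1/\delta))$ from Lemma~\ref{treeedgebound}. This gives $\sum_v\mathbbm 1\{\hat Y_v\neq Y_v\}\le\frac{t\log(2k)-\log\delta}{1-\frac{k}{k-1}q}\,C_{k,\epsilon}$; since $t\log(2k)=O\big(np\log k+\log(1/\delta)\log k\big)$ and $1-\tfrac{k}{k-1}q$ is bounded away from $0$ for $q$ bounded away from $\tfrac{k-1}{k}$, the bound is $\tilde O(\log(k)\,n\,p)$, as claimed.
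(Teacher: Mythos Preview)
Your chain of reductions---condition on the event of Lemma~\ref{treeedgebound} so that $Y\in\mathcal F$, apply the corollary after Lemma~\ref{erm} to bound the excess risk, convert excess risk to Hamming error via Lemma~\ref{hamapproximate}, bound $|\mathcal F|$, then union-bound over the $X$- and $Z$-events and assemble---is exactly the paper's route. The only divergence is in how $\log|\mathcal F|\le t\log(2k)$ is obtained. The paper does \emph{not} pass to edge classes or root the tree; it argues directly that a labeling with $l\le t$ edge violations is pinned down by choosing which $l$ nodes are ``involved'' in the violations and assigning each of them one of at most $k$ labels, giving
\[
|\mathcal F|\ \le\ \sum_{l=0}^{t}\binom{n}{l}\,k^{l}\ \le\ k^{t}\sum_{l=0}^{t}\binom{n}{l}\ \le\ k^{t}\,2^{t}=(2k)^{t},
\]
which is then plugged straight into the corollary. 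So the edge-class/rooting machinery you outline, and flag as ``the real obstacle,'' is simply not present in the paper's argument. That said, your instinct that a direct enumeration is delicate is well founded: if $X$ contains many negative edges then $|\mathcal F|$ as defined is genuinely exponential in $n$ (e.g.\ when $X\equiv -1$ every proper $k$-coloring of the tree lies in $\mathcal F$), and the paper's last inequality $\sum_{l\le t}\binom{n}{l}\le 2^{t}$ fails for $t\ll n$. In short, your proposal follows the paper's approach but is more cautious precisely at the step the paper treats loosely.
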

Here, $t$ is the same as in Lemma~\ref{treeedgebound}. We see that $k$ has a lower impact on the Hamming error than $n$ and $p$. Also, when $k = 2$ we recover the result of~\cite{foster}. Due to the tools we use to prove this result, this is a tight bound. We validate this bound empirically in Section~\ref{sec:exps}.

\section{RECOVERY IN GENERAL GRAPHS}
\label{sec:anal_graphs}

We now show how our tree-based algorithm can be combined with correlation clustering to obtain a non-trivial error rate for graphs with bounded treewidth and $k$-categorical random variables. We first describe our approximate inference algorithm and then show that our algorithm achieves an expected Hamming error of $\tilde{O}\big( k\cdot\log (k)\cdot p^{\lceil\frac{\Delta(G)}{2}\rceil}\cdot n\big)$ where $\Delta(G)$ is the maximum degree of the structured instance $G$.

\subsection{Approximate Statistical Recovery}
\label{sec:cc_algorithm}                                                                                                                                                                         

We build upon the concept of \emph{tree decompositions}~\citep{diestel2018graph}. Let $G$ be a graph, $T$ be a tree, and $\mathcal{W} = (V_t)_{t\in T}$ be a family of vertex sets $V_t \subseteq V(G)$ indexed by the nodes $t$ of $T$. We denote a tree-decomposition with $(T,\mathcal{W} )$. The width of $(T, \mathcal{W})$ is defined as $\max\{|V_t|-1:t\in T \}$ and the treewidth $tw(G)$ of $G$ is the minimum width among all possible decompositions of $G$. We also denote with $F$ the $|\mathcal{W}|-1$ edges connecting the bags in $\mathcal{W}$ in $(T, \mathcal{W})$ and represent $T$ as $T=(\mathcal{W},F)$.

Given a graph $G$, a tree decomposition of $T$ defines a series of local subproblems whose solutions can be combined via dynamic programming to obtain a global solution for the original problem on $G$. For graphs of bounded treewidth, this approach allows us to obtain efficient algorithms~\citep{bodlaender88}. Our solution proceeds as follows: Let $(T, \mathcal{W})$ be a tree decomposition of $G$. We first find a {\em local} labeling $\tilde{Y}^W$ for each $W \in \mathcal{W}$. Then, we design a dynamic programming algorithm that combines all local labelings to obtain a global labeling $\hat{Y}$.

\subsubsection{Finding Local Labelings}
\label{sec:localopt}
We recover the labeling of the nodes in a bag $W$ as follows: (1) Given $W$, we consider a superset of $W$, defined as $W^* = EXT(W) = W \cup \big( \bigcup_{v\in G} N(v)\big)$ where $N(v)$ is the one-hop neighborhood of node $v$; (2) Given $W^*$, we use the edge observations in the edge subset $E' \subseteq E$ induced by $W^*$ to find a restricted hypothesis space $\mathcal{F}_{W^*}$. We then find a labeling $\tilde{Y}^{W^*} \in \mathcal{F}_{W^*}$ that has the minimum Hamming error with respect to $Z$ for the nodes in $W^*$. Let $Z_{W^*}$ denote this subset of $Z$; (3) For $W$, we assign $\tilde Y^W$ to be the restriction of $\tilde{Y}^{W^*}$ on $W$.

We consider two cases for Step 2 from above: (1) If $|W^*| = O(\log(n))$, we can enumerate all $k^{O(\log(n))}$ labelings for $W^*$ and choose the one with minimum Hamming distance from $Z$. The complexity of this brute-force algorithm is $k^{O(\log(n))} = \mathtt{poly}(n)$; (2) If $|W^*| = \Omega(\log(n))$, we use the \textsc{MaxAgree[$k$]} algorithm of~ \cite{giotis2006correlation} over the noisy edge measurements $X$ to restring the subspace $\mathcal{F}$ in the neighborhood of $X$. \textsc{MaxAgree[$k$]} is a polynomial-time approximation scheme (PTAS) for solving the Max-Agreement version of correlation clustering for a fixed number of $k$ labels. In the worst case, \textsc{MaxAgree[$k$]} obtains an approximation of 0.7666\textsc{Opt[$k$]}. In our analysis, we account for the approximation factor 0.7666 by changing the probability $p$ to $p' = 0.7666p + 0.2334$. A detailed discussion is provided in the supplementary material of the paper~\citep{uai}. Given the output of \textsc{MaxAgree[$k$]}, let $\mathcal{F}_{CC}$ be the restricted subspace of solutions for $W^*$. We pick an arbitrary labeling $\bar{Y}^{W^*} \in \mathcal{F}_{CC}$ and use Algorithm~\ref{alg:greedy} to get a permutation that transforms $\bar{Y}^{W^*}$ to point $\tilde{Y}^{W^*}$ that has minimum Hamming distance to $Z^{W^*}$.

\begin{algorithm}
\small
    \renewcommand{\algorithmicrequire}{\textbf{Input:}}
    \caption{Local Label Permutation}
    \begin{algorithmic}
            \REQUIRE  A labeling $\bar Y^{W^*}$ in the subspace $\mathcal{F}_{CC}$ identified by \textsc{MaxAgree[$k$]} on ${W^*}$; Node observations $Z^{W^*}$;
            \STATE $\bar Y^{W^*}_1, \bar Y^{W^*}_2, \ldots \bar Y^{W^*}_k \leftarrow$ Group $\bar Y^{W^*}$ By Label;
            \STATE $Z^{W^*}_1, Z^{W^*}_2, \ldots Z^{W^*}_k \leftarrow$ Group $Z^{W^*}$ By Label;                      
            \FOR{$i,j\in [k]\times [k]$}
            \STATE $I_{i,j} \leftarrow |\bar{Y}^{W^*}_i \cup Z^{W^*}_j|$;
            \ENDFOR
            
            \STATE $Q \leftarrow$ A queue that sorts $I = \{I_{i,j}\}_{(i,j) \in [k]\times [k]}$ in decreasing order with respect to values $I_{i,j}$;
            \WHILE{$Q\neq \emptyset$}
            \STATE $I_{i,j} \leftarrow$ Pop($Q$);
            \STATE $\pi(i) \leftarrow j$;
            \STATE Remove all $I_{t,j}$ and $I_{i,t}$ for all $t\in [k]$ from $Q$;
            \ENDWHILE
            \STATE \textbf{Return:} $\pi$
    \end{algorithmic}
    \label{alg:greedy}
\end{algorithm}

Algorithm~\ref{alg:greedy} greedily permutes the labels in $\bar Y^w$ to obtain a labeling with minimum Hamming distance to $Z^W$. The complexity of this algorithm is $O(n+k\log k)$.

\begin{lemma}
\label{greedyresult}
    Algorithm \ref{alg:greedy} finds a permutation $\pi$ such that: 
    \begin{align*}
        \tilde Y^W=\pi(\bar Y^W) = \min_{\pi\in\Gamma_k} \sum_{v \in W} \mathbbm{1}\{\pi(\bar Y^W) \neq Z^W\}
    \end{align*}
    where $\Gamma_k$ is the set of all permutations of the $k$ labels.
\end{lemma}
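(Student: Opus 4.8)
The plan is to show that Algorithm~\ref{alg:greedy} actually computes the optimal label permutation $\pi^\star \in \Gamma_k$ minimizing the Hamming distance $\sum_{v \in W} \mathbbm{1}\{\pi(\bar Y^W)_v \neq Z^W_v\}$, i.e., that the greedy matching on the ``confusion matrix'' $I$ is optimal. The key observation is that minimizing Hamming distance is equivalent to \emph{maximizing agreement}: for a permutation $\pi$, the number of nodes $v \in W$ on which $\pi(\bar Y^W)_v = Z^W_v$ equals $\sum_{i \in [k]} |\bar Y^{W^*}_i \cap Z^{W^*}_{\pi(i)}|$ (I would note the paper writes $\cup$ where it means $\cap$ in the definition of $I_{i,j}$, but the intent is the overlap count). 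So the problem reduces to finding a permutation $\pi$ maximizing $\sum_i I_{i,\pi(i)}$ — a maximum-weight bipartite perfect matching on a $k \times k$ matrix with entries $I_{i,j} = |\bar Y^{W^*}_i \cap Z^{W^*}_j|$.

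First I would set up this reduction carefully: note that the groups $\bar Y^{W^*}_1,\dots,\bar Y^{W^*}_k$ partition $W^*$ (every node gets exactly one label under $\bar Y$), and likewise the $Z^{W^*}_j$ partition $W^*$; hence $\sum_{i,j} I_{i,j} = |W^*|$ and for any permutation $\pi$ the agreement count is exactly $\sum_i I_{i,\pi(i)}$. Minimizing Hamming distance over $W$ versus $W^*$: I should be slightly careful that the statement is about $\sum_{v \in W}$, not $\sum_{v \in W^*}$; but the algorithm operates on $Z^{W^*}$ and $\bar Y^{W^*}$. I would either restate the lemma with $W^*$ in place of $W$ (which is what the surrounding text actually uses when it says ``minimum Hamming distance to $Z^{W^*}$''), or observe that the same greedy argument applies verbatim to any fixed vertex set. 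Granting the $W^*$ formulation, the heart of the proof is: greedy selection of the largest remaining entry, deleting its row and column, and recursing, yields a maximum-weight perfect matching when the weight matrix arises from set-intersection counts of two partitions.

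The main step — and the main obstacle — is proving optimality of the greedy matching. In general, greedy is \emph{not} optimal for maximum-weight bipartite matching, so I cannot invoke a generic fact; I must use structure specific to $I_{i,j} = |\bar Y_i \cap Z_j|$. The plan is an exchange argument: suppose greedy picks pair $(i^\star, j^\star)$ with $I_{i^\star,j^\star}$ maximal among all entries, and let $\pi^{opt}$ be an optimal permutation with $\pi^{opt}(i^\star) = j' \neq j^\star$, and let $i' = (\pi^{opt})^{-1}(j^\star)$. Form $\pi'$ by swapping: $\pi'(i^\star) = j^\star$, $\pi'(i') = j'$, agreeing with $\pi^{opt}$ elsewhere. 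The change in objective is $I_{i^\star,j^\star} + I_{i',j'} - I_{i^\star,j'} - I_{i',j^\star}$. Since $I_{i^\star,j^\star} \geq I_{i',j^\star}$ and $I_{i^\star,j^\star} \geq I_{i^\star,j'}$ by maximality, I need $I_{i^\star,j^\star} + I_{i',j'} \geq I_{i^\star,j'} + I_{i',j^\star}$; this does not follow from maximality alone in general, so here I would lean on the partition structure — e.g. bounding $I_{i^\star,j'} + I_{i',j^\star} \le |\bar Y_{i^\star}| + |\bar Y_{i'}|$ is too weak, so instead I'd argue more directly. A cleaner route: since $I_{i^\star,j^\star}$ is the global maximum, $I_{i^\star,j^\star} \ge I_{i^\star,j'}$ gives $I_{i^\star,j^\star} - I_{i^\star,j'} \ge 0$, and separately $I_{i',j'} - I_{i',j^\star} \ge -I_{i',j^\star} \ge -I_{i^\star,j^\star}$... this is where I expect to spend the real effort, and I believe the intended argument is simply that $I_{i^\star,j^\star} \geq I_{i^\star, j'}$ and $I_{i^\star,j^\star} \geq I_{i', j^\star}$ together with $I_{i',j'} \geq 0$ suffice once one also notes that removing row $i^\star$ and column $j^\star$ and recursing preserves the ``entries are intersection counts of two partitions'' structure, so the inductive hypothesis applies to the $(k-1)\times(k-1)$ residual problem. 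I would therefore structure the final proof as induction on $k$: base case $k=1$ trivial; inductive step picks the global-max entry $(i^\star,j^\star)$, shows (via the exchange argument above, using maximality of $I_{i^\star,j^\star}$ for both the deleted row and the deleted column) that some optimal solution uses $(i^\star,j^\star)$, then restricts to the residual partitions $\{\bar Y_i\}_{i\neq i^\star}$ and $\{Z_j\}_{j\neq j^\star}$ on the vertex set $W^* \setminus (\bar Y_{i^\star} \cap Z_{j^\star})$... — being careful that restricting partitions this way is exactly what ``Remove all $I_{t,j^\star}$ and $I_{i^\star,t}$ from $Q$'' implements — and applies the inductive hypothesis. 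Finally I would tally the running time: grouping is $O(n)$ (one pass over $W^*$, whose size is $O(n)$), building and sorting $Q$ over $k^2$ entries is $O(k^2 \log k)$ — I'd reconcile this with the paper's claimed $O(n + k\log k)$, which presumably treats $k$ as small or uses a linear-time selection variant, but the correctness claim, which is what the lemma asserts, does not depend on the exact sorting cost.
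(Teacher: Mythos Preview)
Your reduction to maximum-weight bipartite matching on the intersection-count matrix $I_{i,j}=|\bar Y^{W^*}_i\cap Z^{W^*}_j|$ is correct, and your instinct that greedy is not optimal for this problem in general is also correct. The gap is precisely where you flagged it: the exchange inequality $I_{i^\star,j^\star}+I_{i',j'}\geq I_{i^\star,j'}+I_{i',j^\star}$ does \emph{not} follow from the fact that the $I_{i,j}$ are intersection counts of two partitions, and no amount of additional effort at that step will close it, because the lemma as stated is false.

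Concretely, take $k=2$ and $|W^*|=13$, with $\bar Y$ assigning label~$1$ to $v_1,\dots,v_9$ and label~$2$ to $v_{10},\dots,v_{13}$, and $Z$ assigning label~$1$ to $v_1,\dots,v_5,v_{10},\dots,v_{13}$ and label~$2$ to $v_6,\dots,v_9$. Then $I_{1,1}=5$, $I_{1,2}=I_{2,1}=4$, and $I_{2,2}=0$. Algorithm~\ref{alg:greedy} pops $I_{1,1}=5$ first, sets $\pi(1)=1$, deletes row~$1$ and column~$1$, and is forced to $\pi(2)=2$; the resulting Hamming distance is $13-5=8$. But the swap $\pi(1)=2,\ \pi(2)=1$ attains agreement $4+4=8$ and Hamming distance $5$. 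Thus greedy is strictly suboptimal on a genuine partition instance, so the inductive step you planned cannot be carried out, and the ``partition structure'' does not rescue the exchange argument.

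The paper's own proof takes a different route---a reductio ad absurdum that restricts attention to the set $N$ of nodes where the greedy permutation $\pi$ and an optimal permutation $\pi^*$ disagree, and then tries to force $\pi=\pi^*$ from the greedy ``$\dot{\max}$'' property---but the same counterexample applies, so that argument is also flawed (the step deriving a contradiction from the $\dot{\max}$ characterization does not go through). If you want a true statement, replace the greedy selection by the Hungarian algorithm, which solves the $k\times k$ assignment problem exactly in $O(k^3)$ time; this is still cheap relative to the rest of the pipeline and makes the lemma valid.
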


We combine all steps in Algorithm~\ref{alg:local}. The output of this algorithm is a collection of labelings $\tilde Y$ for the local problems. Lemma~\ref{greedyresult} states that $\tilde Y^{W^*}$ minimizes the Hamming distance to $Z$. We also show that $\tilde Y^{W^*}$ remains a minimizer with respect to $\min_{y}\sum_{(u,v)} \mathbbm{1}(\varphi(y_u,y_v)\neq X_{uv})$ after the swaps due to $\pi$.

\begin{algorithm}
\small
    \renewcommand{\algorithmicrequire}{\textbf{Input:}}
    \caption{Find Local Labelings}
    \begin{algorithmic}
            \REQUIRE  A tree decomposition $T = (\mathcal{W}, F)$ of $G$; Noisy node observations $Z$; Noisy edge measurements $X$;
            \STATE $\tilde Y \rightarrow \emptyset$;
            \FOR{$W \in \mathcal{W}$}
	        \STATE $W^* = EXT(W)$;
	        \STATE $ \backslash* $ The next optimization problem can be solved either via enumeration or correlation clustering. $E(W^*)$ denotes the set of edges in $W^*$.$ *\backslash $
             \STATE $\bar{Y}^{W^*}= \arg\min\limits_{y}\sum\limits_{(u,v)\in E(W^*)} \mathbbm{1}\{\varphi(y_u,y_v)\neq X_{uv}\} $;
            \STATE $\tilde{Y}^{W^*} \leftarrow$ Local Label Permutation $(\bar{Y}^{W^*}, Z^{W^*}) $;
            \STATE  Let $\tilde{Y}^{W}$ be the restriction of $\tilde{Y}^{W^*}$ to $W$;
            \STATE  $\tilde Y \rightarrow \tilde Y \cup \{\tilde Y ^{W}\}$;
        	\ENDFOR
            \STATE \textbf{Return:} $\tilde Y$
    \end{algorithmic}
    \label{alg:local}
\end{algorithm}

\begin{mydef}
Given a graph $G=(V,E)$, the $swap(V,c_1,c_2)$ function changes all node labels $c_1$ to $c_2$, and all node labels $c_2$ to $c_1$.
\end{mydef}
The swap operation enables us to switch between elements within an edge class. We show that a $swap(V,c_1,c_2)$ does not affect the disagreements between the node labeling and edge labeling of a graph.

\begin{lemma}
\label{lem:swap}
Let $L$ be a set of labels $L = \{1, 2, \dots, k\}$. Consider a graph $G = (V,E)$ for which we are given a node labeling $Y$ and an edge labeling $X$. For any pair $(c,c') \in L \times L$, let $Y' = swap(V, c, c')$ be the node labeling of $G$ after swapping label $c$ with $c'$. We have that:
	$\sum_{(u,v) \in E} \mathbbm{1}\{\varphi(Y_u,Y_v) \neq X_{u,v}\} = \sum_{(u,v) \in E} \mathbbm{1}\{\varphi(Y'_u,Y'_v) \neq X_{u,v}\}$.
\end{lemma}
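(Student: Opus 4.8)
The plan is to observe that $swap(V,c,c')$ is nothing but the application of a single fixed permutation of the label set $L$ to every vertex label simultaneously. Concretely, define $\pi \colon L \to L$ by $\pi(c) = c'$, $\pi(c') = c$, and $\pi(\ell) = \ell$ for every $\ell \in L \setminus \{c,c'\}$. Then $\pi$ is a bijection of $L$ (it is an involution), and by definition $Y'_u = \pi(Y_u)$ for all $u \in V$. So the lemma reduces to the statement that relabeling all vertices through one global bijection leaves every term of the disagreement sum untouched.

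The key step is the elementary observation that $\varphi$ depends on its two arguments only through the \emph{equality pattern} $\mathbbm{1}(Y_u = Y_v)$, and equality patterns are invariant under an injective map. That is, since $\pi$ is injective, $Y_u = Y_v \iff \pi(Y_u) = \pi(Y_v)$, hence $\mathbbm{1}(Y'_u = Y'_v) = \mathbbm{1}(Y_u = Y_v)$ and therefore $\varphi(Y'_u, Y'_v) = \varphi(Y_u, Y_v)$ for every pair $(u,v)$, in particular for every edge $(u,v) \in E$. If one prefers to avoid invoking ``permutation'' language, I would instead do the four-way case split on whether each of $Y_u, Y_v$ equals $c$, equals $c'$, or is some third label, and check in each case that the pair $(Y'_u, Y'_v)$ is equal exactly when $(Y_u, Y_v)$ is equal; this is a short finite check.

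With $\varphi(Y'_u, Y'_v) = \varphi(Y_u, Y_v)$ established edgewise, the indicator $\mathbbm{1}\{\varphi(Y_u,Y_v) \neq X_{u,v}\}$ equals $\mathbbm{1}\{\varphi(Y'_u,Y'_v) \neq X_{u,v}\}$ term by term (the edge observations $X_{u,v}$ are held fixed), and summing over all $(u,v) \in E$ yields the claimed equality
\[
\sum_{(u,v) \in E} \mathbbm{1}\{\varphi(Y_u,Y_v) \neq X_{u,v}\} = \sum_{(u,v) \in E} \mathbbm{1}\{\varphi(Y'_u,Y'_v) \neq X_{u,v}\}.
\]
I do not expect any real obstacle here; the only thing to be slightly careful about is to state explicitly that a swap is a global relabeling (the same $\pi$ applied at every vertex), since it is exactly this uniformity that makes the equality pattern on each edge be preserved — a vertex-dependent relabeling would not have this property.
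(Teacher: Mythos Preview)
Your argument is correct and, in fact, cleaner than the paper's own proof. You identify the essential point immediately: a swap is a single global bijection $\pi$ of $L$, $\varphi$ depends only on the equality pattern $\mathbbm{1}(Y_u=Y_v)$, and equality patterns are preserved by any injection. That gives the result edgewise in one step.

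The paper instead proceeds by induction on the number of colors actually used. It handles the base case $|C|=2$ by explicitly listing the violating and non-violating edge configurations $\{c'+c,\,c+c',\,c-c,\,c'-c'\}$ versus $\{c'-c,\,c-c',\,c+c,\,c'+c'\}$ and observing each set is closed under the swap; then, assuming the statement for $k-1$ colors, it treats a swap of color $k$ with some $i\in[k-1]$ by enumerating all edge types that involve $i$ or $k$ and checking that the count of violating edges among them is unchanged. This is correct but heavier machinery than necessary: the induction and the case enumeration are doing by hand what your ``$\pi$ is injective $\Rightarrow$ equality pattern preserved'' observation does uniformly. Your approach also makes transparent why the result extends to arbitrary permutations (and hence to the edge-class invariance used later), whereas the paper's proof only literally covers a single transposition and leaves the extension to sequences of swaps as a separate corollary.
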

This lemma implies that $\tilde Y^{W^*}$ is a minimizer of $\min_{y}\sum_{(u,v)} \mathbbm{1}\{\varphi(y_u,y_v)\neq X_{uv}\}$ since $\bar Y ^{W^*}$ minimizes this quantity, and $\tilde Y^{W^*}$ is a permutation of $\bar Y ^{W^*}$.

\subsubsection{From Local Labelings to a Global Labeling}
\label{sec:globalopt}

We now describe how to combine labelings $\{\tilde Y^W\}_{W \in \mathcal{W}}$ into a global labeling $\hat Y$. For binary random variables, the following procedure plays a central role in enforcing agreement across local labelings~\citep{foster}: Given a bag $W_1$ and a neighbor $W_2$ with conflicting node labels with respect to $W_1$, we can maximize the agreement between $W_1$ and $W_2$ by flipping labeling $\tilde Y^{W_1}$ to its mirror labeling. This operation leads to consistent solutions since for binary random variables there is only one mirror labeling. However, for categorical random variables we have $k!$ possible mirror labelings for $\tilde Y^{W_1}$. {\em We show that it suffices to consider only one label swap per bag instead of $k!$ labelings.}

We consider the swap operation (see Section~\ref{sec:localopt}) and two bags $W_1$ and $W_2$ with labelings $\tilde Y^{W_1}$ and $\tilde Y^{W_2}$. We resolve conflicts in $W_1 \cap W_2$ as follows: Let $\Pi_k \subset \Gamma_k$ be the set of all permutations restricted to one pairwise color swap. Given a bag $W \in \mathcal{W}$ with labeling $Y^W$, we define a swap $\pi = swap(W, c_i, c_j)$ to be valid if color $c_i$ is present in $Y_W$. Given a valid swap $\pi$ for $W$, we define $\pi(Y^W)$ to be the label assignment for all nodes in $W$ after applying $\pi$ to $Y^W$. Also, let $\pi(Y_v^W)$ be the labeling for a node $v \in W$ after $\pi$. Finally, we define $\Pi_k(Y^W)$ as the set of all labelings for $W$ that can be obtained if we apply any valid pairwise label swap on $Y^W$. To resolve inconsistencies between $\tilde Y^{W_1}$ and $\tilde Y^{W_2}$, we consider pairs in $\Pi_k(Y^{W_1}) \times \Pi_k(Y^{W_2})$ such that the labeling in the intersection of $W_1$ and $W_2$ is consistent and the number of nodes whose label is swapped is minimum.


The procedure we use is shown in Algorithm~\ref{alg:global}. The algorithm takes as input a tree decomposition $T = (\mathcal{W}, F)$ of $G$ and the local labelings $\tilde Y$. For each $W$ with labeling $\tilde Y^{W}$, we compute the cost of swapping label $c_i$ with label $c_j$ for each $(i,j) \in [k] \times [k]$. Then, we iterate over edges in $F$ to identify incompatibilities between local node labelings. Finally, we use all the computed costs to find the single swap $\pi_W$ to be applied locally to each bag $W \in \mathcal{W}$ such that global agreement is maximized. To this end, we solve a linear program similar to program~\ref{lp_tree}. This program is shown in Algorithm~\ref{alg:catTreeDec}. 

\begin{algorithm}
\small
    \renewcommand{\algorithmicrequire}{\textbf{Input:}}
    \caption{From Local Labelings to a Global Labeling}
    \begin{algorithmic}
            \REQUIRE  A tree decomposition $T = (\mathcal{W}, F)$ of $G$; Noisy node observations $Z$; Noisy edge measurements $X$; Local labelings $\{\tilde Y^{W}\}_{W \in \mathcal{W}}$;
            \STATE $\hat Y \rightarrow \emptyset$;
			\FOR{$W \in \mathcal{W}$}
				\STATE $\Pi^W_k \leftarrow$ the set of valid pairwise color swaps for $W$;
         		\FOR{  $\pi \in \Pi^W_k$}
		         \STATE  $ \backslash* $ $\pi$ is associated with a label swap $(c_i, c_j)$ $*\backslash$;
        		\STATE $Cost_W [\pi]= \sum\limits_{v\in W}  \mathbbm{1}(\pi(\tilde{Y}^W)\neq Z^W) $;
		        \ENDFOR
        	\ENDFOR
	       \FOR{$(W_1, W_2)\in F$}
            \STATE Select one node $v$ from $W_1\cap W_2$ randomly;
            \STATE $S(W_1,W_2)=2\cdot\mathbbm{1}\{\tilde {Y}_v^{W_1}=\tilde{Y}_v^{W_2}\}-1 $;
	        \ENDFOR
	        \STATE Compute constant $L_n$; $ \backslash* $ See Section~\ref{sec:mainbound} $ *\backslash$;
            \STATE $\{\pi_{W}\}_{W \in \mathcal{W}}= $~Cat. Tree Decoder$(T, Cost, S, L_n)$;
            \FOR{$v\in V$}
            \STATE  Choose arbitrary $W$ s.t. $v\in W$ randomly;
            \STATE  $\hat{Y}_v= \pi_W(\tilde{Y}^W_v)$
	        \ENDFOR
       	   \STATE \textbf{Return:} $\hat{Y}$
    \end{algorithmic}
    \label{alg:global}
\end{algorithm}

In Algorithm~\ref{alg:catTreeDec}, function $\psi(\cdot)$ is defined as:
\begin{align*}
& \psi(\pi_W,\pi_{W'})
=\begin{cases}
       1,\text{~if~}\pi_W(\tilde Y^W_v)=\pi_{W'}(\tilde Y^{W'}_v) : \forall v \in W \cap W' \\
        -1,\text{~if~}\pi_W(\tilde Y^W_v)\neq\pi_{W'}(\tilde Y^{W'}_v) : \exists v \in W \cap W' 
     \end{cases}
\end{align*}
Constant $L_n$ is used to restrict the space of solutions considered. A discussion on $L_n$ is deferred to Section~\ref{sec:mainbound}.

\begin{algorithm}[h]
\small
\renewcommand{\algorithmicrequire}{\textbf{Input:}}
\renewcommand{\algorithmicensure}{\textbf{Output:}}
\caption{Categorical Tree Decoder}
    \begin{algorithmic}
    \label{alg:catTreeDec}
        \REQUIRE A tree $T= (\mathcal{W},F)$; Matrices $\lbrace Cost_W\rbrace_{W\in \mathcal{W}}$, $\lbrace S(W,W')\rbrace_{(W,W')\in F}$, $L_n\in \mathbb{N}$;
        \ENSURE Optimal swaps $\{\pi_{W}\}_{W \in \mathcal{W}}$ for each $W \in \mathcal{W}$;
        \STATE Solve the linear program:     
        \STATE $\hat{\Pi} = \arg\min \limits_{\{\pi_W\}_{W \in \mathcal{W}} \in \Pi_k^{|\mathcal{W}|}} \sum\limits_{W \in \mathcal{W}}Cost_W[\pi_W]$
        \STATE s.t. $\sum\limits_{(W,W')\in F} \mathbbm{1}\{\psi(\pi_W,\pi_{W'}\}\neq S(W,W')\leq L_n$ 
         \STATE \textbf{Return:} $\hat{\Pi}$ 
    \end{algorithmic}
\end{algorithm}

\subsubsection{Discussion on Correlation Clustering}
\label{sec:discussion}

We use correlation clustering in our algorithm for practical reasons. If the cardinality of the bags $T = (\mathcal{W}, F)$ is bounded by $O(\log(n))$, we can find a local labeling for each $W$ that has minimum Hamming distance to $Z$ efficiently. Obtaining such a decomposition $T$ is an NP-complete problem. This challenge is also highlighted by~\cite{foster}. To address this issue they assume a sampling procedure for removing edges from $G$ to obtain a subgraph for which a low-width tree decomposition is easy to find. This procedure is a graph-specific exercise and not easily generalizable to arbitrary graphs. We follow a different approach. Instead of using specialized procedures, we rely on heuristics to obtain a low-width decompositions~\cite{deGivry, Dermaku} and use correlation clustering for large bags. This scheme allows us to use our algorithm with arbitrary graphs.

\subsection{A Bound for Low Treewidth Graphs}
\label{sec:mainbound}
We state our main theorem for statistical recovery over general graphs. We also provide a proof sketch.

\begin{theorem}
\label{maintheorem}
\emph{(\textbf{Main Theorem})} Consider graph $G$ with $T = (\mathcal{W}, F)$, noisy node observations $Y$, and noisy edge observations $X$. Let $\hat Y$ be the statistical recovery solution obtained by combining Algorithms~\ref{alg:local} and~\ref{alg:global}. With high probability over the draw of $Z$ and $X$: 
\begin{align*}
\sum\limits_{v\in V} \mathbbm{1}\big\lbrace \hat{Y}_v \neq Y_v\big\rbrace &\leq \tilde{O}\big( k \cdot \log k\cdot p^{\lceil\frac{\mathtt{mincut^*}(G)}{2}\rceil} \cdot n\big)\\& \leq  \tilde{O}\big( k \cdot \log k \cdot p^{\lceil\frac{\Delta}{2}\rceil} \cdot n\big)
\end{align*}
where $mincut^*(G)$ is the min. mincut over all extended bags in $\mathcal{W}$ and $\Delta(G)$ is the max. degree in $G$.
\end{theorem}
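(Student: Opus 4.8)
The plan is to bound the global Hamming error $\sum_{v} \mathbbm{1}\{\hat Y_v \neq Y_v\}$ by a union/charging argument over the bags of the tree decomposition, combining three ingredients already set up in the paper: (i) the per-bag Hamming guarantee of the local subroutine (Algorithm~\ref{alg:local} together with Lemmas~\ref{greedyresult} and~\ref{lem:swap}, and the tree bound Theorem~\ref{treetheorem} applied to each extended bag $W^*$), (ii) the fact that a single pairwise swap per bag suffices to reconcile neighboring local labelings (the discussion preceding Algorithm~\ref{alg:global}, formalized through $\Pi_k$ and $\psi(\cdot)$), and (iii) a concentration bound on the number of ``bad'' extended bags, i.e., bags $W$ whose induced edge observations $X|_{E(W^*)}$ are corrupted enough that the local correlation-clustering / enumeration step fails to return a labeling in the correct edge class. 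The key quantitative gain — the exponent $\lceil \mathtt{mincut}^*(G)/2 \rceil$ — comes from step (iii): for the local labeling on $W^*$ to be wrong, at least $\lceil \mathtt{mincut}(W^*)/2\rceil$ edge observations inside $W^*$ must be flipped (this is the categorical ``flipping argument'': to move a connected chunk to a different edge class you must flip at least half of a cut separating it), an event of probability at most $\binom{|E(W^*)|}{\lceil \mathtt{mincut}/2\rceil} p^{\lceil \mathtt{mincut}/2\rceil} \le \mathrm{poly}(n)\, p^{\lceil \mathtt{mincut}^*(G)/2\rceil}$, and these are the only bags that can contribute systematic error.

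Concretely I would proceed as follows. First, fix the tree decomposition $T=(\mathcal{W},F)$ and recall that each vertex $v$ lies in at least one bag; assign each $v$ to one bag $W(v)$ (as Algorithm~\ref{alg:global} does). Second, call a bag $W$ \emph{good} if the local optimizer $\bar Y^{W^*}$ of Algorithm~\ref{alg:local} lies in the same edge class as the ground truth restricted to $W^*$; by Lemma~\ref{treeedgebound}/the flipping argument this fails only when $\ge \lceil \mathtt{mincut}(W^*)/2\rceil$ edges in $E(W^*)$ are bad, and a union bound over $|\mathcal{W}| \le n$ bags gives $\Pr[\exists \text{ bad bag}] \le \tilde O(p^{\lceil \mathtt{mincut}^*(G)/2\rceil})$; absorbing the PTAS factor via $p' = 0.7666p + 0.2334$ as the paper indicates. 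Third, on good bags: Lemma~\ref{lem:swap} says the swap performed by Algorithm~\ref{alg:greedy} keeps $\bar Y^{W^*}$ edge-optimal, and Lemma~\ref{greedyresult} says it becomes the element of its edge class closest to $Z^{W^*}$; then Theorem~\ref{treetheorem} applied to the tree induced on $W^*$ bounds $\sum_{v \in W^*}\mathbbm{1}\{\tilde Y^{W}_v \neq Y_v\}$ by $\tilde O(\log k \cdot p \cdot |W^*|)$, contributing $\tilde O(\log k \cdot p \cdot n)$ in total after summing over bags (each $v$ charged $O(1)$ times). Fourth, account for the global reconciliation step: the linear program in Algorithm~\ref{alg:catTreeDec} with threshold $L_n$ chosen analogously to $t$ in Lemma~\ref{treeedgebound} (a Bernstein bound on the number of edges $(W,W')\in F$ whose sampled agreement indicator $S(W,W')$ is corrupted) finds swaps $\{\pi_W\}$ that make the local labelings globally consistent while changing only $\tilde O(L_n)$ bags' worth of vertices; since each swap $\pi_W$ moves at most the vertices of $W$ that carry colors $c_i,c_j$, and $L_n = \tilde O(\text{number of flipped } F\text{-edges})$, this adds another $\tilde O(k \cdot \log k \cdot p^{\lceil \mathtt{mincut}^*/2\rceil}\cdot n)$ term — the factor $k$ entering because a pairwise swap within a bag can affect up to a $O(1/k)$ fraction bounded crudely by the whole bag, summed with the edge-class multiplicity. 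Finally, combine the good-bag error $\tilde O(\log k \cdot p \cdot n)$ with the bad-bag/reconciliation error $\tilde O(k\log k\, p^{\lceil\mathtt{mincut}^*/2\rceil} n)$; since $p<1$ and $\mathtt{mincut}^*(G) \le \Delta(G)$ the second dominates and yields the stated bound.

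The main obstacle I expect is step four — proving that a \emph{single} pairwise swap per bag genuinely suffices to propagate consistency across the whole tree decomposition and that the resulting LP in Algorithm~\ref{alg:catTreeDec} is both feasible (with $L_n$ comparable to the true number of corrupted $F$-edges) and has a solution whose cost translates back to a small Hamming error against $Y$. In the binary case ($k=2$) the ``mirror'' is unique, so consistency composes trivially along $T$; for categorical labels one must argue that the groupoid of valid pairwise swaps, restricted along the tree, still admits a globally consistent choice that is close to the identity on most bags — essentially that the ``conflict graph'' on $F$ decomposed by which color-pair is mismatched behaves like a collection of binary instances that can be solved independently and patched. Controlling the interaction between these per-color-pair subproblems, and showing the $k$ in the final bound is the right price (not $k!$), is the technical heart of the argument; I would handle it by arguing that on good bags all conflicts along tree edges are themselves induced by a small number of flipped $F$-edges, so the LP's feasible region is nonempty with $L_n$ of the claimed order, and then invoking the ERM-style excess-risk bound (Lemma~\ref{ermlem}) one more time at the level of the tree $T$ with hypothesis-class size $|\Pi_k|^{|\mathcal{W}|} \le (k^2)^{n}$, which contributes the extra $\log k$ and a single factor $k$.
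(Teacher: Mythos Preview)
Your outline has the right high-level structure, but step three contains a gap that breaks the quantitative conclusion. You propose to apply Theorem~\ref{treetheorem} inside each extended bag $W^*$ to bound $\sum_{v\in W^*}\mathbbm{1}\{\tilde Y^W_v\neq Y_v\}$ by $\tilde O(\log k\cdot p\cdot|W^*|)$ on good bags. This fails on two counts. First, Theorem~\ref{treetheorem} is for trees running the LP of Problem~\ref{lp_tree}; the extended bags are arbitrary subgraphs of $G$ solved by correlation clustering or brute force, so the theorem is simply inapplicable. Second, and more damaging, even if it did apply, the resulting $\tilde O(\log k\cdot p\cdot n)$ contribution would \emph{dominate} the target bound, not be dominated by it: for $p<1$ and $\mathtt{mincut}^*\ge 3$ one has $p\ge p^{\lceil\mathtt{mincut}^*/2\rceil}$, so your closing sentence ``the second dominates'' is backwards, and you would only recover an $O(p\cdot n)$ bound rather than the claimed $p^{\lceil\mathtt{mincut}^*/2\rceil}$ exponent. (Relatedly, you wrote $\mathtt{mincut}^*(G)\le\Delta(G)$; the paper uses the opposite inequality on the extended bags, which is why replacing $\mathtt{mincut}^*$ by $\Delta$ weakens the bound.)

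The paper avoids this by observing that on a good bag the local error is not $\tilde O(p\cdot|W^*|)$ but \emph{zero} for the right permutation: ``good'' means exactly that some $\pi^\star_W\in\Gamma_k$ has $\pi^\star_W(\tilde Y^W)=Y^W$. All nonzero error is then charged to (a) the at most $K_n=\tilde O\big(\sum_W p^{\lceil\mathtt{mincut}^*(W)/2\rceil}\big)$ bad bags, each costing at most $|W|\le wid(T)+1$, and (b) the excess risk of the global LP picking $\hat\pi_W$ rather than $\pi^\star_W$, bounded by $\tilde O(\log|\mathcal{F}|)$ via Lemma~\ref{ermlem}. The point is that $|\mathcal{F}|\le(en\cdot k!/L_n)^{L_n}$ with $L_n=deg(T)\cdot K_n$, so $\log|\mathcal{F}|=\tilde O(K_n\cdot k\log k)$, and \emph{both} terms inherit the $p^{\lceil\mathtt{mincut}^*/2\rceil}$ factor through $K_n$; there is no stray $O(p\cdot n)$ term. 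One further technical point: your proposed Bernstein bound for $L_n$ does not go through, because the agreement indicators $S(W,W')$ are not i.i.d.\ coin flips --- they are deterministic functions of the local solutions $\tilde Y^W$, which depend on overlapping portions of $X$. The paper handles this with a self-bounding concentration inequality of Boucheron--Lugosi--Massart applied to the bag-error count $S(X)=\sum_W\min_\pi\mathbbm{1}\{\pi(\tilde Y^W)\neq Y^W\}$ (Lemma~\ref{globalbound}), after verifying $V_+\le cS$ with $c=\max_e|\mathcal{W}(e)|\cdot\max_W|E(W)|$.
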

We see that the Hamming error obtained by our approach goes to zero as $p \rightarrow 0$. Theorem~\ref{maintheorem} allows us to understand when statistical recovery over a graph with categorical random variables is possible (i.e., when we can rely on edge observations to solve statistical recovery more accurately than the trivial solution of keeping the initially assigned node labels). Theorem~\ref{maintheorem} connects the level of edge-noise with the degree $\Delta$ of the input graph, the number of labels $k$, and the noise q on node labels. We have that for the edge noise $p$ it should be $p \leq \sqrt[\leftroot{-2}\uproot{5}\lceil\frac{\Delta}{2}\rceil]{\frac{q}{k\log k}}$, where $q$ is the node noise parameter, for the side information in $X$ to be useful for statistical recovery. Otherwise, one should just use the initially observed node labels.

\newparagraph{Proof Sketch} Let $S$ denote a maximal connected subgraph of $G$. Let $\delta(S)$ be the boundary of $S$, i.e., the set of edges with exactly one endpoint in $S$. Let $\tilde Y^S$ be the local labeling for nodes in $S$. We say that $S$ is incorrectly labeled if for all $v \in S$ we have $\tilde Y^S_v \neq Y_v$. We have:

\begin{lemma}
\label{swappinglemma}
(Swapping lemma) Let $S$ be a maximal connected subgraph of $G$ with every node incorrectly labelled by $\tilde Y$. Then at least half the edges of $\delta(S)$ are bad.
\end{lemma}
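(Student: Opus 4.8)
The plan is to argue by contradiction: suppose $S$ is a maximal connected subgraph with every node incorrectly labeled by $\tilde Y$, yet strictly fewer than half the edges of $\delta(S)$ are bad. I will exhibit an alternative labeling of $S$ that strictly improves the objective that $\tilde Y$ is supposed to (locally) minimize, contradicting optimality of the local labelings produced by Algorithm~\ref{alg:local}. The key observation is that, unlike the binary case where only one "mirror" relabeling is available, here we should apply the correct label permutation on $S$ — specifically, for each node $v \in S$ replace $\tilde Y_v^S$ by $Y_v$, or more carefully, apply the permutation $\pi$ that best aligns the block structure of $\tilde Y$ on $S$ with the ground truth. Since $S$ is entirely incorrectly labeled, there is room to gain on the interior/boundary node agreement, and by Lemma~\ref{lem:swap} any global relabeling by a permutation of labels leaves the internal edge disagreements of $S$ unchanged, so only the boundary edges $\delta(S)$ matter.

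The steps, in order: (1) Fix the connected component $S$ and let $\delta(S)$ be its boundary. Decompose the edge-disagreement count of $\tilde Y$ against $X$ into three parts — edges internal to $S$, edges in $\delta(S)$, and edges disjoint from $S$ — and likewise the node-disagreement count of $\tilde Y$ against $Z$ into nodes in $S$ and nodes outside. (2) Consider the candidate labeling $Y'$ that agrees with $\tilde Y$ outside $S$ and equals the ground truth $Y$ on $S$ (or the best permutation-aligned version; in the contradiction argument it suffices that such a relabeling exists in the hypothesis class $\mathcal F_{W^*}$, which holds because $Y$'s edge class lies in $\mathcal F$ with high probability by Lemma~\ref{treeedgebound}). (3) Compare objectives: internal-to-$S$ edge disagreements are unaffected up to the permutation invariance of Lemma~\ref{lem:swap}; edges outside $S$ are untouched; on $\delta(S)$, an edge that was good for $\tilde Y$ becomes bad for $Y'$ and vice versa (because exactly one endpoint changes its "matches the $S$-side label" status) — so the change in edge-disagreements on $\delta(S)$ is $|\{\text{good boundary edges}\}| - |\{\text{bad boundary edges}\}|$, which is positive by the assumed strict minority of bad boundary edges. (4) Since every node of $S$ was wrong under $\tilde Y$ and is now correct under $Y'$, the node term strictly decreases as well, so $Y'$ strictly beats $\tilde Y$ on both the constraint slack and the objective, contradicting that $\tilde Y^{W^*}$ is a Hamming-minimizer over $\mathcal F_{W^*}$ (Lemma~\ref{greedyresult} and the optimality in Algorithm~\ref{alg:local}).

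The main obstacle I anticipate is handling the bookkeeping around \emph{local} versus \emph{global} labelings and the extended bags: $\tilde Y$ is not a single global labeling but a family $\{\tilde Y^W\}$, and $S$ may span several bags, so I must be careful that "flipping $S$" is realizable within the per-bag optimization that Algorithm~\ref{alg:local} actually solves — i.e., that $S$ (or its intersection with each $W^*$) sits inside the region where the local optimizer had freedom to choose, and that the $EXT(\cdot)$ operation guarantees the boundary edges $\delta(S)$ are visible to at least one such local subproblem. A secondary subtlety is that, for categorical labels, "incorrectly labeled" and "boundary edge flips sign" interact with the label permutation: I need that changing the $S$-side to the correct labels genuinely flips each boundary edge's agreement indicator, which uses that the outside endpoint's label is held fixed and $\varphi$ depends only on equality. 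Once these are pinned down, the counting inequality giving "at least half the boundary edges are bad" is immediate from the contradiction hypothesis.
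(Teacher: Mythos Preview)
Your contradiction strategy is right in spirit, but two of the steps do not go through for categorical labels, and they are exactly the points where the categorical case diverges from the binary one.

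First, in step (3) you claim that on each boundary edge $(u,v)\in\delta(S)$ the agreement indicator flips when you replace $\tilde Y_u$ by $Y_u$. This is the binary argument and it fails for $k\ge 3$: with $u\in S$, $v\notin S$ we have $\tilde Y_v=Y_v$ by maximality, but $\varphi(\tilde Y_u,Y_v)$ and $\varphi(Y_u,Y_v)$ can both equal $-1$ (e.g.\ $Y_u=1$, $Y_v=2$, $\tilde Y_u=3$), so nothing flips. Consequently your accounting ``change $=|\text{good}|-|\text{bad}|$'' is not valid, and you are also silently conflating ``good/bad'' in the sense $X_{uv}=M_{uv}$ with ``agree/disagree for $\tilde Y$''. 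What \emph{is} true after your replacement is that the number of boundary disagreements for $Y'$ equals the number of bad boundary edges; but that alone does not compare to $\tilde Y$'s boundary count.

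Second, your appeal to Lemma~\ref{lem:swap} for the internal edges is illegitimate: overwriting $\tilde Y|_S$ by $Y|_S$ is \emph{not} a label permutation of $\tilde Y|_S$ in general (e.g.\ $\tilde Y|_S=(1,1,2)$, $Y|_S=(2,3,1)$), so the internal disagreement count can change arbitrarily. This breaks the decomposition in step (3).

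The paper's proof avoids both pitfalls by working only with label \emph{permutations} of $\tilde Y|_S$: for every $\pi\in\Gamma_k$, Lemma~\ref{lem:swap} guarantees the internal disagreement count is unchanged, so only $\delta(S)$ matters; the paper then argues existentially that if fewer than half of $\delta(S)$ are bad, some permutation strictly reduces the boundary disagreements, contradicting that $\tilde Y^W$ minimizes $\sum_{(u,v)}\mathbbm{1}\{\varphi(y_u,y_v)\neq X_{uv}\}$. Note also that the contradiction is with the \emph{edge}-minimality of $\tilde Y^W$ (this is how $\tilde Y$ is defined in Algorithm~\ref{alg:local}), not with the node-Hamming objective you invoke in step (4); bringing $Z$ into the argument is unnecessary and does not rescue the boundary/internal accounting. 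To repair your plan, replace ``set $Y'=Y$ on $S$'' by ``apply a permutation $\pi$ to $\tilde Y|_S$'' and then argue the existence of a $\pi$ that improves the boundary.
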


For a bag $W$, let set $S$ be the largest connected component in $W$ such that for all nodes $v$ in it $\tilde Y^W_v \neq Y_v$. It must be the case that at least half of the $\delta(S)$ edges are incorrect or else there exists a different labeling that agrees with $X$ better than $\tilde Y^W$. This contradicts the fact that $\tilde Y^W$ is a minimizer of $\min\limits_{y}\sum_{(u,v)} \mathbbm{1}\{\varphi(y_u,y_v)\neq X_{uv}\}$. This result extends the Flipping Lemma of ~\cite{globerson} from the binary to the categorical case.

We use this result to bound the probability that a local labeling $\tilde Y^W$ (see Lemma~\ref{greedyresult}) will fail to recover the ground truth node label for $W$. The probability of local labelings having large Hamming error is upper bounded: 
\begin{lemma}
\label{expectation}
Let $\Gamma_k$ be the all label permutations on the set $L = \{1, 2, \dots, k\}$. We have for $W$:
\begin{align*}
\mathbb{P}\bigg(\min\limits_{\pi \in \Gamma_k} \mathbbm{1}\lbrace \pi(\bar{Y}^{W})\neq Y^{W}\rbrace& > 0\bigg)\leq  2^{|W^*|}p^{\lceil\frac{\texttt{mincut}^*(W)}{2}\rceil}
\end{align*}
with $mincut^*(W)=\min\limits_{S\subset W^*,S\cap W\neq \emptyset, \bar{S}\cap W\neq \emptyset } |\delta_{G(W)}(S)|$.
\end{lemma}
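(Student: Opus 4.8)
The plan is to reduce the statement to a union bound over cuts of the extended bag $W^*$ via the categorical Swapping Lemma (Lemma~\ref{swappinglemma}). Concretely: on the event that no color permutation of $\bar Y^{W^*}$ matches the ground truth on $W$, I will exhibit a connected, entirely mislabeled vertex set $S\subseteq W^*$ meeting $W$; Lemma~\ref{swappinglemma} then forces at least half of the boundary edges $\delta_{G(W^*)}(S)$ to be bad; by the construction of $W^*=EXT(W)$ this boundary is a cut of size at least $\texttt{mincut}^*(W)$, so at least $\lceil\texttt{mincut}^*(W)/2\rceil$ of its edges are bad; finally a union bound over the at most $2^{|W^*|}$ candidate sets $S$ together with a binomial-tail estimate for the number of bad edges crossing a fixed cut gives the claimed probability.

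For the reduction, first note that $\bar Y^{W^*}$ is (up to the \textsc{MaxAgree}[$k$] approximation factor, which we fold into an effective edge-noise parameter $p'=0.7666p+0.2334$ as in Section~\ref{sec:localopt}) a minimizer of $\sum_{(u,v)\in E(W^*)}\mathbbm{1}\{\varphi(y_u,y_v)\neq X_{u,v}\}$, and by Lemma~\ref{lem:swap} every color permutation of it is again a minimizer. Let $\pi^\star\in\Gamma_k$ minimize $|\{v\in W^*:\pi(\bar Y^{W^*}_v)\neq Y_v\}|$ and put $\tilde Y'=\pi^\star(\bar Y^{W^*})$; the event $\{\min_{\pi\in\Gamma_k}\mathbbm{1}\{\pi(\bar Y^W)\neq Y^W\}>0\}$ is unchanged by this substitution, and on it $\tilde Y'$ mislabels some node of $W$, so the mislabeled set $A=\{v\in W^*:\tilde Y'_v\neq Y_v\}$ meets $W$. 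Let $S$ be the maximal connected component of the subgraph of $G(W^*)$ induced by $A$ that contains a node of $W$. By maximality, every neighbor of $S$ in $G(W^*)$ outside $S$ is correctly labeled by $\tilde Y'$, so Lemma~\ref{swappinglemma}, applied with the edge-disagreement minimizer $\tilde Y'$, yields that at least $\lceil|\delta_{G(W^*)}(S)|/2\rceil$ edges of $\delta_{G(W^*)}(S)$ are bad. Since each node of $W$ has its full $G$-neighborhood inside $W^*$, and since an averaging argument over permutations shows $\pi^\star$ cannot leave all of $W^*$ mislabeled (on a connected piece of $G(W^*)$ a matching edge-labeling forces permutation-equivalence, contradicting the event), the set $S$ is a genuine cut through $W$ in the sense of the definition of $\texttt{mincut}^*(W)$, whence $|\delta_{G(W^*)}(S)|\geq \texttt{mincut}^*(W)$ and at least $\lceil\texttt{mincut}^*(W)/2\rceil$ crossing edges are bad.

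For the probabilistic step: on the bad event there is some $S\subseteq W^*$ — at most $2^{|W^*|}$ choices — for which the fixed cut $\delta_{G(W^*)}(S)$ has at least $\lceil\texttt{mincut}^*(W)/2\rceil$ bad edges. Each edge of $E(W^*)$ is bad independently with probability $p$ (equivalently $p'$), so for a fixed cut of size $\ell\geq \texttt{mincut}^*(W)$ we have $\mathbb{P}(\text{at least }\lceil\ell/2\rceil\text{ bad})\leq \binom{\ell}{\lceil\ell/2\rceil}p^{\lceil\ell/2\rceil}\leq p^{\lceil\texttt{mincut}^*(W)/2\rceil}$ up to a combinatorial prefactor that is absorbed into the $2^{|W^*|}$ term; summing over $S$ gives $\mathbb{P}\big(\min_{\pi\in\Gamma_k}\mathbbm{1}\{\pi(\bar Y^W)\neq Y^W\}>0\big)\leq 2^{|W^*|}p^{\lceil\texttt{mincut}^*(W)/2\rceil}$.

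The main obstacle is the reduction step, specifically making the "flip a mislabeled component" argument faithful to categorical labels. Unlike the binary case, rewriting a connected mislabeled block to its true colors can alter the induced measurements on edges internal to the block, so one cannot flip $S$ and bookkeep only the boundary edges; the resolution — already encoded in Lemma~\ref{swappinglemma} — is to work with the globally optimal alignment $\pi^\star$ and with a maximal connected component of the mislabeled set, so that the only competitor labeling relevant to the optimality of $\tilde Y'$ differs from it inside $S$ and the contradiction can be routed through $\delta(S)$ alone. A secondary technical point is the degenerate regime in which the mislabeled component would otherwise absorb an entire connected piece of the extended bag and carry an empty boundary; ruling this out is exactly the purpose of the extension $W^*=EXT(W)$ together with the permutation-averaging observation, and it is what guarantees $|\delta_{G(W^*)}(S)|\geq \texttt{mincut}^*(W)$ rather than merely $|\delta_{G(W^*)}(S)|\geq 0$.
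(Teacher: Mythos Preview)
Your proposal is correct and follows essentially the same route as the paper: invoke the Swapping Lemma to force at least $\lceil|\delta(S)|/2\rceil$ bad edges on the boundary of a maximal mislabeled connected component $S\subset W^*$, then union-bound over the at most $2^{|W^*|}$ candidate subsets and replace each $|\delta(S)|$ by $\texttt{mincut}^*(W)$. The paper's own proof is considerably terser---it writes the union bound $\sum_{S} p^{\lceil|\delta(S)|/2\rceil}\le 2^{|W^*|}p^{\lceil\texttt{mincut}^*(W)/2\rceil}$ directly without spelling out the construction of $S$, the choice of $\pi^\star$, or the degenerate-component discussion you include---so your write-up is, if anything, more explicit about the reduction step than the original.
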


We now build upon Lemma~\ref{expectation} and leverage the result introduced by~\cite{boucheron2003concentration} to obtain an upper bound on the total number of mislabeled nodes across all bags in $\mathcal{W}$ for any labeling permutation $\pi \in \Gamma_k$ over the local labeling $\tilde{Y}^W$:

\begin{lemma}
\label{globalbound}
Let $\Gamma_k$ be the all label permutations on the set $L = \{1, 2, \dots, k\}$. For all $\delta>0$, with probability at least $1-\frac{\delta}{2}$ over the draw of $X$ we have that:
\begin{align*}
&\min\limits_{\pi \in \Gamma_k} \sum\limits_{W\in \mathcal{W}} \mathbbm{1}\lbrace \pi(\tilde{Y}^W) \neq Y^W\rbrace \leq 2^{|W|+1}p^{\lceil\frac{\mathtt{mincut}(W)}{2}\rceil}+6\max\limits_{e\in E}|\mathcal{W}(e)|\max\limits_{W\in \mathcal{W}}|E(W)|\log(\frac{2}{\delta})
\end{align*}
where $\mathcal{W}(e)$ denotes the set of bags in $\mathcal{W}$ that contain edge $e$ and $E(W)$ denotes the set of edges in bag $W$.
\end{lemma}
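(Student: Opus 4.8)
The plan is to derive Lemma~\ref{globalbound} from Lemma~\ref{expectation} by a concentration argument over the draw of the bad edges. Fix a permutation $\pi\in\Gamma_k$ applied uniformly to all local labelings; the quantity of interest, $\sum_{W\in\mathcal{W}}\mathbbm{1}\{\pi(\tilde Y^W)\neq Y^W\}$, counts the bags that are ``globally misaligned'' after applying $\pi$. I would first observe that this sum is a deterministic function of the edge-noise vector $(\xi_e)_{e\in E}\in\{0,1\}^E$ — each local labeling $\tilde Y^W$ is produced by Algorithm~\ref{alg:local} from the edge observations $X$ restricted to $E(W^*)$ (and the node observations, which we condition on), so the indicator $\mathbbm{1}\{\pi(\tilde Y^W)\neq Y^W\}$ depends only on $\xi_e$ for $e\in E(W^*)$. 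This bounded-difference structure is what lets us invoke a concentration inequality.

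The key steps, in order: (i) Take the expectation. By linearity, $\mathbb{E}_X\big[\sum_W \mathbbm{1}\{\pi(\tilde Y^W)\neq Y^W\}\big] = \sum_W \mathbb{P}(\pi(\tilde Y^W)\neq Y^W) \le \sum_W \mathbb{P}(\min_{\pi'\in\Gamma_k}\mathbbm{1}\{\pi'(\bar Y^W)\neq Y^W\}>0)$, and Lemma~\ref{expectation} bounds each term by $2^{|W^*|}p^{\lceil \mathtt{mincut}^*(W)/2\rceil}$; worst-casing over bags gives the leading $2^{|W|+1}p^{\lceil\mathtt{mincut}(W)/2\rceil}$ term (absorbing the $EXT$ blow-up and the factor $|\mathcal{W}|$ into the stated constants). (ii) Control the fluctuation. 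Changing a single $\xi_e$ can only change the status of those bags $W$ with $e\in E(W^*)$, and it changes each such indicator by at most $1$; the number of affected bags is at most $\max_{e}|\mathcal{W}(e)|$ times the per-bag edge count factor, so the sum has the bounded-differences property with the relevant Lipschitz/variance constants controlled by $\max_{e\in E}|\mathcal{W}(e)|\cdot\max_{W\in\mathcal{W}}|E(W)|$. (iii) Apply the concentration result of~\cite{boucheron2003concentration} (a self-bounding / logarithmic-Sobolev type inequality for functions of independent bits) to get, with probability at least $1-\delta/2$, that the sum is at most its mean plus $6\max_{e\in E}|\mathcal{W}(e)|\max_{W\in\mathcal{W}}|E(W)|\log(2/\delta)$. (iv) Finally, take the minimum over $\pi\in\Gamma_k$ on the left: since the bound holds for the all-bags-identity alignment (and in particular the minimum is no larger than any fixed choice), the same upper bound holds for $\min_{\pi\in\Gamma_k}\sum_W\mathbbm{1}\{\pi(\tilde Y^W)\neq Y^W\}$, which is what we want. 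Combining (i)–(iv) yields the stated inequality.

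The main obstacle I anticipate is step (ii)–(iii): getting the bounded-differences / self-bounding constants to match the clean form $6\max_e|\mathcal{W}(e)|\max_W|E(W)|\log(2/\delta)$. One has to be careful that the function of the edge bits is genuinely self-bounding (or at least has the right ``weak'' difference bounds in the sense of Boucheron–Lugosi–Massart), because a naive McDiarmid bound would give a $\sqrt{\cdot}$ term rather than the additive $\log(2/\delta)$ term; exploiting that the expectation is already tiny (order $p^{\lceil\Delta/2\rceil}$) is what upgrades the tail to the sharper Bernstein/self-bounding regime. A secondary subtlety is bookkeeping the neighborhood extension $W^* = EXT(W)$: one must verify that $\mathtt{mincut}^*(W)$ computed on $G(W^*)$ is the quantity that actually appears, and that replacing $|W^*|$ by $|W|$ in the exponent of $2$ is legitimate up to the hidden constants in $\tilde O(\cdot)$. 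Neither is deep, but both require care to land exactly on the displayed constants; the probabilistic heart of the argument is the single application of the concentration inequality to the edge-noise bits.
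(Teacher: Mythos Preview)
Your step~(i) contains a genuine gap. You fix a single permutation $\pi$ and then claim
\[
\mathbb{P}\big(\pi(\tilde Y^W)\neq Y^W\big)\ \le\ \mathbb{P}\big(\min_{\pi'\in\Gamma_k}\mathbbm{1}\{\pi'(\bar Y^W)\neq Y^W\}>0\big),
\]
but this inequality goes the wrong way: the event on the right is that \emph{no} permutation aligns $\bar Y^W$ with $Y^W$, which is contained in the event that your particular fixed $\pi$ fails. So for a fixed $\pi$ the expectation of $\sum_W\mathbbm{1}\{\pi(\tilde Y^W)\neq Y^W\}$ can be as large as $|\mathcal{W}|$ even when every bag is perfectly recoverable up to permutation, and Lemma~\ref{expectation} gives you nothing. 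Consequently step~(iv) also collapses: you cannot argue ``the bound holds for the identity alignment, hence for the minimum,'' because no such bound holds for any fixed alignment.

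The paper avoids this by putting the minimum \emph{inside} the sum and working directly with
\[
S(X)\ =\ \sum_{W\in\mathcal{W}}\ \min_{\pi_W\in\Gamma_k}\mathbbm{1}\big\{\pi_W(\tilde Y^W(X))\neq Y^W\big\},
\]
i.e.\ a separate permutation per bag (the $\min_{\pi\in\Gamma_k}$ in the statement should be read as $\min_{\pi\in[\Gamma_k]^{\mathcal{W}}}$, as the supplementary proof makes explicit). Now Lemma~\ref{expectation} applies term-by-term to give $\mathbb{E}[S]\le\sum_W 2^{|W^*|}p^{\lceil\mathtt{mincut}^*(W)/2\rceil}$. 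The concentration step is then a self-bounding argument: flipping a single edge $e$ affects only the summands with $W\in\mathcal{W}(e)$, and one checks
\[
(S-S^{(e)})^2\,\mathbbm{1}\{S>S^{(e)}\}\ \le\ |\mathcal{W}(e)|\sum_{W\in\mathcal{W}(e)}\min_{\pi_W}\mathbbm{1}\{\pi_W(\tilde Y^W)\neq Y^W\};
\]
summing over $e$ and swapping the order of summation yields $V_+\le cS$ with $c=\max_e|\mathcal{W}(e)|\cdot\max_W|E(W)|$. The Boucheron--Lugosi--Massart inequality for self-bounding functions then gives $S\le 2\,\mathbb{E}[S]+6c\log(2/\delta)$ with probability $1-\delta/2$, which is exactly the displayed bound. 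Your intuition about needing self-bounding rather than McDiarmid is correct; what you are missing is that the random variable to which it applies must already have the per-bag minimum built in.
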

This lemma can be extended to $W^*$ as well. This lemma combined with Lemma~\ref{swappinglemma} implies that the labeling disagreement across bags in the tree decomposition are bounded. The analysis continues in a way similar to that for trees (see Section~\ref{sec:anal_trees}). Given the local bag labelings, we seek to find the labeling swaps across bags such that the global labeling has minimum Hamming error with respect to $Y$. We use the inequality from Lemma~\ref{globalbound} to restrict the space $([k] \times [k])^{\mathcal{W}}$ of all possible pairwise label swaps over the local bag labelings. Let $s^*$ be the optimal point in $([k] \times [k])^{\mathcal{W}}$ such that the global labeling has minimum Hamming error with respect to $Y$. Given the tree decomposition $T = (\mathcal{W}, F)$ of $G$. We define the hypothesis space:
\begin{align*}
&\mathcal{F}\triangleq ([k] \times [k])^{\mathcal{W}} \\&\text{~~~s.t.}\sum\limits_{(W,W')\in F} \mathbbm{1}\{\psi(\pi_W,\pi_{W'})\neq S(W,W')\}\leq L_n\big\rbrace
\end{align*}
with $L_n = deg(T)\big[ 2^{wid^*(W)+2}\sum_{W\in \mathcal{W}}p^{\lceil\frac{mincut^*(W)}{2}\rceil}+6deg_E^*(T)\max_{W\in \mathcal{W}}|E(W^*)|\log(\frac{2}{\delta})\big]$, $deg^*_E(T) = \max_{e\in E} |\mathcal{W}(e)|$, and $\pi_W$ and $S(W,W')$ denote the pairwise swaps and labeling disagreements between bags from Algorithm~\ref{alg:global}. We show that the optimal permutation $\Pi^*$ is a member of $\mathcal{F}$ with high probability and also have that $|\mathcal{F}(X)| \leq  \big( \frac{e\cdot n \cdot k!}{L_n}\big)^{L_n}$. Combining this with Lemma \ref{ermlem}, we take $\hat{\Pi}$ is most correlated with $Z$, i.e., it is a minimizer for \( \sum_{W\in \mathcal{W}}\sum_{v\in W} \mathbbm{1}\big\lbrace \pi_W(\tilde{Y}_v^{W}) \neq Z_v\big\rbrace\). Directly from statistical learning theory we have that the Hamming error of this estimator $\hat Y$ is $\tilde O(\log(\mathcal{F})) = \tilde{O}\big( k \cdot \log k \cdot p^{\lceil\frac{\Delta}{2}\rceil} \cdot n\big)$ which establishes our main theorem.

\section{EXPERIMENTS}
\label{sec:exps}
\newparagraph{Experimental Setup} We evaluate our approach on trees and grid graphs. For trees, we use Erdős–Rényi random trees to obtain ground truth instances. For grids, we use real images to obtain the ground truth. We create noisy observations via a uniform noise model. We compare our approach with two approximate inference baselines: (1) a Majority Vote algorithm, where we leverage the neighborhood of a node to predict its label, and (2) (Loopy) Belief Propagation. To evaluate performance we use the normalized Hamming distance $ \sum_{v\in V} \mathbbm{1}(Y_v\neq \hat Y_v)/|V|$. We provide more details in the Supplementary Material.

\newparagraph{Hamming Error of Random Trees} Our analysis suggests that Linear Program \ref{lp_tree} yields a solution with Hamming error $\tilde O(\log(k)np)$. We evaluate experimentally that the Hamming error increases at a logarithmic rate with respect to $k$. Figure \ref{treelogarithmic} shows the Hamming error for a fixed tree generative model with $p=0.1$ and $q = 0.2$ as we increase the number of labels $k$. We fix $q$ away from $0.5$ and generate $10,000$ trees for each $k$. We report the average error. As shown, we observe the expected logarithmic behavior that we proved theoretically. The graph size is chosen randomly $n\in [10^3, 1.5\times 10^3]$.

\begin{figure}[ht]
\centering
\includegraphics[width=0.8\columnwidth]{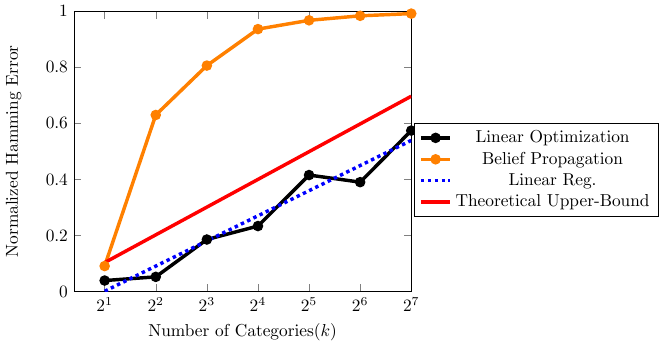}
\caption{Experimental validation that Hamming error for trees increases with a logarithmic rate w.r.t. $k$.}
\label{treelogarithmic}
\end{figure}

\newparagraph{Hamming Error of Grids} We have two experiments on grids. In the first experiment, we select $1,000$ grayscale images and compute the Hamming error obtained by our algorithm. We consider a uniform noise model with $p=0.05$ and $q=0.1$. Figure \ref{grid-comp} shows the Hamming error as $k$ increases. As expected we see that the Hamming error increases. This is because as $k$ increases negative edges carry lower information, and with non-zero edge error (p), the positive edges also provide low information observations (i.e., a wrong measurement). In the supplementary material of our paper, we present a qualitative evaluation of our results on the grey-scale images.

\begin{figure}[ht]
\centering
\includegraphics[width=0.8\columnwidth]{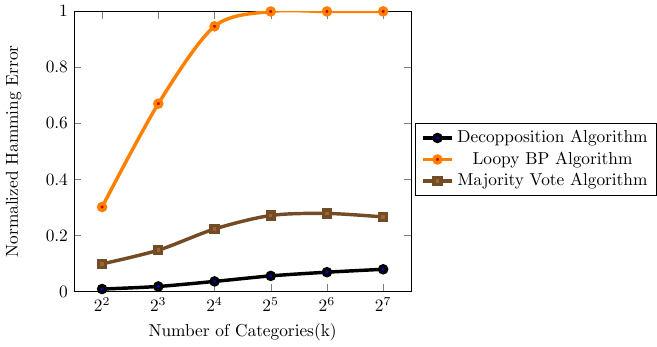}
\vskip -0.15in
\caption{The Hamming error for different methods on grids. We show mean the mean error of $1,000$ repetitions.}
\label{grid-comp}
\vskip -0.1in
\end{figure}

In the second experiment, we evaluate the effect of edge noise $p$ on the quality of solution obtained by our methods for a fixed number of labels $k$ and fixed node noise $q$. In Figure \ref{change_p}, we show the effect of $p$ on the average of Hamming error when other parameters are fixed ($n=6\times 10^4, k=128, q=0.1$). We vary $p$ from zero to $0.5$. We repeat each experiment $100$ times. We find that our approximate inference algorithm is robust to small amounts of noise.

This experiment also validates Theorem~\ref{maintheorem} which states when the side information from edges $X$ helps with statistical recovery. For the setups we consider in this experiment, we have k = 128 and vary q in {0.1, 0.15, 0.2}. If we keep the initial node labels the expected normalized Hamming error will be 0.1, 0.15, and 0.2 respectively. Theorem~\ref{maintheorem} states that to obtain a better Hamming error than the above one, the edge noise $p$ has to be less than $\sqrt{0.1/(128\log128)} \sim 0.04$, $\sqrt{0.15/(128\log128)} \sim 0.05$, $\sqrt{0.2/(128\log128)} \sim 0.06$ respectively. Figure~\ref{change_p} shows that the normalized Hamming error obtained by our algorithm reaches the Hamming error of the trivial algorithm (and plateaus around it) at the expected edge-noise levels of 0.04, 0.05, and 0.06.

\begin{figure}[ht]
\centering
\includegraphics[width=0.5\columnwidth]{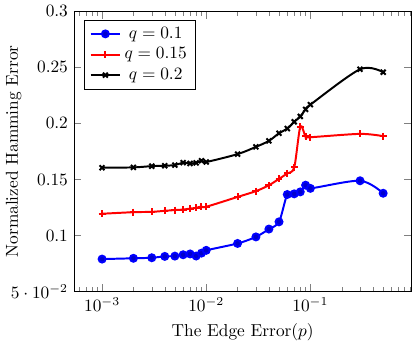}
\vskip -0.1in
\caption{The effect of varying $p$ on the average of normalized Hamming error($Hd$) with fixed $q$.}
\vskip -0.1in
\label{change_p}
\end{figure}

Our approximate inference algorithm is robust to small amounts of noise. As expected, when the noise increases the Hamming error increases.

\section{CONCLUSION}
\label{sec:conclusion}
We considered the problem of statistical recovery in structured instances with noisy categorical observations. We presented a new approximate algorithm for inference over graphs with categorical random variables. We showed a logarithmic dependency of the Hamming error to the number of categories the random variables can obtain. We also explored the connections between approximate inference and correlation clustering with a fixed number of clusters. There are several future directions suggested by this work. One interesting direction would be to understand under which noise models the problem of statistical recovery is solvable. Moreover, it is interesting to explore the direction of correlation clustering further and extend our analysis beyond small tree width graphs.

\vspace{-10pt}\paragraph{Acknowledgements} The authors thank Shai Ben David, Fereshte Heidari Khazaei, and Joshua McGrath for the helpful discussions. This work was supported by Amazon under an ARA Award, by NSERC under a Discovery Grant, and by NSF under grant IIS-1755676.





\section{ANALYSIS FOR TREES}
\subsection{Proof of Lemma 1}
\begin{proof}
In $G=(V, E)$, for each edge $(u,v) \in E$, we have a random variable $L_{u,v} = \mathbbm{1}(\varphi(Z_u,Z_v) \neq X_{u,v})$ with distribution:
\[L_{u,v} = \begin{cases}
1, & p\\
0, & 1-p
\end{cases}\]
To apply the Bernstein inequality, we must consider $L_{u,v} - p$. We have $E[L_{u,v} - p] = 0$ and $\sigma^2(L_{u,v} - p) = p(1-p)$. We must also have that the random variables are constrained. We know that
$|L_{u,v} - p| \leq \max\{1-p, p\}$ and $p < 1/2$ so $|L_{u,v} - p| \leq 1-p$. Now, we apply the Bernstein inequality:

\begin{align*}
   &P\left(\sum_{(u,v) \in E} L_{u,v} - p \leq t\right) \geq 1 - \exp\left(-\frac{t^2}{2|E|\sigma^2 + \frac{2}{3}(1-p)t }\right)
\end{align*}

Let $u \triangleq -\frac{t^2}{2|E|\sigma^2 + \frac{2}{3}t(1-p)}$. Solving for $t$ we obtain:

\[t = \frac{1}{3}u(1-p) + \sqrt{\frac{(1-p)^2 u^2}{9} + 2|E|\sigma^2u}\]

Now we have that:

    \begin{align*}
            P\bigg(\sum L_{u,v} - p \leq \frac{1}{3}u(1-p) + \sqrt{\frac{(1-p)^2 u^2}{9} + 2|E|\sigma^2u}\bigg)
        \geq 1 - e^{-u}
    \end{align*}

We choose $u = \ln\left(\frac{2}{\delta}\right)$, and substituting $|E| = n-1$ for trees and $\sigma^2 = p(1-p)$, we have that with probability $1- \delta$:

\begin{gather*}
    \sum_{(u,v) \in E} \mathbbm{1}(\varphi(u,v)\neq X_{u,v}) \leq 
    \frac{1}{3}\ln(\frac{2}{\delta})(1-p) +
    \sqrt{\frac{(1-p)^2 \ln(\frac{2}{\delta})^2}{9} + 2(n-1)p(1-p)\ln(\frac{2}{\delta})} +
    (n-1)p 
\end{gather*}

Simplifying by noting that $\sqrt{a+b} \leq \sqrt{a} + \sqrt{b}$, we have proven the lemma.

\end{proof}
\subsection{Proof of Lemma 3}
\begin{proof}
For $\hat Y_v = Y_v$, we have that $P_Z( Y_v \neq Z_v) - P_Z(Y_v \neq Z_v) = 0$ and so we are done. When $\hat Y_v \neq Y_v$, we have that $P_Z(Y_v \neq Z_v) = q$ and get the following for the first term:
\begin{align*}
P_Z&(\hat Y_v \neq Z_v)  = P_Z(\hat Y_v \neq Z_v\wedge Z_v = Y_v) + P_Z(\hat Y_v \neq Z_v\wedge Z_v \neq Y_v)  \\ &=  P_Z(\hat Y_v \neq Z_v\wedge Z_v = Y_v) + \sum\limits_{i\in [k]\wedge i\neq \hat{Y}_v \wedge i\neq Y_v} P_Z(\hat Y_v \neq Z_v\wedge Z_v \neq Y_v \wedge Z_v=i)
\end{align*}
We know that $P_Z(\hat Y_v \neq Z_v\wedge Z_v = Y_v)= 1-q$. For each $i$ we have $P_Z(\hat Y_v \neq Z_v\wedge Z_v \neq Y_v \wedge Z_v=i)=\frac{q}{k-1}$. So we have:
\begin{align*}
&\sum\limits_{i\in [k]\wedge i\neq \hat{Y}_v \wedge i\neq Y_v} P_Z(\hat Y_v \neq Z_v\wedge Z_v \neq Y_v \wedge Z_v=i) = \frac{q}{k-1} \sum\limits_{i\in [k]\wedge i\neq \hat{Y}_v \wedge i\neq Y_v} 1 = \frac{q(k-2)}{k-1}
\end{align*}
Given this we have:
\begin{align*}
P_Z(\hat Y_v \neq Z_v)  &=  P_Z(\hat Y_v \neq Z_v\wedge Z_v = Y_v) + \sum\limits_{i\in [k]\wedge i\neq \hat{Y}_v \wedge i\neq Y_v} P_Z(\hat Y_v \neq Z_v\wedge Z_v \neq Y_v \wedge Z_v=i) \\ &= (1-q) +  \frac{q(k-2)}{k-1} = 1-\frac{q}{k-1} 
\end{align*}
Finally, consolidating these, we get:
    \begin{align*}
        P( \hat Y_v \neq Z_v) - P(Y_v \neq Z_v) = 1-\frac{q}{k-1} - q  = 1-\frac{k}{k-1}q 
    \end{align*}
This is exactly $c$, and so the hamming error and excess risk are proportional. Furthermore, we can set $c$ to $1-\frac{k}{k-1}q$.
\end{proof}

\subsection{Proof of Corollary 1}
\begin{proof}
{\color{black}
Before the actual proof, we show that:
\begin{lemma}
In trees, if $Y\in \mathcal{F}$ then $Y^*=Y$.
\end{lemma}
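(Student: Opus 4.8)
The plan is to prove the stronger claim that the ground-truth labeling $Y$ is the unique global minimizer, over the entire cube $[k]^{|V|}$, of the risk functional that defines $Y^*$, and then to use the hypothesis $Y\in\mathcal{F}$ to transfer this to the constrained problem over $\mathcal{F}$. In other words: exhibit $Y$ as the unconstrained optimum, observe it is feasible, and conclude it is the constrained optimum.

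First I would note that the risk decomposes coordinate-wise: for any labeling $Y'$ it equals $\sum_{v\in V} r_v(Y'_v)$, where $r_v(\ell)=\mathbb{P}_Z(Z_v\neq\ell)$ depends only on the single label $\ell$ assigned to $v$. Evaluating $r_v$ under the uniform noise model, exactly as in the proof of Lemma~\ref{hamapproximate}, gives $r_v(Y_v)=q$, while for every $\ell\neq Y_v$ one has $r_v(\ell)=(1-q)+\frac{q(k-2)}{k-1}=1-\frac{q}{k-1}$.

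Second, I would compare the two values. Since $q<1/2$ and $k\geq 2$ we have $\frac{kq}{k-1}<1$, i.e. $q<1-\frac{q}{k-1}$, so $r_v$ is minimized strictly and uniquely at $\ell=Y_v$. Summing over $v\in V$, the labeling $Y$ is the unique minimizer of the risk over all of $[k]^{|V|}$. Finally, because $\mathcal{F}\subseteq[k]^{|V|}$ and $Y\in\mathcal{F}$ by assumption, the unconstrained optimizer $Y$ is feasible, hence it is a fortiori the unique minimizer of the same functional restricted to $\mathcal{F}$; that is, $Y^*=Y$.

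The argument is essentially routine, and I do not expect a real obstacle. The one point that deserves care is conceptual rather than computational: the edge-disagreement budget $t$ of Lemma~\ref{treeedgebound} that cuts out $\mathcal{F}$ is an \emph{inactive} constraint here — it serves only to guarantee that $Y$ is feasible, which is precisely the hypothesis $Y\in\mathcal{F}$, and this is why the lemma has to be stated conditionally. (The tree structure itself plays no role in this step; it enters elsewhere, through the value of $t$ and the size of $\mathcal{F}$.) A minor bookkeeping remark is that the probability in the definition of $Y^*$ is the expectation over the draw of $Z$ under the noise model; if one instead reads it with $Y$ held deterministic, the risk of $Y'$ is just $Hd(Y',Y)$ and the same coordinate-wise reasoning gives $Y^*=Y$ immediately.
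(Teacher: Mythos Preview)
Your proof is correct and follows the same basic strategy as the paper: show that $Y$ is the global minimizer of the defining functional, observe it lies in $\mathcal{F}$, and conclude. The only difference is which reading of the functional you treat as primary. The paper's own proof takes the interpretation you relegate to your final ``bookkeeping remark'': it reads $Y^*=\argmin_{Y'\in\mathcal{F}}\sum_v\mathbb{P}(Y'_v\neq Y_v)$ with $Y$ deterministic, so the sum is literally the Hamming distance to $Y$, and the conclusion $Y^*=Y$ is immediate once $Y\in\mathcal{F}$. Your main argument instead works with the $Z$-risk $\sum_v\mathbb{P}_Z(Z_v\neq Y'_v)$ and invokes the coordinate-wise computation from Lemma~\ref{hamapproximate}; this is a little more work but has the virtue of covering the alternative reading that appears in the proof of Corollary~1, where $Y^*$ is written as $\argmin_{\mathcal{Y}\in\mathcal{F}}\sum_v\mathbb{P}(\mathcal{Y}_v\neq Z_v)$. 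Either way the argument goes through, and your observation that the tree hypothesis is inert here is accurate.
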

\begin{proof}
We have that $Y\in \mathcal{F}$ hence  $\sum_{(u,v) \in E} \mathbbm{1}\{\varphi( Y_u, Y_v) \neq X_{u,v}\} \leq t$ with $t = (n-1)p + \frac{2}{3}\ln(2/\delta)(1-p) + \sqrt{2(n-1)p(1-p)\ln(2/\delta)} $ and it holds with probability $1-\delta$. We have that $Y^* = \argmin_{Y'\in \mathcal{F}} \sum\limits_{v\in V }\mathbb{P}(Y'_v \neq Y_v)$ which among all possible $Y'\in  \mathcal{F}$ finds the one nearest to $Y$. So we have $Y^*=Y$.
\end{proof}
}
Let $Y^* = \arg \min_{\mathcal{Y} \in \mathcal{F}}\sum_{v\in V} \mathbb{P}(\mathcal{Y}_v\neq Z_v)$ and let $\hat Y$ be the ERM. Because $Y\in \mathcal{F}$ so $Y^*=Y$.Then from Lemma 2, we have:
\begin{gather*}
\sum_{v \in V} P(\hat Y_v \neq Z_v) - \sum_{v \in V}  P(Y^*_v \neq Z_v) \leq \left(\frac{2}{3} + \frac{c}{2}\right)\log\left( \frac{|\mathcal{F}|}{\delta}\right) + \frac{1}{c}\sum_{v \in V} \mathbbm{1}\{\hat Y_v \neq Y_v\}
\end{gather*}
For all $c > 0$, we can use Lemma 3 and apply it to the RHS to obtain
{\color{black} 
\begin{gather*}
    \left(1 - \frac{1}{ct}\right)\bigg[\sum_{v \in V} P(\hat Y_v \neq Z_v) - \sum_{v \in V}  P(Y^*_v \neq Z_v)\bigg] \leq \left(\frac{2}{3} + \frac{c}{2}\right)\log\left( \frac{|\mathcal{F}|}{\delta}\right)
    \end{gather*}
}
where $t = 1-\frac{k}{k-1}q$. Now, because this holds for $c > 0$, we can choose $c =\frac{2}{t}$ thus obtaining
    \begin{gather*}
        \left(\frac{1}{2}\right)\sum_{v \in V} P(\hat Y_v \neq Z_v) - \sum_{v \in V}  P(Y^*_v \neq Z_v) \leq
        \left(\frac{2}{3} + \frac{1}{t}\right)\log\left( \frac{|\mathcal{F}|}{\delta}\right)
    \end{gather*}
Finally, applying that $q= \frac{1}{2} - \varepsilon$, we obtain our result.
\end{proof}

\subsection{Proof of Theorem 1}
\begin{proof}
By Lemma 1, we have  with probability at least $1 - \frac{\delta}{2}$
    \[\sum_{(u,v) \in E} \mathbbm{1}(\varphi(u,v) \neq X_{u,v}) \leq t\]
which we will use it to define a hypothesis class $\mathcal{F}$ as
\[\mathcal{F}= \left\lbrace \hat Y: \sum_{(u,v) \in E} \mathbbm{1}(\varphi(\hat Y_u,\hat Y_v) \neq X_{u,v}) \leq t \right\rbrace\]

with \[t = (n-1)p + \frac{2}{3}\ln(\frac{2}{\delta})(1-p) + \sqrt{2(n-1)p(1-p)\ln(\frac{2}{\delta})}
\]

Which suggests that $Y \in \mathcal{F}$ with high probability. By Corollary 1, we have that $\hat Y$ being the ERM over $\mathcal{F}$ implies that
    \begin{gather*}
        \sum_{v \in V} P(\hat Y_v \neq Z_v) - \min_{Y \in \mathcal{F}} \sum_{v \in V}  P(Y_v \neq Z_v) \leq 
\left(\frac{4}{3} + \frac{2}{\frac{1}{4}+\big( \frac{1}{4}-\epsilon\big)\big( 1-\frac{k}{k-1}\big)}\right)\log\left(\frac{|\mathcal{F}|}{\delta}\right)
    \end{gather*}

Combining this with Lemma 3, we conclude that $\sum_{v \in V} \mathbbm{1}\{\hat Y_v \neq Y_v\}$ is bounded form above by
    \begin{equation*}  \frac{1}{ 1-\frac{k}{k-1}q}  \left(\frac{4}{3} + \frac{2}{\frac{1}{4}+\big( \frac{1}{4}-\epsilon\big)\big( 1-\frac{k}{k-1}\big)}\right)\log\left(\frac{|\mathcal{F}|}{\delta}\right)
    \end{equation*}

Now, we approximate the size of the class $\mathcal{F}$. We can do so by upper-bounding the number of ways to violate the observed measurements. Pessimistically, of the possible $l = 0, 1, \ldots t$ violations, there are at most $l$ nodes which are involved in this violation. Furthermore, there are at most $k-1$ ways for each of these nodes to be involved in such a violation. Therefore, we have,setting $t = \frac{2}{3}\ln(2/\delta)(1-p) + \sqrt{2(n-1)p(1-p)\ln(2/\delta)} +(n-1)p$

\begin{equation*}
\begin{aligned}
|\mathcal{F}| & \leq \sum_{l = 0}^t \binom{n}{l} k^l  \leq k^t \sum_{l = 0}^t \binom{n}{l} \leq k^t 2^t
\end{aligned}
\end{equation*}

Using this bound for $|\mathcal{F}|$, and assuming that the noise and sampling distribution is constant, we obtain that the hamming error is bounded by $\tilde{O}(\log(k)np)$ 
\end{proof}

\subsection{Solving the Optimization Problem on Trees with Dynamic Programming}
Because $G$ is assumed to be a tree, we can compute optimal solutions to subproblems. Specifically, we can turn any undirected tree into a controlled one by a breadth-first search.

Then we can define a table $OPT(u, B | \ell)$ which stores optimal values to the subtree rooted at $u$, constrained to budget $B$ and with the parent of $u$ constrained to class $\ell$. Given the values of $OPT$ for all descendants of a node $u$, it is not difficult to find values for the table at $u$. We formalize this in the following theorem.

\addtocounter{theorem}{2}
\begin{theorem} The optimization problem 1 can be solved in time $O(kn^3p)$.
\end{theorem}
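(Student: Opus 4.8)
The plan is to root $T$ and solve Problem~\ref{lp_tree} by dynamic programming over subtrees. A breadth-first search from an arbitrary vertex $r$ orients $T$ so that every non-root vertex $u$ has a parent $p(u)$. The objective $\sum_v \mathbbm{1}\{\hat Y_v\neq Z_v\}$ is additive over vertices and the feasibility constraint $\sum_{(u,v)\in E}\mathbbm{1}\{\varphi(\hat Y_u,\hat Y_v)\neq X_{u,v}\}\leq t$ is additive over edges, so in a tree the subtree rooted at $u$ interacts with the rest of the instance only through (i) the label assigned to $p(u)$ and (ii) how much of the global edge-violation budget $t$ is consumed inside that subtree. This is precisely the interface recorded by the table $OPT(u,B\mid\ell)$: the minimum number of vertex disagreements attainable on the subtree rooted at $u$ while incurring at most $B$ edge violations among the edges of that subtree together with the edge $\{p(u),u\}$, conditioned on $\hat Y_{p(u)}=\ell$.

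\textbf{Recurrence and correctness.} For a leaf $u$ and each $j\in[k]$ the only costs are $\mathbbm{1}\{j\neq Z_u\}$ and the parent-edge cost $e_0(j,\ell)=\mathbbm{1}\{\varphi(\ell,j)\neq X_{p(u),u}\}\in\{0,1\}$, so $OPT(u,B\mid\ell)=\min\{\mathbbm{1}\{j\neq Z_u\}: j\in[k],\ e_0(j,\ell)\leq B\}$. For an internal vertex $u$ with children $c_1,\dots,c_d$ we choose $\hat Y_u=j$, pay $\mathbbm{1}\{j\neq Z_u\}+e_0(j,\ell)$, and distribute the remaining budget among the children:
\[
OPT(u,B\mid\ell)=\min_{j\in[k]}\Bigl(\mathbbm{1}\{j\neq Z_u\}+\min_{b_1+\dots+b_d\leq B-e_0(j,\ell)}\ \sum_{i=1}^{d}OPT(c_i,b_i\mid j)\Bigr),
\]
where the inner budget-split minimization is a knapsack-type min-plus convolution computed by a single left-to-right sweep over $c_1,\dots,c_d$ that maintains, for each prefix of the children and each residual budget, the best achievable sum. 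The root is handled by the same rule with no parent edge and full budget $t$, and the optimum of Problem~\ref{lp_tree} is $\min_{B\leq t}OPT(r,B\mid\cdot)$, with an optimal $\hat Y$ recovered by back-pointers. Correctness is an induction on subtree size: once $\hat Y_u$ and the per-child budgets are fixed, the children subtrees become independent, so the recurrence ranges over exactly the feasible labelings of the subtree and retains the best.

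\textbf{Running time.} By Lemma~\ref{treeedgebound} we have $t=O(np)$ (treating $\ln(1/\delta)$ as a constant, with $np=\Omega(\ln(1/\delta))$), so the budget axis has $O(np)$ values and each table slot holds one number. For each vertex $u$ and each candidate label $j$, assembling the children-merge array over all residual budgets costs $O(\deg(u)\cdot t^2)$ (one $O(t^2)$ min-plus step per child); ranging over $j\in[k]$ and using $\sum_u\deg(u)=O(n)$ gives $O(k\,n\,t^2)=O(k\,n^3p^2)$, and filling $OPT(u,\cdot\mid\cdot)$ from those arrays adds only $O(k\,n\,t)=O(kn^2p)$, since for fixed $\ell$ the minimization over $j$ splits into the case $j=\ell$ and a precomputed minimum over $j\neq\ell$. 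As $p\leq1$ and $k=O(n)$ (each of the $k$ labels is assumed to occur, so $k\leq n$), the total is $O(kn^3p)$.

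\textbf{Main obstacle.} The DP template is routine; the delicate part is the time bound. One must arrange the transition so that the ``parent label'' dimension and the choice of $\hat Y_u$ do not each contribute a full factor of $k$ more than accounted for -- here the permutation symmetry of the categorical labels is used, since labels absent from the relevant node observations are interchangeable -- and one must check that the budget-splitting min-plus convolutions, amortized against $\sum_u\deg(u)=O(n)$, remain within $O(kn^3p^2)\subseteq O(kn^3p)$ and do not blow up on high-degree vertices such as stars.
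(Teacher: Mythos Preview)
Your proposal is correct and follows essentially the same dynamic-programming scheme as the paper: root the tree, define $OPT(u,B\mid\ell)$ conditioned on the parent's label, split the budget among children via min-plus convolution, and use $t=O(np)$ together with $\sum_u\deg(u)=O(n)$ to reach $O(knt^2)=O(kn^3p^2)\subseteq O(kn^3p)$. You are in fact slightly more explicit than the paper about one point it glosses over, namely how to avoid an extra factor of $k$ from the parent-label dimension: you correctly observe that $e_0(j,\ell)$ depends only on whether $j=\ell$, so the min over $j$ for each $\ell$ reduces to ``the case $j=\ell$'' versus a precomputed $\min_{j\neq\ell}$, keeping the table-filling step at $O(knt)$. (The remark that $k=O(n)$ is unnecessary for the stated bound, but harmless.)
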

\begin{proof}
Given a tree $T = (V, E)$, a budget t, observations $X = \{X_{u,v}\}_{(u,v) \in E}$ and 
$Z = \{Z_v\}_{v \in V}$, we would like to compute a solution to

\begin{align*}
        \sum_{(u,v) \in E} &\mathbbm{1}(\varphi(Z_u,Z_v) \neq X_{u,v}) \leq
        \frac{2}{3}\ln(2/\delta)(1-p) + \sqrt{2(n-1)p(1-p)\ln(2/\delta)} +(n-1)p
\end{align*}

First, we turn $T$ into a tree rooted at some node $r$ by running a breadth-first search from $r$ and directing nodes according to their time of discovery. Call this directed tree rooted at $r$ $\overrightarrow{T}_r$. We specify a table $OPT$ which will collect values of optimal subproblems. 

Specifically, denote $\overrightarrow{T}_u$ as the subtree of $\overrightarrow{T}_r$ rooted at a node $u$. 
Then OPT will be a matrix parameterized by $OPT(u, B| \ell)$ where $u \in V$, $0 \leq B\leq |\overrightarrow{T}_u|$ (no tree can violate the observations more times than the number of nodes in the tree) and $\ell \in [k]$. Let $Pa(u)$ be the singular parent of the node $u$. Then OPT values represent the optimal value of the subtree rooted at $u$ with a budget $B$ and $Pa(u)$ restricted to the value $\ell$. Our recursive equation for $OPT$ is then

\begin{align}
        &OPT(u, B | i ) = \label{recursive} \min_{\ell \in [k]} \min_{\substack{\sum\limits_{v \in N(u)} B_v \\ =  B - \mathbbm{1}\{X_{u,v} \neq \varphi(i, \ell)\}}} \sum_{v \in N(u)} OPT(v, B_v | \ell)  +  \mathbbm{1}\{\ell \neq Z_u \}  \nonumber
\end{align}

If we have the value of $OPT(u, B | \ell)$ for all nodes $u \neq r$, values $\ell$ and valid budgets $B \leq t$,
we can calculate the optimum value of the tree by the following: We attach a node $r'$ to $r$ by an edge $r' \rightarrow r$ 
and set the information on the node to $X_{r', r} = 1$ then solve $OPT(r, t|1)$, then repeat the process but with $X_{r',r} = -1$, return the smaller of these two values.

For a leaf node $w$, the value of $OPT(w, B'| \ell)$ is simply $\min_{i} \mathbbm{1}\{i \neq Z_w\}$ for $B' = 1$. 
If $B' = 0$ then we must choose $i$ such that it does not violate the side information, i.e. we must have $\varphi(i, \ell) = X_{w, Pa(w)}$

Finally we show how to compute the summation in (\ref{recursive}) efficiently. For each value $\ell \in [k]$ we must optimize the summation
\begin{equation*}
\min_{\substack{\sum\limits_{v \in N(u)} B_v \\ =  B - \mathbbm{1}\{X_{u,v} \neq \varphi(i, \ell)\}}} \left(\sum_{v \in N(u)} OPT(v, B_v | \ell) + \mathbbm{1}\{\ell \neq Z_u \} \right)
\end{equation*}

Because each node's optimal value is independent, we can rewrite this sum by submitting an optional order on $N(u)$ of $1, 2, \ldots, m = |N(u)|$ and reforming this sum to

\begin{gather*}
    \min_{B_1 \in [0, K - \mathbbm{1}\{ \varphi(\ell, s) \neq X_{u, Pa(u)}\}]} OPT(1, B_1| \ell) 
    +   \min_{\sum_{j \in [2,m]} B_j = B - B_1 - \mathbbm{1}\{\varphi(\ell,s) \neq X_{u, Pa(u)}\}} \sum_{j \in 2,m} OPT(j, B_j| \ell)
\end{gather*}

The minimization for the first two vertices whose number of constraints violated are at most $B$ can be solved in $O(B^2)$ time. 
The calculation for the first three vertices can then be done in $O(B^2)$ time by reusing the information from the first two. We can repeat this until we have considered
all children of $u$. Hence because we must calculate this value for all $k$ possible classes, we get an algorithm which takes time $k \sum_{v \in V} |N(v)|B^2 = O(nkB^2)$. 
The statistical analysis below shows that $B$ is $poly(n,p)$.
\end{proof}

\section{ANALYSIS FOR GENERAL GRAPHS}
\subsection{Approximation Correlation Clustering}

We have following Theorem,

\begin{theorem}
\citep{giotis2006correlation} There is a polynomial time factor $0.878$ approximation algorithm for \textsc{MaxAgree[2]} on general graphs. For every $k \geq 3$, there is a polynomial time factor $0.7666$ approximation algorithm for \textsc{MaxAgree[k]} on general graphs.
\end{theorem}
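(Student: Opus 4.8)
The statement is a known corollary of the semidefinite programming approach to correlation clustering, and that is the route I would take. First I would reformulate \textsc{MaxAgree[$k$]}: writing $X_{u,v}=+1$ for ``similar'' pairs and $X_{u,v}=-1$ for ``dissimilar'' pairs, a clustering into at most $k$ parts with label map $\ell$ scores $\sum_{X_{u,v}=1}\mathbbm{1}\{\ell_u=\ell_v\}+\sum_{X_{u,v}=-1}\mathbbm{1}\{\ell_u\neq\ell_v\}$ agreements, and we want to maximize this. Since the problem is NP-hard in general, we pass to an SDP relaxation and round.

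\textbf{The case $k=2$.} Encode a cluster assignment by signs $x_u\in\{-1,+1\}$, so the objective is $\sum_{X_{u,v}=1}\tfrac{1+x_ux_v}{2}+\sum_{X_{u,v}=-1}\tfrac{1-x_ux_v}{2}$. Relax each $x_u$ to a unit vector $v_u$ and each product $x_ux_v$ to $\langle v_u,v_v\rangle$; this is a polynomially solvable SDP whose value $\mathrm{SDP}$ dominates $\mathrm{OPT}$. Then round with one hyperplane through the origin with random unit normal $r$, setting $x_u=\operatorname{sign}\langle v_u,r\rangle$. The heart of the argument is the term-by-term comparison with $\theta_{uv}=\arccos\langle v_u,v_v\rangle$: on a ``$+$'' edge $\Pr[x_u=x_v]=1-\theta_{uv}/\pi$ against SDP contribution $\tfrac{1+\cos\theta_{uv}}{2}$, and on a ``$-$'' edge $\Pr[x_u\neq x_v]=\theta_{uv}/\pi$ against $\tfrac{1-\cos\theta_{uv}}{2}$; in either case the ratio is at least the Goemans--Williamson constant $\alpha_{GW}=\min_{0\le\phi\le\pi}\tfrac{2}{\pi}\cdot\tfrac{\phi}{1-\cos\phi}\approx 0.878$. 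Summing over edges gives expected agreements $\ge\alpha_{GW}\cdot\mathrm{SDP}\ge 0.878\,\mathrm{OPT}$, and the method of conditional expectations (or $O(\log n)$ independent trials) turns this into a deterministic $0.878$-approximation.

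\textbf{The case $k\ge 3$.} The scheme is identical in spirit, but the vector program must allow up to $k$ clusters: assign each vertex a unit vector with the simplex constraint $\langle v_u,v_v\rangle\ge-\tfrac{1}{k-1}$, express the ``$+$'' contribution as an increasing and the ``$-$'' contribution as a decreasing function of $\langle v_u,v_v\rangle$, and round with a constant number of independent random hyperplanes so that at most $k$ cells (clusters) are produced, coalescing cells when $k$ is small. The analysis again reduces to bounding, separately for the two edge types, the ratio of the rounding success probability to the SDP contribution; optimizing these one-variable expressions over the feasible range of inner products yields worst-case ratio $0.7666$, hence expected agreements $\ge 0.7666\,\mathrm{OPT}$, which derandomizes as before. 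One may also observe that since every pair $(u,v)$ carries a measurement the instance is dense, so the exhaustive-sampling PTAS of~\cite{giotis2006correlation} gives a $(1-\varepsilon)$-approximation for fixed $k$; either guarantee suffices for our use.

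\textbf{Where the work is.} The delicate point, and what distinguishes this from plain \textsc{MaxCut}, is that ``$+$'' and ``$-$'' edges push the SDP inner products in opposite directions, so the Goemans--Williamson-type inequality must be verified \emph{simultaneously} for an increasing agreement term and a decreasing disagreement term against the \emph{same} optimal SDP solution; pinning down the exact constants is precisely this pair of one-variable optimizations, together with checking that for $k\ge 3$ the simplex constraint on the feasible vectors neither moves the binding angle nor forces the rounding to use more than $k$ clusters.
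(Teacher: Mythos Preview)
The paper does not prove this theorem. It is stated with a citation to \cite{giotis2006correlation} (and, in the commented-out discussion, to Swamy's SDP result) and is used as a black box: the text immediately following the theorem is not a proof of the approximation guarantee but an argument that the $0.7666$ factor can be absorbed into the edge-noise parameter by replacing $p$ with $p' = 0.7666p + 0.2334$ in the downstream analysis.

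Your write-up, by contrast, is an actual proof sketch of the cited result. The $k=2$ part is correct and complete enough: the symmetry $\phi\mapsto\pi-\phi$ that you implicitly use shows the ``$+$''-edge ratio and the ``$-$''-edge ratio are the same one-variable problem, so both are bounded below by $\alpha_{GW}$. For $k\ge 3$ your outline is in the right spirit (an SDP with the simplex constraint $\langle v_u,v_v\rangle\ge -\tfrac{1}{k-1}$ and Frieze--Jerrum-style rounding), but the constant $0.7666$ is specifically Swamy's analysis; the step ``optimizing these one-variable expressions \ldots\ yields worst-case ratio $0.7666$'' hides nontrivial work, and ``coalescing cells when $k$ is small'' is not quite how the rounding enforces at most $k$ clusters. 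Your closing remark that the Giotis--Guruswami PTAS already suffices in our setting is the observation the paper is actually relying on.

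So: nothing is wrong with your approach, but you have supplied far more than the paper does. For the purposes of this paper, citing the result and explaining how the approximation slack propagates into $p'$ is all that is required.
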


{\color{black}  With this assumption in the worse case, we have labeling with $0.7666$\textsc{Opt[k]}. If $\textsc{Opt}=|E|-b$ which $b$ is the number of bad edges that the optimal does not cover. We know the original graph is a $k$ cluster with no bad-cycle (a cycle with one negative edge), so whatever bad edges that we see are the result of the noise process on the edges, so $b\leq |E|p$ because part of them do not generate bad-cycles. We can consider the approximate process as an extra source to generate more bad edges so we have $|E|-b'\geq \textsc{Approx}[k]=0.7666\textsc{Opt}[k]$. Also, by our assumption we have $p\leq p'$ so $b\leq b'$ 

\begin{align*}
|E|-&b'\geq \textsc{Approx}[k]=0.7666\textsc{Opt}[k] = 0.7666 \big( |E|-b\big) \rightarrow b' \leq 0.2334|E|+0.7666b
\end{align*}
So we have 
\begin{align*}
b\leq b' \leq 0.2334|E|+0.7666b
\end{align*}

We have upper bound for the error introduced by our approximation and we assume all that noise come from edge noise process and the correlation clustering could not correct it, we can assume a noise process with $p'$ such that $b'= |E|p'$ so :

\begin{align*}
& |E|p' = b'  \leq 0.2334|E|+0.7666b \leq 0.2334|E|+0.7666 |E| p \rightarrow p' \leq 0.2334 + 0.7666p
\end{align*}

So we consider exact correlation clustering result in our analyses and if we interested to see the effect of approximation algorithm on the result and get an error bound, we update $p$ to $0.2334 + 0.7666p$ as worst case analysis which means we directly inject the approximation noise error to the results. This assumption is weak because part of $b'$ can be captured by the local and global optimizer which we neglect it.}

\subsection{Proof of Lemma 4}
\begin{proof}
We mix two partitions into one notation and each data point in $D$ shows as $v_i=(\bar Y_i,Z_i)$ , for each $i\in D$, and $\bar Y_i\in \bar Y$ and $Z_i\in Z$. We define $\forall l \in [k]$
\begin{align*}
& X_l = \lbrace v_i|\bar Y_i=l\rbrace \\
& T_l = \lbrace v_i|Z_i=l\rbrace
\end{align*}

and the error is $E = \sum\limits_{v_i\in D} \mathbbm{1}\lbrace Z_{i}\neq \bar Y_i\rbrace$.
The only thing that we allowed to change is the label of $X_l$s. We can represent the partition $X$ and $T$ as,
 \begin{align*}
& X = \lbrace  X_1,X_2,\dots,X_k\rbrace \\
& T = \lbrace T_1,T_2,\dots,T_k\rbrace
\end{align*}
We claim that with Algorithm 2, we can find the permutation $\pi$ on X, such that $E$ minimize. Let $\pi^*$ be the permutation that makes minimum $E$. We prove this theorem with reductio ad absurdum. Therefore
\begin{equation}
\label{eq:red}
E_{\pi^*}\leq E_\pi
\end{equation}

Let $N$ be the set of all $v_i\in D$ such that $\pi(\bar Y_i)\neq \pi^*(\bar Y_i)$, 

\[N=\lbrace v_i\in D|\pi(\bar Y_i)\neq \pi^*(\bar Y_i)\rbrace \] We can write $E$ for $\pi$,

\begin{align*}
 E_\pi&=\sum\limits_{v_i \in D}\mathbbm{1}\lbrace \pi(\bar Y_i)\neq Z_i\rbrace\\&=\sum\limits_{v_i \in N}\mathbbm{1}\lbrace \pi(\bar Y_i)\neq Z_i\rbrace + \sum\limits_{v_i \not\in N}\mathbbm{1}\lbrace \pi(\bar Y_i)\neq Z_i\rbrace
\end{align*}

Similarly we can define $E_{\pi^*}$,

\begin{align*}
E_{\pi^*}&=\sum\limits_{v_i \in D}\mathbbm{1}\lbrace \pi^*(\bar Y_i)\neq Z_i\rbrace\\&=\sum\limits_{v_i \in N}\mathbbm{1}\lbrace \pi^*(\bar Y_i)\neq Z_i\rbrace + \sum\limits_{v_i \not\in N}\mathbbm{1}\lbrace \pi^*(\bar Y_i)\neq Z_i\rbrace
\end{align*}

Second term in $E_{\pi^*}$ and $E_{\pi}$ are equal, using Inequality \ref{eq:red}, and we define $E_\pi(N)=\sum\limits_{v_i\in N}\mathbbm{1}\lbrace \pi(\bar Y_i)\neq Z_i\rbrace $ and similarly $E_{\pi^*}(N)$ for $\pi^*$, so we have,

\begin{equation}
\label{eq:noq}
E_{\pi^*}(N) \leq E_\pi(N)
\end{equation} 

We know $N\subseteq D$, so the partition $X$ on D present a sub-partition $\hat{X}$ on $N$. $\hat{X}$ defines like $X$, so \(\hat{X}=\lbrace \hat{X}_1,\hat{X}_2,\dots,\hat{X}_k\rbrace\). This sub-partition notion can be defined for both permutations $\pi$ and $\pi^*$,
\begin{align*}
& \hat{X}_\pi = \lbrace  \hat{X}_{\pi(1)},\hat{X}_{\pi(2)},\dots,\hat{X}_{\pi(k)}\rbrace \\
& \hat{X}_{\pi^*} = \lbrace \hat{X}_{\pi^*(1)},\hat{X}_{\pi^*(2)},\dots,\hat{X}_{\pi^*(k)}\rbrace
\end{align*}

In the greedy algorithm, we sort the intersections of $X_i$s and $T_i$s and select the biggest one each time, because $\hat{X}$ is sub-partition of $X$, so we have,
\begin{equation}
\label{eq:bi}
\forall v_i,v_j\in \hat{X}\hspace{0.25cm} \pi(\bar Y_i)=\pi(\bar Y_j)\longleftrightarrow \pi^*(\bar Y_i)=\pi^*(\bar Y_j)
\end{equation}
Based on Equation \ref{eq:bi}, we can define a isomorphism on $N$,
\[ \forall v_i\in N\hspace{0.25cm} \phi:\pi^*(\bar Y_i)\rightarrow \pi(\bar Y_i)\]
we define $\dot{\max}()$ as selecting the set with maximum size among all feasible sets, then we have,
\begin{equation}
\label{eq:greedy}
\hat{X}_{\pi(\bar Y_i)} = \nonumber\bigg\lbrace v_j\in D | \pi(\bar Y_i)=\pi(\bar Y_j),\pi(\bar Y_j)\neq Z_j,\dot{\max}  |X_{\bar Y_j}\cap T_{\pi(\bar Y_j)}|\bigg\rbrace
\end{equation}
and also we can obtain,
\begin{align}
\label{eq:err}
 E_{\pi^*}(N)&=\nonumber\sum\limits_{v_i\in N} \mathbbm{1}\lbrace \pi^*(\bar Y_i)\neq Z_i\rbrace \\&=\nonumber \sum\limits_{\hat X_i\in \hat{X}_{\pi^*}}\sum\limits_{v=(\bar Y,Z)\in \hat X_i} \mathbbm{1}\lbrace \pi^*(\bar Y)\neq Z \rbrace\\&=\nonumber\sum\limits_{\hat X_i\in \hat{X}_{\pi^*}}\sum\limits_{v=(\bar Y,Z)\in \hat X_i} \mathbbm{1}\lbrace \phi^{-1}(\pi(\bar Y))\neq Z\rbrace
\end{align}
Also from Equation \ref{eq:greedy}, we know 
\[ \dot{\max}  |X_{\bar Y_j}\cap T_{\pi(\bar Y_i)}|= \hat{X}_{\pi(\bar Y_i)}\cup \underline{X}_{\pi(\bar Y_i)}\] because $ T_{\pi(\bar Y_i)}$ might already given to bigger intersection so we used $\dot{\max}$, and $\underline{X}_{\pi(\bar Y_i)}$ define as,
\begin{align*}
&\underline{X}_{\pi(\bar Y_i)} = \bigg\lbrace v_j\in D | \pi(\bar Y_i)=\pi(\bar Y_j),\pi(\bar Y_j)= Z_j,\dot{\max}  |X_{\bar Y_j}\cap T_{\pi(\bar Y_j)}|\bigg\rbrace
\end{align*} 
Based on greedy $\underline{X}_{\pi(\bar Y_i)}$ is maximized on other hands from Equation \ref{eq:err}, we know that \( E_{\pi^*}\leq E_\pi\), so there exist equivalence $C$ partition based on the $\phi$, we have such that using Inequality \ref{eq:noq},
\begin{equation}
\label{eq:t}
\sum\limits_{v \in C}\mathbbm{1}\lbrace \phi^{-1}(\pi(\bar Y))\neq Z\rbrace \leq \sum\limits_{v \in C}\mathbbm{1}\lbrace \pi(\bar Y)\neq Z\rbrace
\end{equation} 

moreover, this should be true for all $z\in C$. But if $\pi^*(\bar Y)$ is not $\pi(\bar Y)$ then,
\[ \bigg| \bigg\lbrace v_i\in D;\mathbbm{1}\lbrace \pi^*(\bar Y_i)=y_i\rbrace  \bigg\rbrace \bigg|< \underline{X}_{\pi(\bar Y_i)} \]
so this contradicting with Inequality \ref{eq:t} so for equivalence class $C$, we have \[ \sum\limits_{v \in C}\mathbbm{1}\lbrace \pi^*(\bar Y)\neq Z\rbrace = \sum\limits_{v \in C}\mathbbm{1}\lbrace \pi(\bar Y)\neq Z\rbrace \]
and because $\hat{X}_\pi$ and $\hat{X}_{\pi^*}$ is finite, this mean $\phi$ is identity function $\phi(x)=x$ so $\pi=\pi^*$. That mean greedy algorithm finds the best permutation transformations that satisfies $Z$.
\end{proof}

\subsection{Proof of Lemma 5}
\begin{proof}
Let $G=(V,E)$, and set $Y$ is the node labels from $L$ assigned to $V$. Let $C\subseteq L$ be the set of all labels that used in $Y$. The easy case is when we want to change a color $c\in Y$ to $c'\notin Y$, this is like renaming. To proof this lemma, we use induction. For showing an edge, we use $i+j$ means that two end point of nodes have label $i$ and $j$ and the edge label is $+1$. Let $C=\{c,c' \}$, we have multiple scenarios that generate violation $Vi=\lbrace c'+c, c+c', c-c, c'-c',  \rbrace$ and also the set of non-violation scenarios is $nVi=\lbrace c'-c, c-c', c+c, c'+c'  \rbrace$ as you can see $nVi$ and $Vi$ closed under swap operation.

We assume the theorem is true for $|C|=k-1$, let $Y$ used for $k$ colors to color them. We know $k-1$ colors can swap, only color $k$ is matter now, consider swap $i\in[k-1]$ and $k$. All edges involve in this swap is $\lbrace i+k, i-k, k+i, k-i, i+i, i-i, k-k, k+k \rbrace$ and errors involved with these two labels are $\lbrace i+k, k+i, i-i, k-k \rbrace$, and this set size does not change after the swap. 

Based on the statement at the beginning of the proof, we are sure about $k$ appear to $[k-1]$ colors, because it is like renaming, the only thing is changing $k$ to $i$. Let $j$ be a label such that $e=(v_i,v_j)\in E: label(v_l)= j\bigwedge label(v_m)=k$, the number of error are $\lbrace j+k , k+j\rbrace$ and after swap we have same number of edge in this set. So $Y$ and its version after swap, $Y'$ have same number of edge violations on the label set $L$, In other word, for any $L$, we have the following statement. \(\sum_{(u,v) \in E} \mathbbm{1}\{\varphi(Y_u,Y_v) \neq X_{u,v}\} =\\ \sum_{(u,v) \in E} \mathbbm{1}\{\varphi(Y'_u,Y'_v) \neq X_{u,v}\}\)
\end{proof}
\subsection{Proof of Lemma 6}
\begin{proof}
Let $\delta(S)^+$ and $\delta(S)^-$ show the positive and negative edges in $\delta(S)$. We define the external boundary nodes as follow,
\begin{align*}
V^{S} = \lbrace v\in G : (v,e)\in \delta(S) \wedge v\not\in S\rbrace
\end{align*}
and internal boundary nodes as
\begin{align*}
V_{S} = \lbrace v\in G : (v,e)\in \delta(S) \wedge v\in S\rbrace
\end{align*}
It is simple to verify that for each $v\in V^{S}$ there exist $u\in V_{S}$ such that $(u,v)\in \delta(S)$ and vice versa. We know that $\tilde{Y}_v^W = Y_v$ for $v\in V^{S}$. If $\delta(S)^- = \emptyset$ and all edges in $\delta(S)$ be correct, we can follow the labels node in $V^{S}$, so for each $v\in V^{S}$ we select the edges $(v,u)$ in $\delta(S)$  and we define $swap(S,v,u)$ so we have set of mapping $\Phi^+(S)=\lbrace swap(S,v,u): v\in V^{S} \wedge u\in V_{S} \wedge (u,v)\in \delta(S) \rbrace$, from Lemma 5, we know the that the number of violations in $S$ is same, so we resolved some violations in $\delta(S)$ which has contradiction with $\tilde{Y}^W\in \mathbb{I}_{min}$, so when $\delta(S)^- = \emptyset$, at least half of nodes are incorrect and we actually can derive the labeling.

Let $ \Gamma_k(S) $ be all label permutation in $S$ such that each permutation can be represented with a sequence of swaps. We can easily show that any sequence of swap is also does not change the edge violation, so we know for all $\pi\in \Gamma_k(S) $ the number of edge violations in $S$ is constant. Because $ V^{S} $ is correct labeled so at least $ \lceil \frac{\delta(S)}{2}\rceil $ of edges in $ \delta(S) $ are incorrect, otherwise there exist a labeling permutation that contradict with minimization of edge violation because the edges inside $S$ does not add violation but we resolve more than half of $ \delta(S) $, In this case we know the existential of such a this permutation but in binary and $\delta(S)^- = \emptyset$ cases, we can actually build the better permutation.
 
\end{proof}

\subsection{Proof of Lemma 7}
\begin{proof}
From Lemma 6, we know at least half of $\delta(S)$ for any $S\subset W^*$ are incorrect, so we used this to find an upper bound for this probability, so the best permutation of labels also should satisfy  Lemma 6 so we have
\begin{align*}
&\mathbb{P}\bigg(\min\limits_{\pi \in \Gamma_k(W^*)} \mathbbm{1}\lbrace \pi(\bar{Y}^{W^*})\neq Y^W\rbrace > 0\bigg) \leq \sum\limits_{S\subset W^*,S\cap W\neq \emptyset, \bar{S}\cap W\neq \emptyset} p^{\lceil \frac{\delta(S)}{2}\rceil}  \\& \leq \sum\limits_{S\subseteq W^*} p^{\lceil \frac{mincut^*(W)}{2}\rceil} \text{\textit{(because $|\delta(S)|\leq mincut^*(W)$ for all $S\subseteq W^*$})}\\ & \leq 2^{|W^*|}p^{\lceil\frac{mincut^*(W)}{2}\rceil} \hspace{0.2cm} \text{\textit{(there are $2^{|W^*|}$ subsets)}}
\end{align*}
where $mincut^*(W) = min_{S\subset W^*,S\cap W\neq\emptyset,\bar S\cap W^*\neq\emptyset}|\delta_{G(W)}(S)|$
\end{proof}

\subsection{Proof of Lemma 8}
We have following theorem from \citet{boucheron2003concentration}
\begin{theorem}
\label{con}
If there exists a constant $c>0$ such that $V_+ \leq cS$ then 
\begin{align*}
\mathbb{P}\lbrace S\geq \mathbb{E}[S]+t \rbrace \leq exp\bigg( \frac{-t^2}{4c\mathbb{E}[S]+2ct}\bigg)
\end{align*}
Subsequently, with probability at least $1-\delta$,
\begin{align*}
S &\leq \mathbb{E}(S)+\max\bigg\lbrace 4c\log(\frac{1}{\delta}) , 2\sqrt{2c\mathbb{E}(S)\log(\frac{1}{\delta})}\bigg\rbrace\\& \leq 2\mathbb{E}(S)+6c\log(\frac{1}{\delta}).
\end{align*}
\end{theorem}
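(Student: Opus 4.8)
The plan is to split Theorem~\ref{con} into two parts that are proved very differently. The first display is the sub-gamma tail inequality of~\citet{boucheron2003concentration} for a function $S=\nu(X_1,\dots,X_n)$ of independent variables whose upward variation $V_+$ is controlled by $S$ itself; I would invoke this as a black box. The second display (``Subsequently\dots'') is an elementary corollary obtained by inverting that tail bound, and this inversion is the only computation I would actually carry out.

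\emph{First part (imported).} Recall that $S^{(i)}$ denotes $S$ with $X_i$ replaced by an independent copy and $V_+=\sum_{i} \mathbb{E}\big[(S-S^{(i)})^2\,\mathbbm{1}\{S>S^{(i)}\}\,\big|\,X_1,\dots,X_n\big]$. Under the hypothesis $V_+\le cS$, the bound $\mathbb{P}\{S\ge \mathbb{E}[S]+t\}\le \exp\!\big(-t^2/(4c\mathbb{E}[S]+2ct)\big)$ is exactly the inequality established in~\citet{boucheron2003concentration}. For completeness one would note that its proof proceeds via the modified logarithmic Sobolev inequality: one bounds the entropy of $e^{\lambda S}$ by $\lambda^2\,\mathbb{E}[V_+ e^{\lambda S}]\le \lambda^2 c\,\mathbb{E}[S e^{\lambda S}]$, converts this into a differential inequality for the logarithmic moment generating function of $S-\mathbb{E}[S]$, solves it to obtain a sub-gamma profile with variance factor $2c\,\mathbb{E}[S]$ and scale parameter $c$, and then applies the Chernoff bound. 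I would not reprove this.

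\emph{Second part (the inversion).} Fix $\delta\in(0,1)$, write $L=\log(1/\delta)$, and set $t^{\star}=\max\{4cL,\ 2\sqrt{2c\,\mathbb{E}[S]\,L}\,\}$. I would verify $4c\,\mathbb{E}[S]+2ct^{\star}\le (t^{\star})^2/L$ by cases: if the maximum is attained at $2\sqrt{2c\mathbb{E}[S]L}$, then $4c\mathbb{E}[S]=(t^{\star})^2/(2L)$ while $2ct^{\star}\le (t^{\star})^2/(2L)$ because $t^{\star}\ge 4cL$; if it is attained at $4cL$, then $2ct^{\star}=(t^{\star})^2/(2L)$ while $4c\mathbb{E}[S]\le (t^{\star})^2/(2L)$ because $t^{\star}\ge 2\sqrt{2c\mathbb{E}[S]L}$. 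Substituting $t^{\star}$ into the first-part tail bound gives $\mathbb{P}\{S\ge \mathbb{E}[S]+t^{\star}\}\le e^{-L}=\delta$, so with probability at least $1-\delta$ we get $S\le \mathbb{E}[S]+\max\{4cL,\,2\sqrt{2c\mathbb{E}[S]L}\,\}$. The looser bound then follows from $\max\{a,b\}\le a+b$ together with the AM--GM inequality $2\sqrt{xy}\le x+y$ applied with $x=\mathbb{E}[S]$ and $y=2cL$, which replaces $2\sqrt{2c\mathbb{E}[S]L}$ by $\mathbb{E}[S]+2cL$ and yields $S\le 2\mathbb{E}[S]+6cL$.

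\emph{Main obstacle.} There is essentially no obstacle in the second part, which is a routine quadratic inversion. All the mathematical content lives in the first display, the theorem of~\citet{boucheron2003concentration} that is used here as a black box; a self-contained treatment would have to establish the entropy (modified log-Sobolev) inequality for $V_+$ and solve the resulting differential inequality, which is precisely the technical core of that reference and beyond the scope of this paper.
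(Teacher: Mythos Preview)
Your proposal is correct, and in fact it does strictly more than the paper does. In the paper this theorem is simply attributed to \citet{boucheron2003concentration} and stated without proof; both displays are treated as an imported result. Your plan likewise imports the first display as a black box, but you additionally supply a clean inversion argument for the ``Subsequently\dots'' part, which the paper omits entirely. Your case analysis for $t^\star=\max\{4cL,\,2\sqrt{2c\,\mathbb{E}[S]L}\}$ and the AM--GM step for the looser bound both check out.
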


Now we can prove this theorem,
\begin{proof}
We define a random variable that shows the number of the component that has an error concerning the real labels of each component. This random variable is a function of given edges $X$.

\begin{align}
S(X) = \sum\limits_{W\in \mathcal{W}} \min\limits_{\pi \in \Gamma_k(W)} \mathbbm{1}\lbrace \pi(\tilde{Y}^W(X))\neq Y^W\rbrace \label{S}
\end{align}

We know $S(X)=0$ means perfect matching with a given $X$ and in maximum $S(X)=| \mathcal{W}|$, and also $\tilde{Y}^W(X)$ is the component-wise estimator with given edge labels observation $X$. We know that $S:[k]^{|E|}\rightarrow \mathbb{R}$ so we can use Theorem \ref{con} if we can prove that $S(X)$ satisfies the assumption. 

\begin{align*}
&S(X)-S(X^{(e)})=\sum\limits_{W\in \mathcal{W}}\bigg( \min\limits_{\pi\in \Gamma_k(W)} \mathbbm{1}\bigg[ \pi(\tilde{Y}^W(X))\neq Y^W\bigg] - \min\limits_{\pi\in \Gamma_k(W)} \mathbbm{1}\bigg[ \pi(\tilde{Y}^W(X^{(e)}))\neq Y^W\bigg]  \bigg)
\end{align*} 

The right-hand side of the equation is zero for hypernodes that $e$ is not in them so we can reduce the equation to the hypernodes that have $e$, so we show it with $\mathcal{W}(e)$. Formally $\mathcal{W}(e)=\{W\in \mathcal{W} |e \in E(W)\}$  

\begin{align*}
&S(X)-S(X^{(e)})=\sum\limits_{W\in \mathcal{W}(e)}\bigg( \min\limits_{\pi\in \Gamma_k(W)} \mathbbm{1}\bigg[ \pi(\tilde{Y}^W(X))\neq Y^W\bigg] - \min\limits_{\pi\in \Gamma_k(W)} \mathbbm{1}\bigg[ \pi(\tilde{Y}^W(X^{(e)}))\neq Y^W\bigg]  \bigg)
\end{align*} 

For evaluate Theorem \ref{con}, in next proposition we showed $V_+$ is bounded.

\begin{proposition}
The variation of $V_+$ of $S(X)$ in Equation \ref{S} is bounded, $ V_+\leq cS(X) $.

\end{proposition}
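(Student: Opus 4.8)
The plan is to verify the self-bounding hypothesis $V_+ \leq c\,S(X)$ with $c = \max_{e\in E}|\mathcal{W}(e)|\cdot\max_{W\in\mathcal{W}}|E(W)|$ directly from the combinatorics of the statistic $S(X)$ defined in~\eqref{S}, and then feed it into Theorem~\ref{con}. Write $g_W(X) \triangleq \min_{\pi\in\Gamma_k(W)}\mathbbm{1}\{\pi(\tilde Y^W(X))\neq Y^W\}$, so that $S(X) = \sum_{W\in\mathcal{W}} g_W(X)$, and recall $V_+ = \sum_{e\in E}\mathbb{E}\big[(S-S^{(e)})^2\,\mathbbm{1}(S>S^{(e)})\,\big|\,X\big]$, where $X^{(e)}$ resamples only the coordinate $X_e$ by an independent copy. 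The display just before this proposition already reduces $S(X)-S(X^{(e)})$ to a sum over the bags $W\in\mathcal{W}(e)$ that actually contain $e$, so this is the only set of bags I need to track.

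First I would record two elementary facts. (i) Each $g_W$ takes values in $\{0,1\}$ (it is a minimum of indicators), so every summand $g_W(X)-g_W(X^{(e)})$ lies in $\{-1,0,1\}$, whence $|S(X)-S(X^{(e)})|\leq |\mathcal{W}(e)|$. (ii) A bag contributes $+1$ to $S(X)-S(X^{(e)})$ only when $g_W(X)=1$; dropping the non-positive terms gives the \emph{pointwise} bound $\big(S(X)-S(X^{(e)})\big)_+ \leq \sum_{W\in\mathcal{W}(e)} g_W(X)$, whose right-hand side depends on $X$ alone. Then I would chain these: by (i), $(S-S^{(e)})^2\,\mathbbm{1}(S>S^{(e)}) \leq |\mathcal{W}(e)|\,\big(S-S^{(e)}\big)_+$, and taking $\mathbb{E}[\,\cdot\mid X]$ together with (ii) (which leaves the $X$-measurable bound unchanged) gives
\[
V_+ \;\leq\; \sum_{e\in E}|\mathcal{W}(e)|\sum_{W\in\mathcal{W}(e)} g_W(X)\;\leq\;\Big(\max_{e\in E}|\mathcal{W}(e)|\Big)\sum_{e\in E}\sum_{W\in\mathcal{W}(e)} g_W(X).
\]
Exchanging the order of summation, $\sum_{e}\sum_{W\in\mathcal{W}(e)} g_W(X) = \sum_{W\in\mathcal{W}}|E(W)|\,g_W(X) \leq \big(\max_{W}|E(W)|\big)\,S(X)$, which yields $V_+\leq c\,S(X)$. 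The extended-bag version (replacing $W$ by $W^*$ and $E(W)$ by $E(W^*)$) is word-for-word identical.

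The \textbf{main obstacle}, such as it is, is fact (ii): one must argue that the positive part of $S(X)-S(X^{(e)})$ is dominated by a functional of $X$ only, so that the conditioning in the definition of $V_+$ becomes vacuous. The point is that a bag can be ``freshly incorrect'' after a resampling of $X_e$ only if it was already incorrect under $X$; this uses that $g_W$ is permutation-invariant and hence depends on $\tilde Y^W$ only through its edge class, which is exactly the invariance of edge agreements under label swaps from Lemma~\ref{lem:swap}. Everything else --- the $\{-1,0,1\}$ range in (i) and the counting identity $\sum_e\sum_{W\in\mathcal{W}(e)}(\cdot) = \sum_W|E(W)|(\cdot)$ --- is routine bookkeeping. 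Once the proposition holds, Theorem~\ref{con} applies with this $c$ and with $\mathbb{E}[S]=\sum_W\mathbb{P}(g_W(X)=1)$ controlled through Lemma~\ref{expectation}, producing the high-probability inequality of Lemma~\ref{globalbound}.
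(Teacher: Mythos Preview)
Your proposal is correct and follows essentially the same route as the paper: bound $(S-S^{(e)})^2\,\mathbbm{1}(S>S^{(e)})$ by $|\mathcal{W}(e)|\sum_{W\in\mathcal{W}(e)}g_W(X)$, sum over $e\in E$, exchange the order of summation, and pull out $\max_{W}|E(W)|$ to obtain $V_+\leq cS(X)$ with $c=\max_{e}|\mathcal{W}(e)|\cdot\max_{W}|E(W)|$. One minor comment: your ``main obstacle'' (ii) requires no appeal to Lemma~\ref{lem:swap}; it is immediate from $g_W(X^{(e)})\geq 0$, since $S-S^{(e)}=\sum_{W\in\mathcal{W}(e)}\big(g_W(X)-g_W(X^{(e)})\big)\leq\sum_{W\in\mathcal{W}(e)}g_W(X)$.
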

\begin{proof}

 \begin{align*}
(&S(X)-S(X^{(e)}))^2. \mathbbm{1}\bigg(S(X)>S(X)^{(e)}\bigg) =\\ &\mathbbm{1}\bigg(S(X)>S(X)^{(e)}\bigg)\times\sum\limits_{W\in \mathcal{W}(e)}\bigg( \min\limits_{\pi\in \Gamma_k(W)} \mathbbm{1}\bigg[ \pi(\tilde{Y}^W(X))\neq Y^W\bigg] - \min\limits_{\pi\in \Gamma_k(W)} \mathbbm{1}\bigg[ \pi(\tilde{Y}^W(X^{(e)}))\neq Y^W\bigg]  \bigg)^2 \\ & \leq \sum\limits_{W\in \mathcal{W}(e)}\bigg( \min\limits_{\pi\in \Gamma_k(W)} \mathbbm{1}\bigg[ \pi(\tilde{Y}^W(X))\neq Y^W\bigg] \bigg)^2 \\&\text{//second part removed and square of minus part added}\\ & \leq |\mathcal{W}(e)| \sum\limits_{W\in \mathcal{W}(e)} \min\limits_{\pi\in \Gamma_k(W)} \mathbbm{1}\bigg[ \pi(\tilde{Y}^W(X))\neq Y^W\bigg]
\end{align*} 
Now we can use this for calculating the expectation.

We directly start with $V_+$ to find its bound.
\begin{align*}
V_+ &=\sum\limits_{e\in E} \mathbb{E}\bigg[  (S(X)-S(X^{(e)}))^2\cdot \mathbbm{1}\bigg(S(X)>S(X)^{(e)}\bigg) \bigg|X_1,X_2,\dots,X_n \bigg]\\ &= \sum\limits_{e\in E}   (S(X)-S(X^{(e)}))^2\cdot \mathbbm{1}\bigg(S(X)>S(X)^{(e)}\bigg)\times\mathbb{P}\bigg[  (S(X)-S(X^{(e)}))^2\cdot \mathbbm{1}\bigg(S(X)>S(X)^{(e)}\bigg) \bigg|X_1,X_2,\dots,X_n \bigg]\\ & \text{(we assume all probabilities are 1)} \\ & \leq \sum\limits_{e\in E}   (S(X)-S(X^{(e)}))^2. \mathbbm{1}\bigg(S(X)>S(X)^{(e)}\bigg)\\ & \text{//from last result}
\\ & \leq \sum\limits_{e\in E} |\mathcal{W}(e)| \sum\limits_{W\in \mathcal{W}(e)} \min\limits_{\pi\in \Gamma_k(W)} \mathbbm{1}\bigg[ \pi(\tilde{Y}^W(X))\neq Y^W\bigg] \\ & \leq \max\limits_{e\in E}|\mathcal{W}(e)| \sum\limits_{e\in E} \sum\limits_{W\in \mathcal{W}(e)} \min\limits_{\pi\in \Gamma_k(W)} \mathbbm{1}\bigg[ \pi(\tilde{Y}^W(X))\neq Y^W\bigg] \\ & = \max\limits_{e\in E}|\mathcal{W}(e)| \sum\limits_{W\in \mathcal{W}(e)} \sum\limits_{e\in E} \min\limits_{\pi\in \Gamma_k(W)} \mathbbm{1}\bigg[ \pi(\tilde{Y}^W(X))\neq Y^W\bigg] \\ & \leq \max\limits_{e\in E}|\mathcal{W}(e)|\max\limits_{W\in \mathcal{W}}|E(W)| \sum\limits_{W\in \mathcal{W}(e)} \min\limits_{\pi\in \Gamma_k(W)} \mathbbm{1}\bigg[ \pi(\tilde{Y}^W(X))\neq Y^W\bigg] \\ & = \max\limits_{e\in E}|\mathcal{W}(e)|\max\limits_{W\in \mathcal{W}}|E(W)| S(X)
\end{align*}
Therefore, there is $c=\max\limits_{e\in E}|\mathcal{W}(e)|\max\limits_{W\in \mathcal{W}}|E(W)|$ such that $ V_+\leq cS(X) $.

\end{proof}

So with $c=\max\limits_{e\in E}|\mathcal{W}(e)|\max\limits_{W\in \mathcal{W}}|E(W)|$, the Theorem \ref{con} with probability at least $1-\frac{\delta}{2}$ is valid,

\begin{align*}
S \leq 2\mathbb{E}(S)+6\max\limits_{e\in E}|\mathcal{W}(e)|\max\limits_{W\in \mathcal{W}}|E(W)|\log(\frac{2}{\delta})
\end{align*}

We only need to derive $\mathbb{E}(S)$ using Lemma 7, because $\tilde Y \in \mathbb{I}_{\bar Y}$, so Lemma 7 is also valid for $\tilde Y$,

\begin{align*}
\mathbb{E}(S) &= \sum\limits_{W\in \mathcal{W}} \mathbb{P}\bigg( \min\limits_{\pi\in \Gamma_k(W)} \mathbbm{1}\bigg\lbrace \pi(\tilde{Y}^W(X))\neq Y^W\bigg\rbrace \bigg)\times \min\limits_{\pi\in \Gamma_k(W)} \mathbbm{1}\bigg\lbrace \pi(\tilde{Y}^W(X))\neq Y^W\bigg\rbrace \\ & = \sum\limits_{W\in \mathcal{W}} \mathbb{P}\bigg( \min\limits_{\pi\in \Gamma_k(W)} \mathbbm{1}\bigg\lbrace \pi(\tilde{Y}^W(X))\neq Y^W\bigg\rbrace=1 \bigg) \\ & = \sum\limits_{W\in \mathcal{W}} \mathbb{P}\bigg( \min\limits_{\pi\in \Gamma_k(W)} \mathbbm{1}\bigg\lbrace \pi(\tilde{Y}^W(X))\neq Y^W\bigg\rbrace>0 \bigg) \\ &  \leq \sum\limits_{W\in \mathcal{W}} 2^{|W|}p^{\lceil\frac{mincut(W)}{2}\rceil}  \text{  // from Lemma 7}
\end{align*}

so finally we have,

\begin{align*}
&\min\limits_{\pi \in [\Gamma_k]^\mathcal{W}} \sum\limits_{W\in \mathcal{W}} \mathbbm{1}\lbrace \pi(\tilde{Y}^W) \neq Y^W\rbrace \leq\sum\limits_{W\in \mathcal{W}} 2^{|W|+1}p^{\lceil\frac{mincut(W)}{2}\rceil}+6\max\limits_{e\in E}|\mathcal{W}(e)|\max\limits_{W\in \mathcal{W}}|E(W)|\log(\frac{2}{\delta})
\end{align*}

\end{proof}

\subsection{Proof of Theorem 2}
From Lemma 8, we can directly proof same result for extend of tree components.
\addtocounter{corollary}{1}
\begin{corollary}
\label{col}
There is straightforward deduction to derive the result for $W^*=EXT(W)$ on $T=(\mathcal{W},F)$ with probability $1-\frac{\delta}{2}$,
\begin{align*}
&\min\limits_{\pi \in [\Gamma_k]^\mathcal{W}} \sum\limits_{W\in \mathcal{W}} \mathbbm{1}\lbrace \pi(\bar{Y}^{W^*}) \neq Y^{W^*}\rbrace \leq \sum\limits_{W\in \mathcal{W}} 2^{|W^*|+1}p^{\lceil\frac{mincut^*(W)}{2}\rceil}+ 6\max\limits_{e\in E}|\mathcal{W}^*(e)|\max\limits_{W\in \mathcal{W}}|E(W^*)|\log(\frac{2}{\delta}) 
\end{align*}
we define the maximum size of a hyper-graph as its degree  $deg_E^*(T)=\max\limits_{e\in E}|\mathcal{W}^*(e)|$ which $\mathcal{W}^*(e)=\{W\in \mathcal{W} |e \in E(W^*)\}$ and $E(W^*)$ is the set of all edged in $E$ that are in $W^*$, so we have 
\begin{align*}
&\min\limits_{\pi \in [\Gamma_k]^\mathcal{W}} \sum\limits_{W\in \mathcal{W}} \mathbbm{1}\lbrace \pi(\bar{Y}^{W^*}) \neq Y^{W^*}\rbrace \leq 2^{wid^*(W)+2}\sum\limits_{W\in \mathcal{W}}p^{\lceil\frac{mincut^*(W)}{2}\rceil}+6deg_E^*(T)\max\limits_{W\in \mathcal{W}}|E(W^*)|\log(\frac{2}{\delta})
\end{align*}

Where $wid^*(W)\triangleq\max_{W\in \mathcal{W}}|W^*|-1$.

\end{corollary}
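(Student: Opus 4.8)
This corollary is the extended-bag ($W^{*}$) analogue of Lemma~\ref{globalbound}, so the plan is to rerun the self-bounding concentration argument used for Lemma~\ref{globalbound}, replacing every bag $W$ by its extension $W^{*}=EXT(W)$ and tracking how the relevant combinatorial constants change. Define the $X$-measurable random variable
\[
S^{*}(X)\;=\;\sum_{W\in\mathcal{W}}\ \min_{\pi\in\Gamma_k}\ \mathbbm{1}\big\{\pi(\bar Y^{W^{*}})\neq Y^{W^{*}}\big\},
\]
which counts how many extended bags fail to be recoverable, up to a global relabeling, from the edge observations $X$; it takes values in $\{0,1,\dots,|\mathcal{W}|\}$, and since the per-bag minimizations are independent of one another, $S^{*}(X)$ equals the left-hand side of the claimed inequality. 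The proof then has three pieces: (i) verify the self-bounding hypothesis $V_{+}\le c\,S^{*}$ of Theorem~\ref{con} with $c=deg_E^{*}(T)\cdot\max_{W\in\mathcal{W}}|E(W^{*})|$; (ii) bound $\mathbb{E}[S^{*}]$ using Lemma~\ref{expectation}; (iii) combine the two and simplify.

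For (i), observe that flipping a single edge observation $X_{e}$ can change $\bar Y^{W^{*}}$ only for bags $W$ with $e\in E(W^{*})$, i.e.\ for $W\in\mathcal{W}^{*}(e)$; hence $S^{*}(X)-S^{*}(X^{(e)})$ is a sum over $\mathcal{W}^{*}(e)$ of differences of $0/1$ indicators. On the event $\{S^{*}(X)>S^{*}(X^{(e)})\}$ the subtracted terms are non-positive and may be dropped, and using that a sum of $m$ indicators squared is at most $m$ times the sum one obtains
\[
\big(S^{*}(X)-S^{*}(X^{(e)})\big)^{2}\,\mathbbm{1}\{S^{*}(X)>S^{*}(X^{(e)})\}\ \le\ |\mathcal{W}^{*}(e)|\sum_{W\in\mathcal{W}^{*}(e)}\min_{\pi\in\Gamma_k}\mathbbm{1}\{\pi(\bar Y^{W^{*}})\neq Y^{W^{*}}\}.
\]
Summing over $e\in E$, bounding $|\mathcal{W}^{*}(e)|\le deg_E^{*}(T)$, exchanging the order of summation, and bounding the number of edges in any bag by $\max_{W}|E(W^{*})|$ gives $V_{+}\le deg_E^{*}(T)\max_{W}|E(W^{*})|\cdot S^{*}(X)$, exactly mirroring the computation in the proof of Lemma~\ref{globalbound}. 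Theorem~\ref{con} then yields $S^{*}\le 2\,\mathbb{E}[S^{*}]+6c\log(2/\delta)$ with probability at least $1-\delta/2$.

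For (ii), Lemma~\ref{expectation} gives $\mathbb{P}\big(\min_{\pi\in\Gamma_k}\mathbbm{1}\{\pi(\bar Y^{W^{*}})\neq Y^{W^{*}}\}>0\big)\le 2^{|W^{*}|}p^{\lceil \mathtt{mincut}^{*}(W)/2\rceil}$ for each $W$, so by linearity of expectation $\mathbb{E}[S^{*}]\le\sum_{W\in\mathcal{W}}2^{|W^{*}|}p^{\lceil \mathtt{mincut}^{*}(W)/2\rceil}$; substituting into the concentration bound from (i) produces the first displayed inequality of the corollary. For (iii), since $|W^{*}|\le wid^{*}(W)+1$ for every bag, we may replace each $2^{|W^{*}|+1}$ by the uniform $2^{wid^{*}(W)+2}$ and factor it out of the sum, giving the second displayed inequality. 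I do not expect a serious obstacle: the only place the categorical setting genuinely matters is that the per-bag failure event must be driven by at least $\lceil \mathtt{mincut}^{*}(W)/2\rceil$ bad edges, which is precisely what the categorical Swapping Lemma (Lemma~\ref{swappinglemma}), together with swap-invariance of edge violations (Lemma~\ref{lem:swap}), already guarantees; given that, the rest is the self-bounding martingale-type concentration of Theorem~\ref{con} with the constants tracked as above. This corollary is then the cross-bag input needed, exactly as in the tree analysis, to define the restricted hypothesis space $\mathcal{F}$ and close the ERM argument behind Theorem~\ref{maintheorem}.
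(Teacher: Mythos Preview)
Your proposal is correct and follows exactly the approach the paper intends: the paper itself gives no separate proof for this corollary, merely noting it is a ``straightforward deduction'' from Lemma~\ref{globalbound}, and what you have written is precisely that deduction---rerunning the self-bounding concentration argument with $W^{*}$ in place of $W$, so that $\mathcal{W}(e)$ becomes $\mathcal{W}^{*}(e)$, $|E(W)|$ becomes $|E(W^{*})|$, and the expectation bound comes directly from Lemma~\ref{expectation}. Your observation that the outer minimum over $[\Gamma_k]^{\mathcal{W}}$ factorizes into per-bag minima (so the left-hand side equals $S^{*}(X)$) and your final simplification via $|W^{*}|\le wid^{*}(W)+1$ are both exactly right.
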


Now we can start to Theorem 2, 

\begin{proof}
To prove this theorem, we need to define a hypothesis class and find information bound for the optimal solution in there, next, we can find a bound for the distance of the real answer of the problem and best answer in the hypothesis class.

Consider the following permutation finding of the components in T:

\begin{align*}
\Pi^\star = \argmin\limits_{\Pi\in \Gamma_k^{|\mathcal{W}|}} \sum\limits_{W\in \mathcal{W}} \mathbbm{1}\big\lbrace \Pi(\tilde{Y}^{W}) \neq Y^W\big\rbrace
\end{align*}

from Corollary \ref{col}, we know that 

\begin{align*}
\min\limits_{\pi \in [\Gamma_k]^\mathcal{W}} \sum\limits_{W\in \mathcal{W}} \mathbbm{1}\lbrace \pi(\tilde{Y}^{W}) \neq Y^W\rbrace \leq K_n
\end{align*}

Because $\tilde{Y}^{W^*}$ and $\bar{Y}^{W^*}$ both are in $\mathbb{I}_{\bar{Y}^{W^*}}$ and also $\tilde{Y}^{W}$ is $\tilde{Y}^{W^*}$ restricted to $W$ and $K_n$ is

\begin{align*}
 K_n \triangleq &2^{wid^*(W)+2}\sum\limits_{W\in \mathcal{W}}p^{\lceil\frac{mincut^*(W)}{2}\rceil}+\\&6deg_E^*(T)\max\limits_{W\in \mathcal{W}}|E(W^*)|\log(\frac{2}{\delta})
\end{align*}

So if we have $\Pi^\star $, we can produce a vertex prediction with at most $ K_n$ mistakes with probability $1-\delta$. However, computing $\Pi^\star $ is impossible because we do not have access to $Y$, so we need to see using $Z$ as a noisy version of $Y$, how much approximation error will add to the theoretical bound of prediction.

We define the following hypothesis class, which is defined with $K_n$ so we make even bigger to include an even better possible solution.

\begin{align*}
&\mathcal{F}\triangleq ([k] \times [k])^{\mathcal{W}} \\&\text{~~~s.t.}\sum\limits_{(W,W')\in F} \mathbbm{1}\{\psi(\pi_W,\pi_{W'})\neq S(W,W')\}\leq L_n\big\rbrace
\end{align*}

In this context, each element of $([k] \times [k])^{\mathcal{W}}$ is a vector of size $\mathcal{W}$ element which each sown as $\pi$. Our goal is to show that best permutation is in $\mathcal{F}$ with high probability.

Such that $L_n = deg(T).K_n$ which enrich the hypothesis class with make it bigger than using $K_n$. We know that if $\min\limits_{\pi \in [\Gamma_k(W)]}  \mathbbm{1}\lbrace \pi(\tilde{Y}^{W}) \neq Y^W\rbrace = 0$ for a component $W$ then we can find a $\bar{\pi}_W \in \Gamma_k$ such that we can effect on $Y^W$ to get $\tilde{Y}^W$ so $\bar{\pi}_W(Y^W)= \tilde{Y}^W$. 

We also have 

\begin{align*}
&\sum\limits_{(W,W')\in F} \mathbbm{1}\{\psi(\pi_W,\pi_{W'})\neq S(W,W')\} = \sum\limits_{(W,W')\in F} \mathbbm{1}\{\psi(\pi_W,\pi_{W'})\neq [2.\mathbbm{1}(\tilde{Y}_v^{W},\tilde{Y}_v^{W'})-1]\}
\end{align*}

and we know $v\in W\cap W'$, so if for each $W\in \mathcal{W}$ we have $\bar{\pi}_W$, if the range of $\pi_W$ and $\pi_{W'}$ be same they get $1$ and their range is $Y$, the right hand side also is $1$ because the range of two permutations are $Y^{W}$ and $v\in W\cap W'$, so $\mathbbm{1}\{\psi(\pi_W,\pi_{W'})\neq [2.\mathbbm{1}(\tilde{Y}_v^{W},\tilde{Y}_v^{W'})-1]\} = 0$ when ever $W$ and $W'$ have no errors. Therefore $\Pi^\star\in \mathcal{F}$ with probability $1-\delta$. The complexity of hypothesis class can parametrized with the size of $\mathcal{F}(X)$ so we have

\begin{align*}
|\mathcal{F}(X)|&=\sum\limits_{m=0}^{L_n} {|\mathcal{W}| \choose m} k!^m  \\& \leq \sum\limits_{m=0}^{L_n} {|\mathcal{W}| \choose m} k!^{L_n} =k!^{L_n}  \sum\limits_{m=0}^{L_n} {|\mathcal{W}| \choose m}  \\&\leq k!^{L_n} \bigg( \frac{e|\mathcal{W}|}{L_n}\bigg)^{L_n} \leq  \bigg( \frac{e.n.k!}{L_n}\bigg)^{L_n} 
\end{align*}

We consider non-redundant decomposed trees which means for $(W_i, W_j) \in F$ we have $W_i\backslash (W_i\cap W_j) \neq \emptyset$. In Algorithm 3, we use $Z$ instead of $Y$. So we have 

\begin{align*}
\hat{\pi} = \min\limits_{\pi \in \mathcal{F}(X)} \sum\limits_{W\in \mathcal{W}}\sum\limits_{v\in W} \mathbbm{1}\big\lbrace \pi(\tilde{Y}_v^{W}) \neq Z_v\big\rbrace .
\end{align*}

We have following lemma to continue the proof 
\addtocounter{lemma}{8}
\begin{lemma}
\label{lem:approx}
For $\sum\limits_{v\in W} \mathbbm{1}\big\lbrace \hat{\pi}(\tilde{Y}_v^{W}) \neq \pi^\star(\tilde{Y}_v^{W})\big\rbrace$ we have following approximation,

\begin{align*}
&\sum\limits_{v\in W} \mathbbm{1}\big\lbrace \hat{\pi}(\tilde{Y}_v^{W}) \neq \pi^\star(\tilde{Y}_v^{W})\big\rbrace \\& = \frac{1}{c} \sum\limits_{v\in W\wedge \pi^\star(\tilde{Y}_v^{W}) = Y_v}\bigg\lbrace \mathbb{P}_Z\big\lbrace  \hat{\pi}(\tilde{Y}_v^{W}) \neq Z_v \big\rbrace - \mathbb{P}_Z\big\lbrace   \pi^*_W(\tilde{Y}_v^{W}) \neq Z_v \big\rbrace  \bigg\rbrace_= \\ & + \frac{1}{c'} \sum\limits_{v\in W\wedge \pi^\star(\tilde{Y}_v^{W}) \neq Y_v} \bigg\lbrace \mathbb{P}_Z\big\lbrace  \hat{\pi}(\tilde{Y}_v^{W}) \neq Z_v \big\rbrace - \mathbb{P}_Z\big\lbrace   \pi^*_W(\tilde{Y}_v^{W}) \neq Z_v \big\rbrace  \bigg\rbrace_{\neq}
\end{align*}

such that $c=  -\big( 1-\frac{k}{k-1}q\big)$ and $c' = 1-\frac{k}{k-1}q$.

\end{lemma}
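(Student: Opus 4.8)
The plan is to turn Lemma~\ref{lem:approx} into a node-by-node identity and then sum, in direct analogy with Lemma~\ref{hamapproximate} (the ``Hamming error equals a scaled excess risk'' identity for trees), the only new feature being that the reference labeling is now $\pi^\star(\tilde Y^W)$ rather than the ground truth $Y$, and $\pi^\star(\tilde Y^W_v)$ need not equal $Y_v$ at every $v$. Because the constant relating an indicator of disagreement to the corresponding gap of $Z$-probabilities depends on whether the reference label at $v$ is the true label or not, the sum over $W$ would be split according to $\pi^\star(\tilde Y^W_v)=Y_v$ versus $\pi^\star(\tilde Y^W_v)\neq Y_v$; this is exactly the split in the statement, and it produces the two constants $c$ and $c'$, which differ only by sign, $|c|=|c'|=1-\tfrac{k}{k-1}q$.

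\emph{The regime $\pi^\star(\tilde Y^W_v)=Y_v$.} Here $\mathbbm{1}\{\hat\pi(\tilde Y^W_v)\neq\pi^\star(\tilde Y^W_v)\}=\mathbbm{1}\{\hat\pi(\tilde Y^W_v)\neq Y_v\}$ and $P_Z(\pi^\star(\tilde Y^W_v)\neq Z_v)=P_Z(Y_v\neq Z_v)$, so the desired pointwise identity is precisely Lemma~\ref{hamapproximate} applied with the fixed label $\hat Y_v:=\hat\pi(\tilde Y^W_v)$. Concretely, if $\hat\pi(\tilde Y^W_v)=Y_v$ both sides vanish, and if $\hat\pi(\tilde Y^W_v)\neq Y_v$ then, using $P_Z(Z_v\neq Y_v)=q$ and $P_Z(Z_v\neq a)=1-\tfrac{q}{k-1}$ for any $a\neq Y_v$, the gap equals $(1-\tfrac{q}{k-1})-q=1-\tfrac{k}{k-1}q$, so the indicator is recovered by dividing by this constant.

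\emph{The regime $\pi^\star(\tilde Y^W_v)\neq Y_v$.} Now Lemma~\ref{hamapproximate} does not apply verbatim, since $P_Z(\pi^\star(\tilde Y^W_v)\neq Z_v)=1-\tfrac{q}{k-1}\neq q$. Instead I would recompute both probabilities directly from the uniform node-noise model using the same two elementary identities ($P_Z(Z_v\neq Y_v)=q$; $P_Z(Z_v\neq a)=1-\tfrac{q}{k-1}$ for $a\neq Y_v$), and case-split on $b:=\hat\pi(\tilde Y^W_v)$. If $b=\pi^\star(\tilde Y^W_v)$ both sides are $0$. If $b=Y_v$, the indicator is $1$ and the gap $P_Z(b\neq Z_v)-P_Z(\pi^\star(\tilde Y^W_v)\neq Z_v)=q-(1-\tfrac{q}{k-1})=-(1-\tfrac{k}{k-1}q)$, so the indicator is recovered upon dividing by that (negative) constant; this is the case that pins down the sign attached to this sum. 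Summing the two regimes over $v\in W$ then gives the claimed decomposition.

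\emph{Main obstacle.} The one configuration that does not fit a clean pointwise equality occurs when $k\geq 3$ and, in the second regime, $b=\hat\pi(\tilde Y^W_v)$ is a \emph{third} label, distinct from both $Y_v$ and $\pi^\star(\tilde Y^W_v)$: then the left side is $1$ while the gap is $(1-\tfrac{q}{k-1})-(1-\tfrac{q}{k-1})=0$, so no finite constant reproduces it. This is the reason the statement is phrased as an approximation; I would handle it by weakening the corresponding sub-sum to an inequality (equivalently, paying an additive slack equal to the number of such nodes). This is benign for the argument of Theorem~\ref{maintheorem}: every offending node satisfies $\pi^\star(\tilde Y^W_v)\neq Y_v$, and the total number of nodes across all bags on which $\pi^\star(\tilde Y^W)$ disagrees with $Y^W$ is already bounded (by the quantity controlled in Lemma~\ref{globalbound} and its extension to the extended bags $W^*$), so the slack is of lower order and does not change the final $\tilde O\!\big(k\log k\, p^{\lceil\Delta/2\rceil} n\big)$ rate. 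For $k=2$ the offending case cannot arise, the identity is exact, and one recovers the binary bookkeeping of~\citet{foster}.
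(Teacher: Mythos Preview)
Your approach is the same as the paper's: split the sum over $W$ according to whether $\pi^\star(\tilde Y^W_v)=Y_v$, and on each piece compute the pointwise gap $P_Z(\hat\pi(\tilde Y^W_v)\neq Z_v)-P_Z(\pi^\star(\tilde Y^W_v)\neq Z_v)$ directly from the two identities $P_Z(Z_v\neq Y_v)=q$ and $P_Z(Z_v\neq a)=1-\tfrac{q}{k-1}$ for $a\neq Y_v$, exactly as in Lemma~\ref{hamapproximate}. So at the level of strategy there is nothing to add.

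Two points where your write-up is sharper than the paper's own proof are worth flagging. First, your sign bookkeeping is the correct one: in the $\pi^\star(\tilde Y^W_v)=Y_v$ regime the gap equals $+(1-\tfrac{k}{k-1}q)$, while in the $\pi^\star(\tilde Y^W_v)\neq Y_v$ regime with $\hat\pi(\tilde Y^W_v)=Y_v$ the gap equals $-(1-\tfrac{k}{k-1}q)$. The paper's stated assignment of $c$ and $c'$ to the two regimes is swapped, and its intermediate probability computations contain arithmetic slips; the subsequent proof of Theorem~\ref{maintheorem} then compensates by adding a $2\sum_v\mathbbm{1}\{\pi^\star(\tilde Y^W_v)\neq Y_v\}$ term to flip the sign back. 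Second, you explicitly isolate the one configuration on which no pointwise identity can hold---$k\geq3$, $\pi^\star(\tilde Y^W_v)\neq Y_v$, and $\hat\pi(\tilde Y^W_v)$ a third label distinct from both---where the indicator is $1$ but the probability gap is $0$. The paper's proof does not separate this case. Your proposed fix, namely to carry those vertices as additive slack bounded by $\sum_W\mathbbm{1}\{\pi^\star(\tilde Y^W)\neq Y^W\}$ (already controlled via Lemma~\ref{globalbound}), is exactly how the surrounding Theorem~\ref{maintheorem} argument absorbs all $\neq$-regime contributions into the $3|W|\,\mathbbm{1}\{\pi^\star(\tilde Y^W)\neq Y^W\}$ term, so it integrates without changing the final rate.
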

\begin{proof}
We prove this equation step by step 
\begin{align*}
&\sum\limits_{v\in W} \mathbbm{1}\big\lbrace \hat{\pi}(\tilde{Y}_v^{W}) \neq \pi^\star(\tilde{Y}_v^{W})\big\rbrace = \sum\limits_{v\in W\wedge \pi^\star(\tilde{Y}_v^{W}) = Y_v} \mathbbm{1}\big\lbrace \hat{\pi}(\tilde{Y}_v^{W}) \neq \pi^\star(\tilde{Y}_v^{W})\big\rbrace_{=} + \sum\limits_{v\in W\wedge \pi^\star(\tilde{Y}_v^{W}) \neq Y_v} \mathbbm{1}\big\lbrace \hat{\pi}(\tilde{Y}_v^{W}) \neq \pi^\star(\tilde{Y}_v^{W})\big\rbrace_{\neq}\\ &= \frac{1}{c} \sum\limits_{v\in W\wedge \pi^\star(\tilde{Y}_v^{W}) = Y_v}\bigg\lbrace \mathbb{P}_Z\big\lbrace  \hat{\pi}(\tilde{Y}_v^{W}) \neq Z_v \big\rbrace - \mathbb{P}_Z\big\lbrace   \pi^*_W(\tilde{Y}_v^{W}) \neq Z_v \big\rbrace  \bigg\rbrace_= \\ & + \frac{1}{c'} \sum\limits_{v\in W\wedge \pi^\star(\tilde{Y}_v^{W}) \neq Y_v} \bigg\lbrace \mathbb{P}_Z\big\lbrace  \hat{\pi}(\tilde{Y}_v^{W}) \neq Z_v \big\rbrace - \mathbb{P}_Z\big\lbrace   \pi^*_W(\tilde{Y}_v^{W}) \neq Z_v \big\rbrace  \bigg\rbrace_{\neq}
\end{align*}

We have to derive each part of the relation separately, for both sigma if $\hat{\pi}(\tilde{Y}_v^{W}) = \pi^\star(\tilde{Y}_v^{W})$ the above is true for any $c$ and $c'$.

We need to calculate $c$ and $c'$, for $c$ which is $\pi^\star(\tilde{Y}_v^{W}) = Y_v$, we have

\begin{align*}
 \mathbb{P}_Z\big\lbrace  \hat{\pi}(\tilde{Y}_v^{W}) \neq Z_v &\wedge \pi^\star(\tilde{Y}_v^{W}) = Y_v\big\rbrace=  \frac{k-2}{k-1}q 
\end{align*}
and $\mathbb{P}_Z\big\lbrace   \pi^*_W(\tilde{Y}_v^{W}) \neq Z_v \big| \pi^\star(\tilde{Y}_v^{W}) = Y_v \big\rbrace = 1-q$ so we can calculate $c$.

\begin{align*}
& \mathbb{P}_Z\big\lbrace  \hat{\pi}(\tilde{Y}_v^{W}) \neq Z_v \big\rbrace - \mathbb{P}_Z\big\lbrace   \pi^*_W(\tilde{Y}_v^{W}) \neq Z_v \big\rbrace =  \frac{k-2}{k-1}q  - (1-q)=-\big( 1-\frac{k}{k-1}q\big)
\end{align*}
so $c= -\big( 1-\frac{k}{k-1}q\big)$. Next, we calculate $c'$ which is $\pi^\star(\tilde{Y}_v^{W}) \neq Y_v$, therefore we have

\begin{align*}
 &\mathbb{P}_Z\big\lbrace  \hat{\pi}(\tilde{Y}_v^{W}) \neq Z_v \wedge  \pi^\star(\tilde{Y}_v^{W}) \neq Y_v\big\rbrace=  \\& \mathbb{P}_Z\big\lbrace  \hat{\pi}(\tilde{Y}_v^{W}) \neq Z_v \wedge  \pi^\star(\tilde{Y}_v^{W}) \neq Y_v\wedge Y_v=Z_v\big\rbrace + \mathbb{P}_Z\big\lbrace  \hat{\pi}(\tilde{Y}_v^{W}) \neq Z_v \wedge  \pi^\star(\tilde{Y}_v^{W}) \neq Y_v\wedge Y_v\neq Z_v\big\rbrace = \\ &  \frac{k-2}{k-1}q + 1 - \frac{1}{k-1}q = 1- q
\end{align*}

and for second part we have,

\begin{align*}
 &\mathbb{P}_Z\big\lbrace  \pi^\star(\tilde{Y}_v^{W}) \neq Z_v \wedge  \pi^\star(\tilde{Y}_v^{W}) \neq Y_v\big\rbrace=  \\& = \mathbb{P}_Z\big\lbrace  \pi^\star(\tilde{Y}_v^{W}) \neq Z_v \wedge  \pi^\star(\tilde{Y}_v^{W}) \neq Y_v\wedge Y_v=Z_v\big\rbrace  +  \mathbb{P}_Z\big\lbrace  \pi^\star(\tilde{Y}_v^{W}) \neq Z_v \wedge  \pi^\star(\tilde{Y}_v^{W}) \neq Y_v\wedge Y_v\neq Z_v\big\rbrace = \\ & = q + \frac{k-2}{k-1}q
\end{align*}

so we can calculate $c'$

\begin{align*}
 \mathbb{P}_Z\big\lbrace  \hat{\pi}(\tilde{Y}_v^{W}) \neq Z_v \big\rbrace &- \mathbb{P}_Z\big\lbrace   \pi^*_W(\tilde{Y}_v^{W}) \neq Z_v \big\rbrace = \\& =(1-q) - \big[ q + \frac{k-2}{k-1}q \big] \\ &= 1-\frac{k}{k-1}q
\end{align*}

therefore that $c' = 1-\frac{k}{k-1}q$.

\end{proof}

Fix $\hat{\pi} \in \mathcal{F}(X)$ for each component $W\in \mathcal{W}$ we have 

\begin{align*}
&\sum\limits_{v\in W} \mathbbm{1}\big\lbrace \hat{\pi}_W(\tilde{Y}_v^{W}) \neq Y_v\big\rbrace \leq  \sum\limits_{v\in W} \mathbbm{1}\big\lbrace \hat{\pi}(\tilde{Y}_v^{W}) \neq \pi^\star(\tilde{Y}_v^{W})\big\rbrace + \sum\limits_{v\in W} \mathbbm{1}\big\lbrace \pi^\star(\tilde{Y}_v^{W}) \neq Y_v\big\rbrace \text{\hspace{0.5cm} //Triangle inequality} \\ &
\leq \sum\limits_{v\in W} \mathbbm{1}\big\lbrace \hat{\pi}(\tilde{Y}_v^{W}) \neq \pi^\star(\tilde{Y}_v^{W})\big\rbrace + |W| \mathbbm{1}\big\lbrace \pi^\star(\tilde{Y}_v^{W^*}) \neq Y_v\big\rbrace  \text{\hspace{0.5cm} //Maximize component error} \\ &= -\frac{1}{1-\frac{k}{k-1}q} \sum\limits_{v\in W\wedge \pi^\star(\tilde{Y}_v^{W}) = Y_v}\bigg\lbrace \mathbb{P}_Z\big\lbrace  \hat{\pi}(\tilde{Y}_v^{W}) \neq Z_v \big\rbrace - \mathbb{P}_Z\big\lbrace   \pi^*_W(\tilde{Y}_v^{W}) \neq Z_v \big\rbrace  \bigg\rbrace \\ & + \frac{1}{1-\frac{k}{k-1}q} \sum\limits_{v\in W\wedge \pi^\star(\tilde{Y}_v^{W}) \neq Y_v} \bigg\lbrace \mathbb{P}_Z\big\lbrace  \hat{\pi}(\tilde{Y}_v^{W}) \neq Z_v \big\rbrace - \mathbb{P}_Z\big\lbrace   \pi^*_W(\tilde{Y}_v^{W}) \neq Z_v \big\rbrace  \bigg\rbrace + |W| \mathbbm{1}\big\lbrace \pi^\star(\tilde{Y}_v^{W}) \neq Y_v\big\rbrace  \text{\hspace{0.5cm} //From Lemma \ref{lem:approx}} 
\end{align*}

For the first part, we can the following approximation:

\begin{align*}
-&\frac{1}{1-\frac{k}{k-1}q} \sum\limits_{v\in W\wedge \pi^\star(\tilde{Y}_v^{W}) = Y_v}\bigg\lbrace \mathbb{P}_Z\big\lbrace  \hat{\pi}(\tilde{Y}_v^{W}) \neq Z_v \big\rbrace - \mathbb{P}_Z\big\lbrace   \pi^*_W(\tilde{Y}_v^{W}) \neq Z_v \big\rbrace  \bigg\rbrace \\ & \leq 2\sum\limits_{v\in W} \mathbbm{1}\big\lbrace \pi^\star(\tilde{Y}_v^{W}) \neq Y_v\big\rbrace  +\frac{1}{1-\frac{k}{k-1}q} \sum\limits_{v\in W\wedge \pi^\star(\tilde{Y}_v^{W}) = Y_v}\bigg\lbrace \mathbb{P}_Z\big\lbrace  \hat{\pi}(\tilde{Y}_v^{W}) \neq Z_v \big\rbrace -\mathbb{P}_Z\big\lbrace   \pi^*_W(\tilde{Y}_v^{W}) \neq Z_v \big\rbrace  \bigg\rbrace \\ & \leq 2|W| \mathbbm{1}\big\lbrace \pi^\star(\tilde{Y}_v^{W}) \neq Y_v\big\rbrace + \frac{1}{1-\frac{k}{k-1}q} \sum\limits_{v\in W\wedge \pi^\star(\tilde{Y}_v^{W}) = Y_v}\bigg\lbrace \mathbb{P}_Z\big\lbrace  \hat{\pi}(\tilde{Y}_v^{W}) \neq Z_v \big\rbrace - \mathbb{P}_Z\big\lbrace   \pi^*_W(\tilde{Y}_v^{W}) \neq Z_v \big\rbrace  \bigg\rbrace
\end{align*}

We conclude that:

\begin{align*}
&\sum\limits_{v\in W} \mathbbm{1}\big\lbrace \hat{\pi}_W(\tilde{Y}_v^{W}) \neq Y_v\big\rbrace \leq 3|W| \mathbbm{1}\big\lbrace \pi^\star(\tilde{Y}_v^{W}) \neq Y_v\big\rbrace + \\& \frac{1}{1-\frac{k}{k-1}q} \sum\limits_{v\in W\wedge \pi^\star(\tilde{Y}_v^{W}) \neq Y_v} \bigg\lbrace \mathbb{P}_Z\big\lbrace  \hat{\pi}(\tilde{Y}_v^{W}) \neq Z_v \big\rbrace - \mathbb{P}_Z\big\lbrace   \pi^*_W(\tilde{Y}_v^{W}) \neq Z_v \big\rbrace  \bigg\rbrace +\\& \frac{1}{1-\frac{k}{k-1}q} \sum\limits_{v\in W\wedge \pi^\star(\tilde{Y}_v^{W}) = Y_v}\bigg\lbrace \mathbb{P}_Z\big\lbrace  \hat{\pi}(\tilde{Y}_v^{W}) \neq Z_v \big\rbrace - \mathbb{P}_Z\big\lbrace   \pi^*_W(\tilde{Y}_v^{W}) \neq Z_v \big\rbrace  \bigg\rbrace \\& \leq  3|W| \mathbbm{1}\big\lbrace \pi^\star(\tilde{Y}_v^{W}) \neq Y_v\big\rbrace +\\& \frac{1}{1-\frac{k}{k-1}q} \sum\limits_{v\in W\wedge \pi^\star(\tilde{Y}_v^{W}) \neq Y_v} \bigg\lbrace \mathbb{P}_Z\big\lbrace  \hat{\pi}(\tilde{Y}_v^{W}) \neq Z_v \big\rbrace - \mathbb{P}_Z\big\lbrace   \pi^*_W(\tilde{Y}_v^{W}) \neq Z_v \big\rbrace  \bigg\rbrace +\\& \frac{1}{1-\frac{k}{k-1}q} \sum\limits_{v\in W\wedge \pi^\star(\tilde{Y}_v^{W}) = Y_v}\bigg\lbrace \mathbb{P}_Z\big\lbrace  \hat{\pi}(\tilde{Y}_v^{W}) \neq Z_v \big\rbrace - \mathbb{P}_Z\big\lbrace   \pi^*_W(\tilde{Y}_v^{W}) \neq Z_v \big\rbrace  \bigg\rbrace  \\& \leq  3|W| \mathbbm{1}\big\lbrace \pi^\star(\tilde{Y}_v^{W}) \neq Y_v\big\rbrace + \frac{1}{1-\frac{k}{k-1}q}  \sum\limits_{v\in W}\bigg\lbrace \mathbb{P}_Z\big\lbrace  \hat{\pi}(\tilde{Y}_v^{W}) \neq Z_v \big\rbrace - \mathbb{P}_Z\big\lbrace   \pi^*_W(\tilde{Y}_v^{W}) \neq Z_v \big\rbrace  \bigg\rbrace 
\end{align*}

We apply this formula for all components $W\in \mathcal{W}$ we have

\begin{align*}
&\sum\limits_{W\in \mathcal{W}}\sum\limits_{v\in W} \mathbbm{1}\big\lbrace \hat{\pi}_W(\tilde{Y}_v^{W}) \neq Y_v\big\rbrace  \\ &\leq 3 \bigg(\max\limits_{W\in \mathcal{W}} |W|\bigg) \sum\limits_{W\in \mathcal{W}}\mathbbm{1}\big\lbrace \pi^\star(\tilde{Y}_v^{W}) \neq Y_v\big\rbrace + \frac{1}{1-\frac{k}{k-1}q}  \sum\limits_{W\in \mathcal{W}}\sum\limits_{v\in W}\bigg\lbrace \mathbb{P}_Z\big\lbrace  \hat{\pi}(\tilde{Y}_v^{W}) \neq Z_v \big\rbrace - \mathbb{P}_Z\big\lbrace   \pi^*_W(\tilde{Y}_v^{W}) \neq Z_v \big\rbrace  \bigg\rbrace \\ & \leq  3 \bigg(\max\limits_{W\in \mathcal{W}} |W|\bigg) K_n + \frac{1}{1-\frac{k}{k-1}q}  \sum\limits_{W\in \mathcal{W}}\sum\limits_{v\in W}\bigg\lbrace \mathbb{P}_Z\big\lbrace  \hat{\pi}(\tilde{Y}_v^{W}) \neq Z_v \big\rbrace - \mathbb{P}_Z\big\lbrace   \pi^*_W(\tilde{Y}_v^{W}) \neq Z_v \big\rbrace  \bigg\rbrace
\end{align*}

using Lemma 2 for right hand side of the equation, we have excess risk bound with probability $1-\frac{\delta}{2}$,

\begin{align*}
& \sum\limits_{W\in \mathcal{W}}\sum\limits_{v\in W}\bigg\lbrace \mathbb{P}_Z\big\lbrace  \hat{\pi}(\tilde{Y}_v^{W}) \neq Z_v \big\rbrace - \mathbb{P}_Z\big\lbrace   \pi^*_W(\tilde{Y}_v^{W}) \neq Z_v \big\rbrace  \bigg\rbrace \\ &\leq \bigg( \frac{2}{3}+\frac{c}{2}\bigg)\log(\frac{2|\mathcal{F}(X)|}{\delta})+\frac{1}{c}\sum\limits_{W\in \mathcal{W}}\sum\limits_{v\in W}\mathbbm{1}\big\lbrace  \hat\pi_W(\tilde Y_v^{W})\neq Y_v \big\rbrace
\end{align*}

so we can mix these inequalities, 

\begin{align*}
& \sum\limits_{W\in \mathcal{W}}\sum\limits_{v\in W} \mathbbm{1}\big\lbrace \hat{\pi}_W(\tilde{Y}_v^{W}) \neq Y_v\big\rbrace \\ & \leq  3 \bigg(\max\limits_{W\in \mathcal{W}} |W|\bigg) K_n + \frac{1}{1-\frac{k}{k-1}q}  \bigg( \frac{2}{3}+\frac{c}{2}\bigg)\log(\frac{2|\mathcal{F}(X)|}{\delta})+\frac{1}{c}\sum\limits_{W\in \mathcal{W}}\sum\limits_{v\in W}\mathbbm{1}\big\lbrace  \hat\pi_W(\tilde Y_v^{W})\neq Y_v \big\rbrace
\end{align*}

so we have 

\begin{align*}
&\sum\limits_{W\in \mathcal{W}}\sum\limits_{v\in W} \mathbbm{1}\big\lbrace \hat{\pi}_W(\tilde{Y}_v^{W}) \neq Y_v\big\rbrace \leq \frac{1}{1-\frac{1}{c}} \bigg[\bigg(3\max\limits_{W\in \mathcal{W}} |W|\bigg) K_n + \frac{1}{1-\frac{k}{k-1}q}  \bigg( \frac{2}{3}+\frac{c}{2}\bigg)\log(\frac{2|\mathcal{F}(X)|}{\delta})\bigg]
\end{align*}

We put $c=\frac{1}{1-\epsilon}$ and rearrange then with probability $1-\delta$ we have
\begin{align*}
&\sum\limits_{W\in \mathcal{W}}\sum\limits_{v\in W} \mathbbm{1}\big\lbrace \hat{\pi}_W(\tilde{Y}_v^{W}) \neq Y_v\big\rbrace \\& \leq \frac{1}{1-\frac{1}{\frac{1}{1-\epsilon}}} \bigg[\bigg(3\max\limits_{W\in \mathcal{W}} |W|\bigg) K_n + \frac{1}{1-\frac{k}{k-1}q}  \bigg( \frac{2}{3}+\frac{\frac{1}{1-\epsilon}}{2}\bigg)\log(\frac{2|\mathcal{F}(X)|}{\delta})\bigg] \\ & = \frac{1}{\epsilon} \bigg[\bigg(3\max\limits_{W\in \mathcal{W}} |W|\bigg) K_n + \frac{1}{1-\frac{k}{k-1}q}  \bigg( \frac{2}{3}+\frac{1}{2(1-\epsilon)}\bigg)\log(\frac{2|\mathcal{F}(X)|}{\delta})\bigg]
\end{align*}

From before, we have $|\mathcal{F}(X)| \leq  \big( \frac{en.k!}{L_n}\big)^{L_n}$, $wid(T) = \max\limits_{W\in \mathcal{W}} |W| $, $K_n$, and Lemma 2 so we can conclude 

\begin{align*}
&\sum\limits_{W\in \mathcal{W}}\sum\limits_{v\in W} \mathbbm{1}\big\lbrace \hat{\pi}_W(\tilde{Y}_v^{W}) \neq Y_v\big\rbrace = \\ & =\frac{1}{\epsilon} \bigg(3\max\limits_{W\in \mathcal{W}} |W|\bigg) K_n + \frac{1}{\epsilon.\big( 1-\frac{k}{k-1}q\big)}  \bigg( \frac{2}{3}+\frac{1}{2(1-\epsilon)}\bigg)\log(\frac{2|\mathcal{F}(X)|}{\delta})\\ & = \frac{3}{\epsilon}.wid(T). K_n + \frac{1}{\epsilon.\big( 1-\frac{k}{k-1}q\big)}  \bigg( \frac{2}{3}+\frac{1}{2(1-\epsilon)}\bigg)\times\big(\log(\frac{2}{\delta})+L_n . \log( \frac{en.k!}{L_n}\big))\big)  \\ & \leq  \frac{3}{\epsilon}.wid(T). K_n + \frac{1}{\epsilon.\big( 1-\frac{k}{k-1}q\big)}  \bigg( \frac{2}{3}+\frac{1}{2(1-\epsilon)}\bigg)\times\big[\log(\frac{2}{\delta})+K_n.deg(T) . k.\log(n.k) \big]  \\& = 
 \frac{1}{\epsilon}. K_n\times \big[ 3.wid(T) +deg(T) . k.\log(n.k).\frac{1}{1-\frac{k}{k-1}q} .(\frac{2}{3}+\frac{1}{2(1-\epsilon)}) \big] + \frac{1}{\epsilon.\big( 1-\frac{k}{k-1}q\big)}  \bigg( \frac{2}{3}+\frac{1}{2(1-\epsilon)}\bigg)\log(\frac{2}{\delta})  \\& \leq \frac{1}{\epsilon}. K_n\times \big[ 3.wid(T) +deg(T) . k.\log(n.k).\frac{1}{1-\frac{k}{k-1}q} .(\frac{2}{3}+\frac{1}{2(1-\epsilon)}) \big] + \frac{1}{\epsilon.\big( 1-\frac{k}{k-1}q\big)}  \bigg( \frac{2}{3}+\frac{1}{2(1-\epsilon)}\bigg)\log(\frac{2}{\delta}) \\ &\leq  \frac{1}{\epsilon}.\bigg[ 2^{wid^*(W)+2}\sum\limits_{W\in \mathcal{W}}p^{\lceil\frac{mincut^*(W)}{2}\rceil}+6deg_E^*(T)\max\limits_{W\in \mathcal{W}}|E(W^*)|\log(\frac{2}{\delta}) \bigg] \\&\times \big[ 3.wid(T) + deg(T) . k.\log(n.k).\frac{1}{1-\frac{k}{k-1}q} .(\frac{2}{3}+\frac{1}{2(1-\epsilon)}) \big] + \frac{1}{\epsilon.\big( 1-\frac{k}{k-1}q\big)}  \bigg( \frac{2}{3}+\frac{1}{2(1-\epsilon)}\bigg)\log(\frac{2}{\delta}) 
\end{align*}

so we have 
\begin{align*}
&\sum\limits_{W\in \mathcal{W}}\sum\limits_{v\in W} \mathbbm{1}\big\lbrace \hat{\pi}_W(\tilde{Y}_v^{W}) \neq Y_v\big\rbrace \\ &\leq O\bigg(
 \frac{1}{\epsilon^2}.\bigg[ 2^{wid^*(W)+2}\sum\limits_{W\in \mathcal{W}}p^{\lceil\frac{mincut^*(W)}{2}\rceil}+6deg_E^*(T)\max\limits_{W\in \mathcal{W}}|E(W^*)|\log(\frac{2}{\delta}) \bigg]\times \big[ 3.wid(T) + deg(T) . k.\log(n.k)\big] \bigg)\\&\text{because mincut  $\geq$ maximum degree} \\& \leq\tilde{O}\big( k.\log k.p^{\lceil\frac{\Delta}{2}\rceil}.n\big)
\end{align*}
As $\hat{\pi}_W(\tilde{Y}_v)= \hat{Y}_v$, so the algorithm ensures Hamming error has driven upper bound.
\end{proof}
{\color{black} 
\section{MIXTURE OF EDGES AND NODES INFORMATION}

In all previous works \citep{foster,meshi,globerson}, the algorithms consider the information of edge and node labels in different stages. For instance in \citep{globerson}, first solves the problem based on the edge because $p<q$, then it uses the nodes information. The information value of positive and negative edges in binary cases are same, but this courtesy breaks under categorical labels, on the other hand, we can use some properties in the graph to trust more on some information. We can calculate the probability of correctness of graph nodes and edges label using $p$ and $q$. In categorical labeling,  the space of noise has some variations from the binary case, so we have the following facts in the categorical case:
\begin{itemize}
\item Flipping an edge makes an error.
\item Switching the label of a node might not make an error.
\end{itemize}

Using Bayes rule and the property of nodes, we have $Pr(v=i|v^\prime=j) =Pr(v^\prime=j|v=i))$, the prim for a vertex shows the vertex after effecting noise. 

We have following theorem the proof come in supplementary material, 

\begin{theorem}
\label{trust}
The likelihood of correctness of an edge $e=(v_i,v_j)\in E$ with label with $L$ are as follow,  
    \begin{align*}
Pr&(L \text{ is untouched }| e, L) =  \\ &   c_L\times\begin{cases}
       2(1-q)q+(\frac{q}{k-1})^2 . \frac{k-2}{k.(k-1)} &L=1,\textit{vio}\\
        (1-q)^2 + (\frac{q}{k-1})^2 . \frac{1}{k.(k-1)} &L=1, \textit{nvio}\\
       2(1-q).\frac{q}{k-1} + (\frac{q}{k-1})^2 . \frac{k-2}{k(k-1)} &L=-1,\textit{vio} \\
          (1-q)^2 + (\frac{q}{k-1})^2 . \frac{k-2}{k} &L=-1, \textit{nvio}\\
     \end{cases}
    \end{align*}
which $c_L= \frac{(1-p)|E|}{\# L \text{ in graph}}$, \textit{vio} means $\phi(X_i,X_j)\neq X_{ij}$, and \textit{nvio} means $\phi(X_i,X_j)= X_{ij}$.
\end{theorem}

\begin{proof}
In all cases, two head nodes of a given edge are $v_i$ and $v_j$, and $L$ shows the label of the edge. We first calculate the probability $Pr(v_i,v_j,L|L \text{ is untouched})$ the using Bayes theorem, we derive the likelihood.
\begin{itemize}
\item The first case is $e$ generates a violation $\phi(Z_i, Z_j)\neq X_{ij}$, and the edge label $L=1$, in this case, the probability of the event is only one of the node labels are changed or both node labels have been changed but to the different labels.
\begin{align*}
& Pr(\text{only one of the node labels are changed}) = \\& 2 Pr(v_i \text{ is changed}) = \\ &2(1-q).\sum\limits_{v_i.label=j \wedge  j\neq X_i} Pr(v_i.label=j | v_i.label=i) \\ &= 2(1-q).\sum\limits_{v_i.label=j \wedge  j\neq X_i} \frac{q}{k-1} = 2.(1-q)q
\end{align*}
and also we have, ($v'_i$ and $ v'_j$ are the label of given nodes after noise effect)
\begin{align*}
 &Pr(v'_i \neq v'_j\wedge v_i =v_j\wedge v'_j\neq v_j\wedge v'_i\neq v_i) \\ &= Pr(v_i\neq v'_i).Pr(v_j\neq v'_j).Pr(v_i =v_j)\times Pr(v'_i \neq v'_j|v_i =v_j\wedge v'_j\neq v_j\wedge v'_i\neq v_i)\\  &=  \frac{q}{k-1} . \frac{q}{k-1} .\frac{1}{k}. \frac{(k-1)(k-2)}{(k-1).(k-1)}\\& = (\frac{q}{k-1})^2 . \frac{k-2}{k.(k-1)}
\end{align*}
Because $Pr(v_i\neq v'_i)$, $Pr(v_j\neq v'_j)$, and $Pr(v_i =v_j)$ are independent, so the whole probability would be $2.(1-q)q+(\frac{q}{k-1})^2 . \frac{k-2}{k.(k-1)}$ .

\item The second case is $e$ does not generate any violation, $\phi(Z_i, Z_j)= X_{ij}$, and the edge label $L=1$, in this case, either both node labels are untouched or they changed but to the same label.
\begin{align*}
 &Pr(\text{both node labels are untouched}) =\\& Pr(v_i = v'_i).Pr(v_j = v'_j)=(1-q)(1-q)=(1-q)^2
\end{align*}
and also we have,
\begin{align*}
& Pr(v'_i = v'_j\wedge v_i \neq v'_i \wedge v_j\neq v'_j\wedge v_i=v_j ) \\ &= Pr(v_i\neq v'_i).Pr(v_j\neq v'_j).Pr(v_i=v_j)\times Pr(v'_i = v'_j|v_i \neq v'_i \wedge v_j\neq v'_j\wedge v_i=v_j)\\  &=  \frac{q}{k-1} . \frac{q}{k-1}.\frac{1}{k} . \frac{(k-1)(1)}{(k-1).(k-1)}  \\&= (\frac{q}{k-1})^2 . \frac{1}{k.(k-1)}
\end{align*}
so the whole probability would be $(1-q)^2 + (\frac{q}{k-1})^2 . \frac{1}{k.(k-1)}$ .

\item The third case is $e$ generates a violation $\phi(Z_i, Z_j)\neq X_{ij}$, and the edge label $L=-1$, in this case, the probability of the event is either one label change to the same label of other head or both change to the same label
\begin{align*}
&Pr(\text{a label change to the same of other head}) \\&= 2 Pr(v_i \text{ is changed to } X_j) = 2(1-q).\frac{q}{k-1}
\end{align*}
and also we have,
\begin{align*}
& Pr(v'_i = v'_j\wedge v_i \neq v'_i \wedge v_j\neq v'_j\wedge v_i\neq v_j) \\& = Pr(v_i\neq v'_i).Pr(v_j\neq v'_j).Pr( v_i\neq v_j) \times Pr(v'_i = v'_j |v_i \neq v'_i \wedge v_j\neq v'_j\wedge v_i\neq v_j) \\  &=  \frac{q}{k-1} . \frac{q}{k-1} . \frac{k-1}{k} . \frac{(k-2)(1)}{(k-1).(k-1)} \\&=(\frac{q}{k-1})^2 . \frac{k-2}{k(k-1)}
\end{align*}
so the whole probability would be $2(1-q).\frac{q}{k-1} + (\frac{q}{k-1})^2 . \frac{k-2}{k(k-1)}$ .

\item The fourth case is $e$ does not generate any violation, $\phi(Z_i, Z_j)= X_{ij}$, and the edge label $L=-1$, in this case, either both node labels are untouched or they changed but to different labels.
\begin{align*}
 &Pr(\text{both node labels are untouched}) \\&= Pr(v_i = v'_i).Pr(v_j = v'_j)\\&=(1-q)(1-q)=(1-q)^2
\end{align*}
and also we have,
\begin{align*}
 &Pr(v'_i \neq v'_j\wedge v_i \neq v'_i \wedge v_j\neq v'_j\wedge v_i \neq v_j ) \\ &= Pr(v_i\neq v'_i).Pr(v_j\neq v'_j).Pr(v_i\neq v_j)\times Pr(v'_i \neq v'_j|v_i\neq v'_i \wedge v_j\neq v'_j\wedge v_i \neq v_j )\\  &=  \frac{q}{k-1} . \frac{q}{k-1}.  \frac{k-1}{k} . \frac{(k-1)(k-2)}{(k-1).(k-1)} \\&= (\frac{q}{k-1})^2 . \frac{k-2}{k}
\end{align*}
so the whole probability would be $(1-q)^2 + (\frac{q}{k-1})^2 . \frac{k-2}{k}$ .
\end{itemize}
Based on the Bayes theorem we have,
\begin{align*}
&Pr(L \text{ is untouched}| v_i,v_j,L) =  \frac{Pr(v_i,v_j,L|L \text{ is untouched}).Pr(L \text{ is untouched})}{Pr(v_i,v_j,L)}
\end{align*}
We have $Pr(v_i,v_j,L)=\frac{\# L \text{ in graph}}{|E|}$, and $Pr(L \text{ is untouched})=1-p$, so we can derive the result.
\end{proof}

As it can be seen with $k=2$, the trust score for positive and negative are only depend to their frequencies, and if their frequencies are equal we can trust them equally. 

\begin{example} (Uniform Frequencies)
Let $\# \lbrace L=+1\rbrace \simeq \#\lbrace L=-1\rbrace$ and $k\geq 3$, then the second part of is negligible because of $(\frac{q}{k-1})^2$ parameter, then if $2(1-q)q\leq (1-q)^2$ and $2(1-q).\frac{q}{k-1}\leq (1-q)^2$ which is $q<\min\{ \frac{1}{3}, \frac{k-1}{k+1}\}=\frac{1}{3}$ then the non-violating edges are more reliable.
\end{example}

The following example is more related to the grid graphs that considered in \citep{globerson}.

\begin{example} (Image Segmentation)
The case $k\geq 3$ and $\# \lbrace L=+1\rbrace \geq \#\lbrace L=-1\rbrace$, which we usually see in the images, because the negative edges are on the boundary of regions. If $q<1/3$, We have can trust more on the non-violating negative edges than non-violating positive edges. 
\end{example}

To the best of our knowledge, no algorithm considers the mixture of edges and nodes information on the categorical data. Therefore, Theorem \ref{trust} can be a guide to design such an algorithm.
}

\section{EXPERIMENT RESULTS}
\subsection{Details on Experimental Setup}
We provide a detailed discussion on our experimental setup.

\paragraph{Trees Generation Process:} We generate random trees, and we apply the noise to the generated graph. We need to have at least one example of each $k$ labels, so the generation process starts by creating $k$ nodes, one example for each category. Then, it generates $k$ random numbers $n_1,\dots,n_k$ such that $ \sum_{i=1}^k n_i = n-k$. Next, it creates tree edges for the set of nodes $V$. Let $S$ and $E$ be empty sets. We select two nodes $v$ and $u$ randomly from $V$ and add $(u,v)$ to $E$ such that the label of the edge satisfies the label of $u$ and $v$ and set $S=S\cup\{u,v\}$, and $V=V\backslash\{u,v\}$. Now, we select one node $v\in S$ and one node $u\in V$ randomly and add $(u,v)$ to $E$ such that the edge label satisfies the endpoints and remove $u$ from $V$ and add it to $S$. We repeat until $V$ is empty. This process follows the Brooks theorem \citep{brooks_1941}. Finally, we apply uniform noise model with probabilities of $p$ and $q$. We select this simple generative process because it covers an extensive range of random trees.

\paragraph{Grids Graph Generation:} We use gray scale images as the source of grid graphs. The range of pixel values in gray scale images is $r=[0,255]$, so we have that $0\leq k\leq 255$. We divide $r$ to $k$ equal ranges $\{r_1,r_2,\dots,r_k \}$. We map all pixels whose values are in $r_i$ to $median(r_i)$. For edges, we only consider horizontal and vertical pixels and assign the ground truth edge labels based on the end points. We generate noisy node and edge observations using the uniform noise model. We use \citet{griffin2007caltech} dataset to select gray-scale images. 

\paragraph{Baseline Method:} A Majority Vote Algorithm: For each node $v\in G$ assign $f_v=[s_1,s_2,\dots,s_k]$ with $s_i=0: \forall i\in [k]$. Let $label(.)$ shows the label of the passed node. Then, for nodes in neighbourhood of $v$, $u\in N(v)$, we update $f_v$ with $s_{label(u)}=s_{label(u)}+X_{uv}$. At the end, for each node $v$, $\hat Y_v=\arg\max_{i\in [k]}(f_v)$ if $|\max(f_v)|=1$ otherwise if $Z_v\in\arg \max(f_v)$, then $\hat Y_v=Z_v$ else $\hat Y_v = random(\arg\max(f_v))$. This is a simple baseline. We use it as we want to validate that our methods considerably outperform simple baselines.

\paragraph{Evaluation Metric:} We use the normalized Hamming distance $ \sum_{v\in V} \mathbbm{1}(Y_v\neq \hat Y_v)/|V|$. between an estimated labeling $\hat Y$ and the ground truth labeling $Y$.

\subsection{Additional Experiments on Grids}

We provide some qualitative results on the performance of our methods.

Figure \ref{WithErrorAndMajority} presents a qualitative view of the results obtained by our method (and the majority vote baseline) as $k$ increases on the grey scale images. We see that using only the edge information (edge-based prediction) becomes more chaotic for larger values of $k$. This is because the information that edges carry decreases. However, we see that combining the information provided by both node and edge observations allows us to recover the noisy image. As expected, the simple Majority vote baseline yields worse results than our method.

\begin{figure*}[!h]
\vskip 0.2in
\begin{center}
\centerline{\includegraphics[width=\textwidth,trim=4 4 4 6,clip]{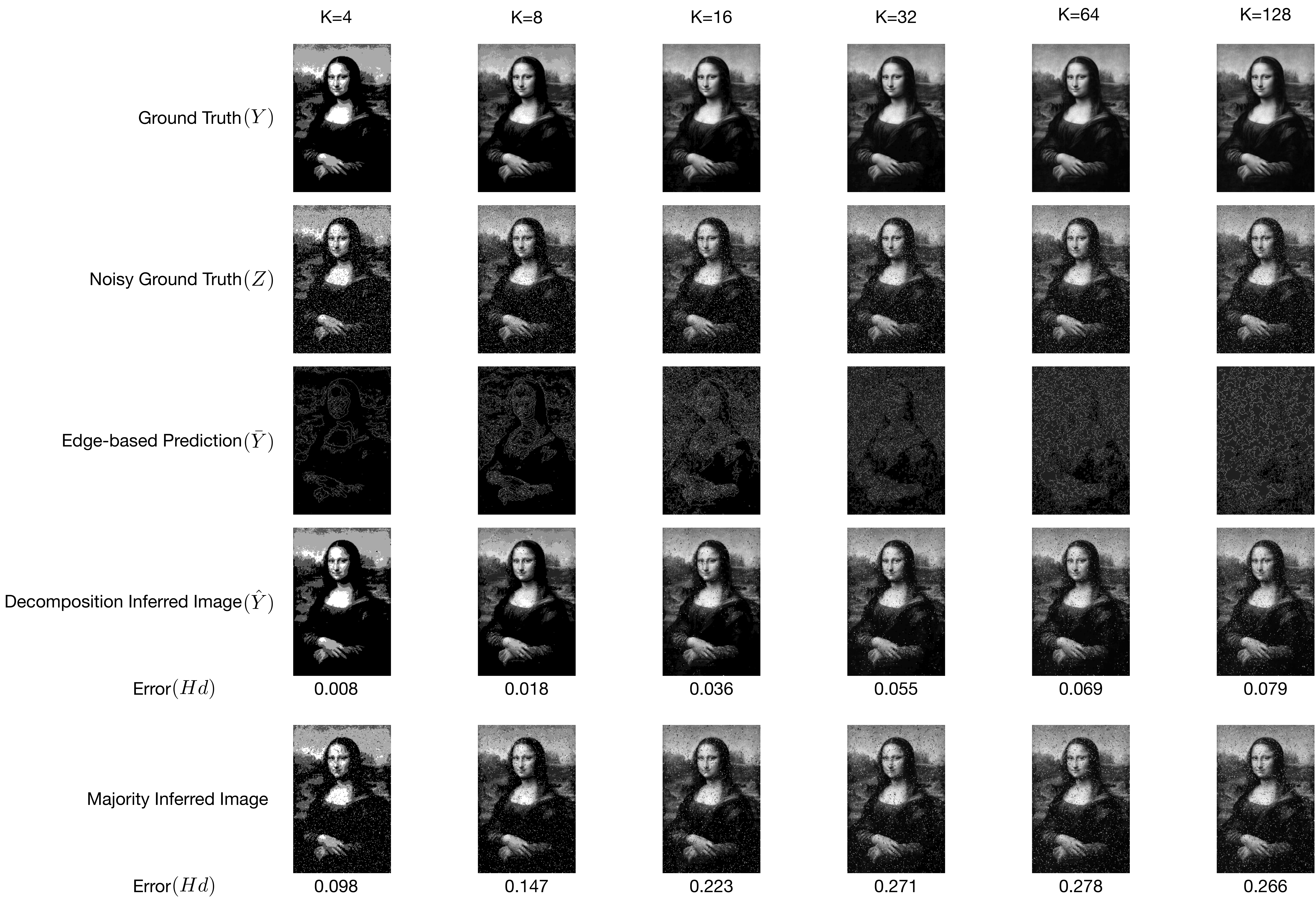}}
\caption{At each column, different stages of the inference process on the image that generates median error can be seen. It starts with generating $k$ value image, adding noise following the model, generates best edge based prediction, and minimize it with noisy ground truth; we also report its corresponding error, you can also see the result and its error from majority algorithm.}
\label{WithErrorAndMajority}
\end{center}
\vskip -0.2in
\end{figure*}

\newpage
\section{REFERENCES}

\bibliography{long}
\bibliographystyle{icml2019}

\end{document}